\newcommand{\x}{{\boldsymbol{x}}}
\newcommand{\y}{{\boldsymbol{y}}}
\newcommand{\xs}{{\x_i}}
\newcommand{\ys}{{\y_i}}
\newcommand{\X}{{X}} %
\newcommand{\Y}{{Y}} %
\newcommand{\es}{{E}} %
\newcommand{\samples}{{n}} 
\newcommand{\dimx}{{p}} 
\newcommand{\lr}{{\lambda}} 
\newcommand{\er}{{\cE}} 
\newcommand{\smoothing}{s}
\newcommand{\err}{\varepsilon}
\newcommand{\inner}[2]{\left\langle #1, #2 \right\rangle}
\newcommand{\norm}[1]{\left\lVert#1\right\rVert}
\newcommand{\tr}{{\operatorname{tr}}}
\newcommand{\diag}{{\operatorname{Diag}}}
\declaretheorem[name=Lemma]{lemma}
\declaretheorem[name=Assumption]{assumption}
\declaretheorem[name=Definition]{definition}
\declaretheorem[name=Proposition]{proposition}
\newenvironment{proof}{\vspace{-0.05in}\noindent{\bf Proof:}}%
        {\hspace*{\fill}$\Box$\par}
\newenvironment{proofsketch}{\noindent{\bf Proof Sketch.}}%
        {\hspace*{\fill}$\Box$\par\vspace{4mm}}
        {\hspace*{\fill}$\Box$\par}
\newcommand{\cA}{\mathcal{A}}
\newcommand{\cB}{\mathcal{B}}
\newcommand{\cD}{\mathcal{D}}
\newcommand{\cE}{\mathcal{E}}
\newcommand{\cL}{\mathcal{L}}
\newcommand{\cN}{\mathcal{N}}
\newcommand{\cO}{\mathcal{O}}
\newcommand{\cP}{\mathcal{P}}
\newcommand{\cS}{\mathcal{S}}
\newcommand{\cX}{\mathcal{X}}
\newcommand{\cY}{\mathcal{Y}}
\newcommand{\bE}{\mathbb{E}}
\newcommand{\bP}{\mathbb{P}}
\newcommand{\bR}{\mathbb{R}}
\def\eqref#1{equation~\ref{#1}}
\def\1{\bm{1}}
\DeclareMathAlphabet{\mathsfit}{\encodingdefault}{\sfdefault}{m}{sl}
\SetMathAlphabet{\mathsfit}{bold}{\encodingdefault}{\sfdefault}{bx}{n}
\title{Towards Understanding Generalization via \\ Decomposing Excess Risk Dynamics}
\author{Jiaye Teng$^{1,*}$, Jianhao Ma$^{2,}$\thanks{Equal contribution.}~, Yang Yuan$^{1,3}$\\ 
$^1$Institute for Interdisciplinary Information Sciences, Tsinghua University\\ 
$^2$Department of Industrial and Operational Engineering, University of Michigan, Ann Arbor\\
$^3$Shanghai Qi Zhi Institute\\
\texttt{tjy20@mails.tsinghua.edu},\;
\texttt{jianhao@umich.edu}\\ \texttt{yuanyang@tsinghua.edu}
}
\begin{document}

\maketitle

\newcommand{\DBC}{DDC}
\newcommand{\TriCondition}{Dynamics Decomposition Condition}

\begin{abstract}
Generalization is one of the fundamental issues in machine learning.
However, traditional techniques like uniform convergence may be unable to explain generalization under overparameterization \citep{nagarajan2019uniform}. 
As alternative approaches, techniques based on \emph{stability} analyze the training dynamics and derive algorithm-dependent generalization bounds.
Unfortunately, the stability-based bounds are still far from explaining the surprising generalization in deep learning since neural networks usually suffer from unsatisfactory stability.
This paper proposes a novel decomposition framework to improve the stability-based bounds via a more fine-grained analysis of the signal and noise, inspired by the observation that neural networks converge relatively slowly when fitting noise (which indicates better stability).
Concretely, we decompose the excess risk dynamics and apply the stability-based bound only on the noise component. 
The decomposition framework performs well in both linear regimes (overparameterized linear regression) and non-linear regimes (diagonal matrix recovery).
Experiments on neural networks verify the utility of the decomposition framework.
\end{abstract}

\section{Introduction}
\label{sec:intro}

Generalization is one of the essential mysteries uncovered in modern machine learning \citep{neyshabur2014search,zhang2016understanding,kawaguchi2017generalization}, measuring how the trained model performs on unseen data. 
One of the most popular approaches to generalization is \emph{uniform convergence} \citep{mohri2018foundations}, which takes the supremum over parameter space to decouple the dependency between the training set and the trained model.
However, \citet{nagarajan2019uniform} pointed out that uniform convergence itself might not be powerful enough to explain generalization, because the uniform bound can still be vacuous under overparameterized linear regimes.

One alternative solution beyond uniform convergence is to analyze the \emph{generalization dynamics}, which measures the generalization gap during the training dynamics.
\emph{Stability}-based bound is among the most popular techniques in generalization dynamics analysis \citep{lei2020fine}, which is derived from algorithmic stability \citep{bousquet2002stability}.
Fortunately, one can derive non-vacuous bounds under general convex regimes using stability frameworks \citep{hardt2016train}.
However, stability is still far from explaining the remarkable generalization abilities of neural networks, mainly due to two obstructions.
Firstly, stability-based bound depends heavily on the gradient norm in non-convex regimes \citep{li2019generalization}, which is typically large at the beginning phase in training neural networks.
Secondly, stability-based bound usually does not work well under general non-convex regimes
\citep{hardt2016train,charles2018stability,zhou2018generalization} but neural networks are usually highly non-convex.

The aforementioned two obstructions mainly stem from the coarse-grained analysis of the signal and noise.
As \citet{zhang2016understanding} argued, neural networks converge fast when fitting signal but converge relatively slowly when fitting noise\footnote{In this paper, we refer the signal to the clean data without the output noise, and the noise to the output noise. See Section~\ref{sec:Preliminary} for the formal definitions.}, indicating that the training dynamics over signal and noise are significantly different.
Consequently, on the one hand, the fast convergence of signal-related training contributes to a large gradient norm at the beginning phase (see Figure~\ref{fig::plot1}), resulting in poor stability.
On the other hand, the training on signal forces the trained parameter away from the initialization, making the whole training path highly non-convex~(see Figure~\ref{fig::plot2}).
The above two phenomena inspire us to decompose the training dynamics into \emph{noise} and \emph{signal} components and \emph{only} apply the stability-based analysis over the noise component.
{To demonstrate that such decomposition generally holds in practice, we conduct several experiments of neural networks on both synthetic and real-world datasets (see Figure~\ref{fig::nn} for more details).}

Based on the above discussion, we improve the stability-based analysis by proposing a decomposition framework on \emph{excess risk dynamics}\footnote{We decompose the excess risk, which is closely related to generalization, purely due to technical reasons. The excess risk dynamics tracks the excess risk during the training process.}, where we handle the noise and signal components separately via a bias-variance decomposition.
In detail, we decompose the excess risk into variance excess risk~(VER) and bias excess risk~(BER), where VER measures how the model fits noise and BER measures how the model fits signal.
Under the decomposition, we apply the stability-based techniques to VER and apply uniform convergence to BER inspired by \citet{negrea2020defense}.
The decomposition framework accords with the theoretical and experimental evidence surprisingly well, providing that it outperforms stability-based bounds in both linear (overparameterized linear regression) and non-linear (diagonal matrix recovery) regimes.

\begin{figure*}
\begin{center}
\subfloat[]{
{\includegraphics[width=0.49\linewidth]{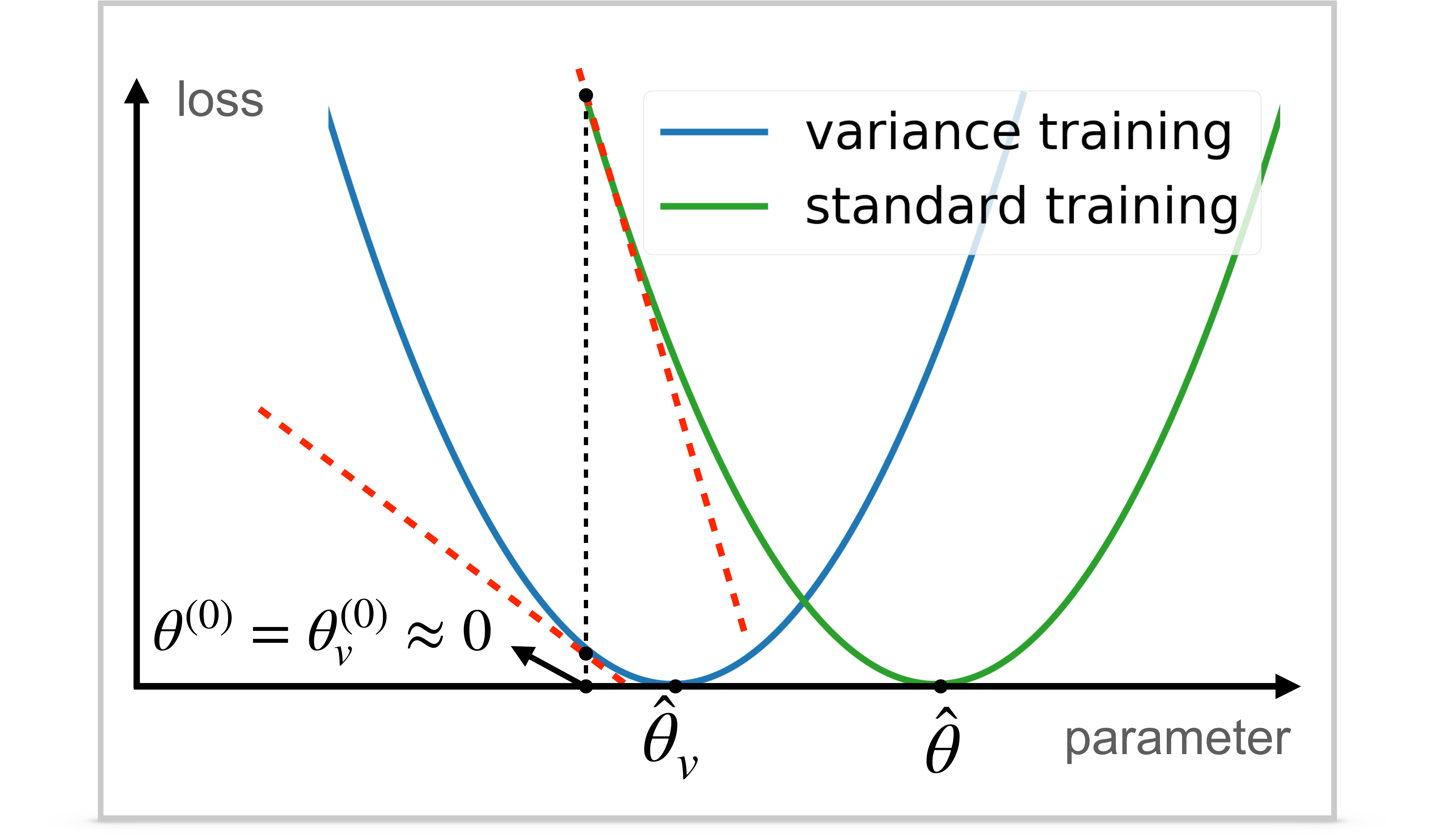}}\label{fig::plot1}}
\subfloat[]{
{\includegraphics[width=0.49\linewidth]{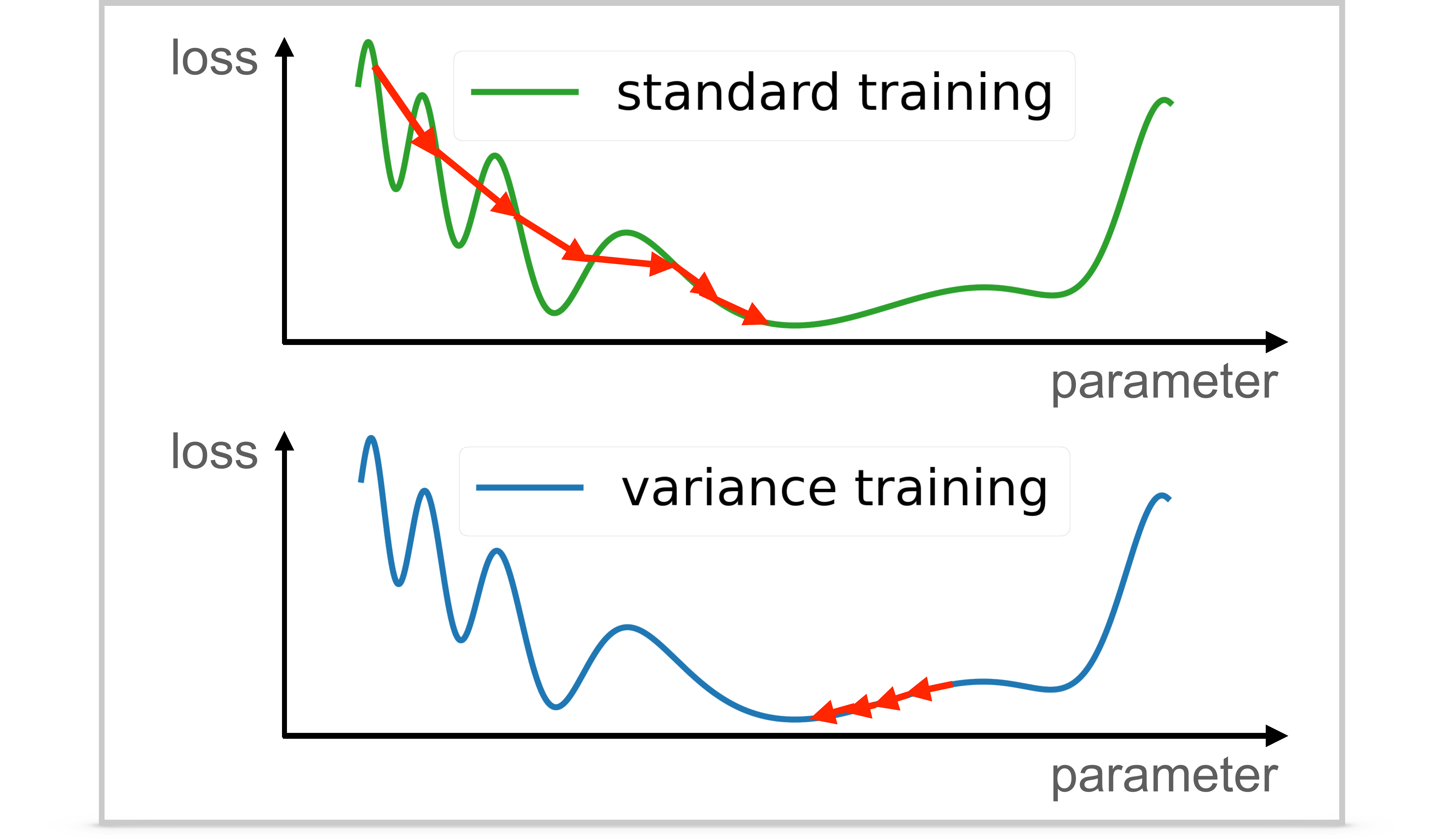}}\label{fig::plot2}}
\end{center}
\caption{ 
(a) The gradient norm of $\mathsf{standard\ training}$ (training over noisy data) is larger than that of $\mathsf{variance\ training}$ (training over pure noise) at initialization $\boldsymbol\theta^{(0)}=\boldsymbol\theta_v^{(0)}\approx \boldsymbol 0$.
We denote the minimizers of the training loss by $\hat{\boldsymbol\theta}$, $\hat{\boldsymbol\theta}_v$, respectively (where the $y$-axis is the training loss). 
(b) The whole training loss landscape is highly non-convex while the trajectory of the $\mathsf{variance\ training}$ lies in a convex region due to the {slow convergence}.
We defer the details to Appendix~\ref{sec:figillustration}.}
\end{figure*}

We summarize our contributions as follows:
\begin{itemize}
    \item We propose a new framework aiming at improving the traditional stability-based bounds, which is a novel approach to generalization dynamics analysis focusing on decomposing the excess risk dynamics into variance component and bias component.
    Starting from the overparameterized linear regression regimes, we show how to deploy the decomposition framework in practice, and the proposed framework outperforms the stability-based bounds.
    \item We theoretically analyze the excess risk decomposition  beyond linear regimes. As a case study, we derive a generalization bound under diagonal matrix recovery regimes. To our best knowledge, this is the first work to analyze the generalization performance of diagonal matrix recovery. 
    \item We conduct several experiments on both synthetic datasets and real-world datasets (MINIST, CIFAR-10) to validate the utility of the decomposition framework, indicating that the framework provides interesting insights for the generalization community.
\end{itemize}

\begin{figure*}
\begin{center}
\subfloat[]{
{\includegraphics[width=0.49\linewidth]{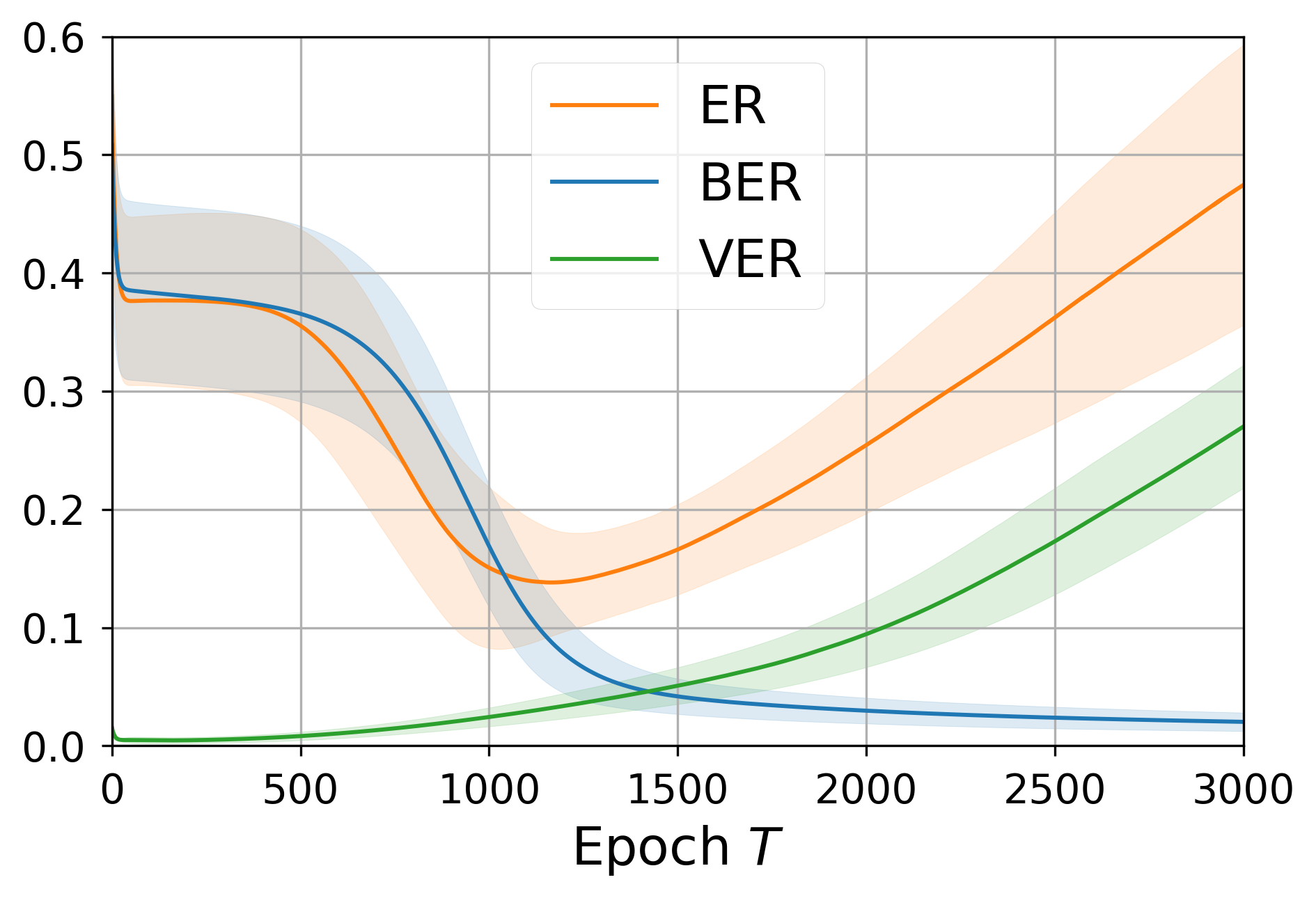}}\label{fig::regression}}
\subfloat[]{
{\includegraphics[width=0.49\linewidth]{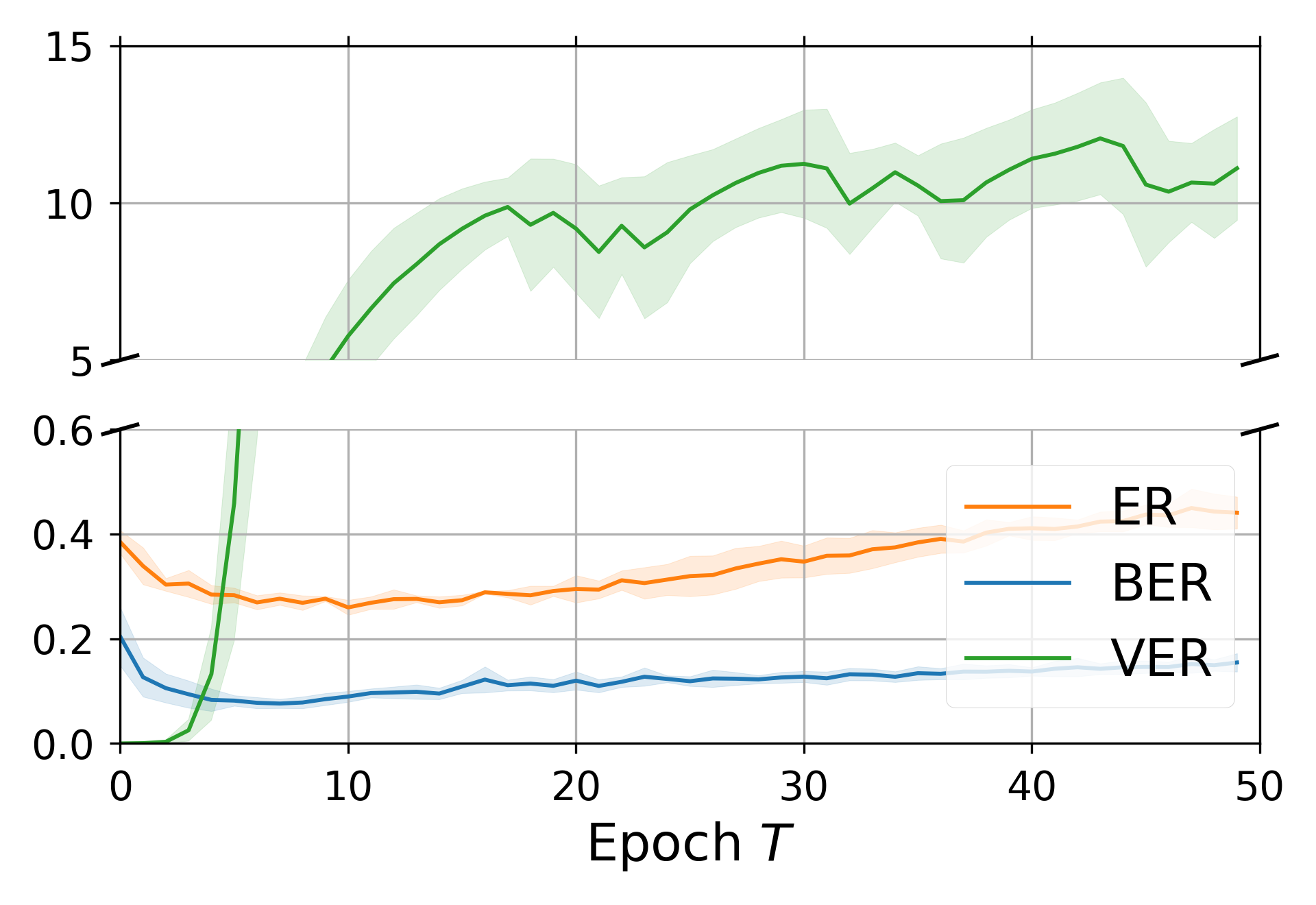}}\label{fig::minist}}
\end{center}
\caption{Experiments of neural networks on synthetic (linear ground truth, 3-layer NN, SGD) and real-world (MNIST, 3-layer CNN, Adam) datasets. 
The trend of excess risk dynamics (blue) meets its bias component (BER) at the beginning phase and meets its variance component (VER) afterwards, indicating that ER can indeed be decomposed into VER and BER.
See Appendix~\ref{sec:numericalexperiment} for more details.
}
\label{fig::nn}
\end{figure*}
\subsection{Related Work}
\textbf{Stability-based Generalization.}
The stability-based researches can be roughly split into two branches.
One branch is about how algorithmic stability leads to generalization \citep{feldman2018generalization,feldman2019high,bousquet2020sharper}.
Another branch focus on how to calculate the stability parameter for specific problems, e.g., \citet{hardt2016train} prove a generalization bound scales linearly with time in convex regimes.
Furthermore, researchers try to apply stability techniques into more general settings, e.g., non-smooth loss \citep{lei2020fine,bassily2020stability}, noisy gradient descent in non-convex regimes \citep{mou2018generalization,li2019generalization} and stochastic gradient descent in non-convex regimes \citep{zhou2018generalization,charles2018stability,zhang2021stability}. 
In this paper, we mainly focus on applying the decomposition framework to improve the stability-based bound.

\textbf{Uniform Convergence} is widely used in generalization analysis.
For bounded losses, the generalization gap is tightly bounded by its Rademacher Complexity \citep{koltchinskii2000rademacher,koltchinskii2001rademacher,koltchinskii2006local}.
Furthermore, we reach faster rates under realizable assumption \citep{srebro2010optimistic}.
A line of work focuses on uniform convergence under neural network regimes, which is usually related to the parameter norm \citep{bartlett2017spectrally,wei2019improved}.
However, as \citet{nagarajan2019uniform} pointed out, uniform convergence may be unable to explain generalization.
Therefore, more techniques are explored to go beyond uniform convergence.

\textbf{Other Approaches to Generalization.}
There are some other approaches to generalization, including PAC-Bayes \citep{neyshabur2017exploring,dziugaite2017computing,neyshabur2017pac,dziugaite2018entropy,zhou2018non,yang2019fast}, information-based bound \citep{russo2016controlling,xu2017information,banerjee2021information,haghifam2020sharpened,steinke2020reasoning}, and compression-based bound \citep{arora2018stronger,allen2018learning,arora2019fine}. 

\textbf{Bias-Variance Decomposition}.
Bias-variance decomposition plays an important role in statistical analysis \citep{lehmann2006theory,casella2021statistical,geman1992neural}.
Generally, high bias indicates that the model has poor predicting ability on average and high variance indicates that the model performs unstably.
Bias-variance decomposition is widely used in machine learning analysis, e.g., adversarial training \citep{yu2021understanding}, double descent \citep{adlam2020understanding}, uncertainty \citep{hu2020simple}.
Additionally, \citep{nakkiran2020deep} propose a decomposition framework from an online perspective relating the different optimization speeds on empirical and population loss.
\citet{oymak2019generalization} applied bias-variance decomposition on the Jacobian of neural networks to explain their different performances on clean and noisy data.
This paper considers a slightly different bias-variance decomposition following the analysis of SGD \citep{dieuleveut2016nonparametric,jain2018parallelizing,zou2021benign}, focusing on the decomposition of the noisy output.

\textbf{Matrix Recovery.} Earlier methods for solving matrix recovery problems rely on the convex relaxation techniques for minimum norm solutions \citep{recht2010guaranteed, chandrasekaran2011rank}. Recently, a branch of works focuses on the matrix factorization techniques with simple local search methods. It has been shown that there is no spurious local minima in the exact regime \citep{ge2016matrix, ge2017no, zhang2019sharp}. In the overparameterized regimes, it was first conjectured by \citet{gunasekar2018implicit} and then answered by \citet{li2018algorithmic} that gradient descent methods converge to the low-rank solution efficiently. Later, \citet{zhuo2021computational} extends the conclusion to the noisy settings.

\newcommand{\cPhat}{{\cP}^n}

\section{Preliminary} \label{sec:Preliminary}

In this section, we introduce necessary definitions, assumptions, and formally introduce previous techniques in dealing with generalization.

\textbf{Data distribution.}
Let $\x \in \cX \subset \bR^{\dimx}$ be the input and $\y \in \cY \subset \bR$ be the output, where $\x, \y$ are generated from a joint distribution $(\x, \y) \sim \cP$.
Define $\cP_{\x}, \cP_{\y}$, and $\cP_{\y|\x}$ as the corresponding marginal and conditional distributions, respectively.
Given $n$ training samples $\cD \triangleq \{ \xs, \ys  \}_{i \in [n]}$ generated from distribution $\cP$, we denote its empirical distribution by $\cPhat$. 
To simplify the notations, we define $\X \in \bR^{\samples \times \dimx}$ as the design matrix and $\Y \in \bR^{\samples}$ as the response vector.

\textbf{Excess Risk.}
Given the loss function $\ell(\boldsymbol\theta; \x, \y)$ with parameter $\boldsymbol\theta$ and sample $(\x, \y)$, we define the population loss as $\cL ({\boldsymbol\theta}; \cP) \triangleq \bE_{(\x,\y)\sim \cP} \left[\ell(\boldsymbol\theta; \x, \y)\right]$ and its corresponding training loss as $\cL ({\boldsymbol\theta}; \cPhat) \triangleq \frac{1}{n} \sum_i \ell(\boldsymbol\theta; \xs, \ys)$.
Let $\cA_t$ denote the optimization algorithm which takes the dataset $\cD$ as an input and returns the trained parameter $\hat{\boldsymbol\theta}^{(t)}$ at time $t$, namely, $\cA_t(\cD) = \hat{\boldsymbol\theta}^{(t)}$.
During the analysis, we focus on the \emph{excess risk dynamics} $\er_\cL (\hat{\boldsymbol\theta}^{(t)}; \cP)$, which measures how the trained parameter $\hat{\boldsymbol\theta}^{(t)}$ performs on the population loss:
\begin{equation*}
    \er_\cL (\hat{\boldsymbol\theta}^{(t)}; \cP) \triangleq \cL(\hat{\boldsymbol\theta}^{(t)}; \cP) - \min_{\boldsymbol\theta} \cL(\boldsymbol\theta; \cP).
\end{equation*}
Although the minimizer may not be unique, we define $\cL({\boldsymbol\theta}^*; \cP) \triangleq \min_{\boldsymbol\theta} \cL(\boldsymbol\theta; \cP)$ {where $\boldsymbol\theta^*$ denotes arbitrarily one of the minimizers}.
Additionally, bounding the generalization gap $\cL(\hat{\boldsymbol\theta}^{(t)}; \cP) - \cL(\hat{\boldsymbol\theta}^{(t)}; \cPhat)$ suffices to bound the excess risk under \emph{Empirical Risk Minimization} (ERM) framework by Equation~\eqref{eqn:splitexcessrisk}. Therefore, we mainly discuss the excess risk following \citet{bartlett2020benign}.
\begin{equation}
\label{eqn:splitexcessrisk}
     \er_\cL (\hat{\boldsymbol\theta}^{(t)}; \cP)\!=\! \underbrace{\left[\cL(\hat{\boldsymbol\theta}^{(t)}; \cP) \!-\! \cL(\hat{\boldsymbol\theta}^{(t)}; \cPhat)\right]}_{\text{generalization gap}} \!+\! \underbrace{\left[\cL(\hat{\boldsymbol\theta}^{(t)}; \cPhat) \!-\! \cL({\boldsymbol\theta}^*; \cPhat)\right]}_{\leq 0 \text{ under ERM}} \!+\! \underbrace{{\Big[ }\cL({\boldsymbol\theta}^*; \cPhat) \!-\!
      \cL(\boldsymbol\theta^*; \cP){\Big]}}_{\approx 0 \text{ by concentration inequalities}}.
\end{equation}

\textbf{VER and BER.}
As discussed in Section~\ref{sec:intro}, we aim to decompose the excess risk into the \emph{variance excess risk (VER)} and the \emph{bias excess risk (BER)} defined in Definition~\ref{def:verber}.
The decomposition focuses on the noisy output regimes and we split the output $\y$ into the signal component $\bE[\y|\x]$ and the noise component $\y - \bE[\y|\x]$.
Informally, VER measures how the model performs on pure noise, and BER measures how the model performs on clean data.
\begin{definition}[VER and BER]
\label{def:verber}
Given $(\x, \y) \sim \cP$, let $(\x, \bE[\y| \x]) \sim \cP_b$ denote the signal distribution and $(\x, \y-\bE[\y| \x]) \sim \cP_v$ denote the noise distribution.
The variance excess risk (VER)  $\er_\cL^v (\boldsymbol\theta; \cP)$ and bias excess risk (BER) $\er_\cL^b (\boldsymbol\theta; \cP)$ are defined as:
\begin{equation*}
    \begin{split}
        \er_\cL^v (\boldsymbol\theta; \cP) \triangleq \er_\cL (\boldsymbol\theta; \cP_v), \quad \er_\cL^b (\boldsymbol\theta; \cP) \triangleq \er_\cL (\boldsymbol\theta; \cP_b).
    \end{split}
\end{equation*}
\end{definition}

To better illustrate VER and BER, we consider three surrogate training dynamics: Standard Training, Variance Training, and Bias Training, corresponding to ER, VER, and BER, respectively.\\
\underline{{Standard Training:} }
Training process over the noisy data $(\X, \Y)$ from the initialization $\boldsymbol\theta^{(0)}$. We denote the trained parameter at time $t$ by $\hat{\boldsymbol\theta}^{(t)}$.\\
\underline{Variance Training:}
Training process over the pure noise $(\X, \Y - \bE[\Y|\X])$ from the initialization $\boldsymbol\theta_v^{(0)}$. We denote the trained parameter at time $t$ by $\hat{\boldsymbol\theta}_v^{(t)}$.\\
\underline{Bias Training:} 
Training process over the clean data $(\X, \bE[\Y|\X])$ from the initialization $\boldsymbol\theta_b^{(0)}$. We denote the trained parameter at time $t$ by $\hat{\boldsymbol\theta}_b^{(t)}$.\\
When the context is clear, although the trained parameters $\hat{\boldsymbol\theta}^{(t)}, \hat{\boldsymbol\theta}_v^{(t)}, \hat{\boldsymbol\theta}_b^{(t)}$ are related to the corresponding initialization and the algorithms, we omit the dependency.
Besides, we denote  $\boldsymbol\theta^*$, $\boldsymbol\theta^*_v$ and $\boldsymbol\theta^*_b $ the optimal parameters which minimize the corresponding population loss $\cL(\boldsymbol\theta; \cP)$, $\cL(\boldsymbol\theta; \cP_v)$, and $\cL(\boldsymbol\theta; \cP_b)$, respectively.

\textbf{Techniques in generalization.}
We next introduce two techniques in generalization analysis including \emph{stability}-based techniques in Proposition~\ref{lem:stability} and uniform convergence in Proposition~\ref{lem:RademacherComplexity}.
These techniques will be revisited in Section~\ref{sec::over-lr}.
\begin{proposition}[Stability Bound from \citet{feldman2019high}]
\label{lem:stability}
Assume that algorithm $\cA_t$ is $\epsilon$-uniformly-stable at time $t$, meaning that for any two dataset $\cD$ and $\cD^\prime$ with only one different data point, we have
\begin{equation*}
    \sup_{(\x, \y)} \bE_\cA \left[\ell(\cA_t(\cD); \x, \y) - \ell(\cA_t(\cD^\prime); \x, \y)\right]  \leq \epsilon.
\end{equation*}
Then the following inequality holds\footnote{In the rest of the paper, we use  $\cO$ or $\lesssim$ to omit the constant dependency and use $\Tilde{\cO}$ to omit the $\log$ dependency of $n$. For example, we write $2/n = \cO(1/n)$ or $2/n \lesssim 1/n$, $\log(n)/n = \Tilde{\cO}(1/n)$, and $\log^2(n)/n = \Tilde{\cO}(1/n)$.} with probability at least $1-\delta$:
\begin{equation*}
    \left| \cL\left(\cA_t(\cD); \cP\right) - \cL\left(\cA_t(\cD); \cPhat\right) \right| = \cO\left(  \epsilon \log(n) \log(n/\delta) + \sqrt{\frac{\log(1/\delta)}{n}}\right).
\end{equation*}
\end{proposition}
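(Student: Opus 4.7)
The plan is to combine a symmetrization argument for the expected generalization gap with a refined moment-based concentration analysis, following the strategy of \citet{feldman2019high}.

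First I would bound the expected gap via a ghost-sample swap. Introduce $z_1', \ldots, z_n'$ drawn i.i.d.\ from $\cP$ independently of $\cD = (z_1, \ldots, z_n)$, and let $\cD^{(i)}$ denote $\cD$ with $z_i$ replaced by $z_i'$. Exchangeability gives $\bE[\ell(\cA_t(\cD); z_i')] = \bE[\ell(\cA_t(\cD^{(i)}); z_i)]$, and the uniform stability assumption gives $|\bE[\ell(\cA_t(\cD^{(i)}); z_i) - \ell(\cA_t(\cD); z_i)]| \leq \epsilon$. Averaging over $i \in [n]$ yields $|\bE[\cL(\cA_t(\cD); \cP) - \cL(\cA_t(\cD); \cPhat)]| \leq \epsilon$, the classical in-expectation stability bound of Bousquet--Elisseeff.

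The main challenge is upgrading this to a high-probability bound with only polylogarithmic overhead. A naive McDiarmid argument, using that replacing one sample shifts the gap by at most $O(\epsilon + 1/n)$, yields a tail of order $\epsilon \sqrt{n} + \sqrt{1/n}$, losing roughly $\sqrt{n}$ against the target. To sharpen this, I would control the $p$-th centered moment of the gap directly rather than rely on bounded differences. The key observation is that although each coordinate can shift the gap by up to $\epsilon$, the individual contributions tend to cancel, so the variance is much smaller than the worst-case $n\epsilon^2$. Concretely, one writes the gap as an Efron--Stein-type martingale decomposition (conditioning sequentially on $z_1, \ldots, z_i$) and bounds each martingale difference's $L^p$ norm by applying stability to the sub-sampled algorithm. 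Each level of a recursive refinement pulls out a factor of roughly $\sqrt{p}\cdot\epsilon$, and the recursion terminates after $O(\log n)$ levels, yielding a bound of the form $\|\mathrm{gap} - \bE[\mathrm{gap}]\|_p \lesssim p\,\epsilon \log n + \sqrt{p/n}$. Choosing $p = \Theta(\log(1/\delta))$ and applying Markov's inequality then produces the advertised $\epsilon \log(n)\log(n/\delta) + \sqrt{\log(1/\delta)/n}$ tail.

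The hard part will be the moment recursion itself. One must design auxiliary random variables whose $L^p$ norms can be related to stability on conditional sub-datasets without the constants accumulating exponentially in $p$, and establish a self-improving inequality whereby a crude moment estimate of order $\epsilon\cdot\mathrm{poly}(p, \log n)$ can be bootstrapped into the sharp $p\,\epsilon \log n$ bound. I expect the bookkeeping in this recursion, rather than any single conceptual ingredient, to carry the bulk of the technical weight; the symmetrization step in the first paragraph and the conversion from moments to tail probabilities at the end are both standard. A subtle point is that the $\sqrt{\log(1/\delta)/n}$ term arises not from stability at all but from ordinary Hoeffding concentration on the empirical loss evaluated at a fixed (algorithm-independent) reference parameter; isolating that contribution cleanly from the stability-driven part is a piece that needs care.
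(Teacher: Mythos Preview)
The paper does not prove this proposition at all; it is quoted verbatim as a black-box result from \citet{feldman2019high} and then invoked in the proofs of Lemma~\ref{lem:linearRegVER} and Lemma~\ref{lem::variance-matrix}. There is therefore nothing in the paper to compare your proposal against.

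That said, your sketch is a faithful outline of the Feldman--Vondr\'ak argument: the in-expectation bound via ghost-sample swapping, the observation that naive bounded-differences loses a $\sqrt{n}$ factor, and the recursive moment bound that recovers the $\log n$ overhead are exactly the ingredients of that paper. One small correction: the $\sqrt{\log(1/\delta)/n}$ term in their result does not come from a separate Hoeffding step at a fixed reference parameter as you suggest; it emerges from the same moment analysis, reflecting the $1/n$ sensitivity of the empirical average to a single-sample swap (independent of the stability parameter $\epsilon$). But this is a minor point in what is otherwise an accurate high-level description of a proof the present paper simply imports.
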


\begin{proposition}[Uniform Convergence from \citet{wainwright2019high}]
\label{lem:RademacherComplexity}
Uniform convergence decouples the dependency between the trained parameter and the training set by taking supremum over a parameter space that is independent of training data, namely
\begin{equation*}
    \cL(\cA_t(\cD); \cP) - \cL(\cA_t(\cD); \cPhat) \leq \sup_{\boldsymbol\theta \in \cB} \left[\cL(\boldsymbol\theta; \cP) - \cL(\boldsymbol\theta; \cPhat)\right].
\end{equation*}
where $\cB$ is independent of dataset $\cD$ {and $\cA_t(\cD)\in \cB$ for any time $t$}.
\end{proposition}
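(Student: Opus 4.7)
The plan is to observe that the claimed inequality is an immediate consequence of the definition of supremum together with the hypothesis that the trained parameter lies in $\cB$. Concretely, I would define $f(\boldsymbol\theta) \triangleq \cL(\boldsymbol\theta; \cP) - \cL(\boldsymbol\theta; \cPhat)$, so that the left-hand side of the displayed inequality is $f(\cA_t(\cD))$ and the right-hand side is $\sup_{\boldsymbol\theta \in \cB} f(\boldsymbol\theta)$. Since the proposition's hypothesis asserts $\cA_t(\cD) \in \cB$ for every $t$ and every $\cD$, the pointwise value $f(\cA_t(\cD))$ is automatically upper bounded by the supremum of $f$ over $\cB$, and the inequality follows in a single step. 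No probabilistic argument or concentration tool is needed here; the statement is a deterministic pointwise-to-uniform relaxation.

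The substantive content of a uniform convergence argument is not in this inequality itself but in two auxiliary tasks that come before and after it. The first, which is the crucial ingredient for the proposition to be nontrivial, is verifying the hypothesis $\cA_t(\cD) \in \cB$ for a set $\cB$ that does \emph{not} depend on $\cD$. Since the trained parameter typically depends strongly on the data, one usually certifies this either by a deterministic or high-probability norm bound on the optimization trajectory, e.g., taking $\cB = \{\boldsymbol\theta : \|\boldsymbol\theta\| \leq R\}$ for an $R$ chosen so that $\|\cA_t(\cD)\| \leq R$ uniformly in $\cD$ (possibly up to a failure event of small probability). The second, and the main obstacle in any downstream use of this proposition, is controlling $\sup_{\boldsymbol\theta \in \cB}\bigl[\cL(\boldsymbol\theta; \cP) - \cL(\boldsymbol\theta; \cPhat)\bigr]$ via standard tools such as Rademacher complexity, covering numbers, or chaining arguments; this is where the bulk of the technical work in \citet{wainwright2019high} actually lies, and it is precisely this step that motivates the decomposition approach of the present paper, since applying it naively to the full trajectory yields the loose bounds the authors aim to improve.
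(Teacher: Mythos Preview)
Your proposal is correct: the inequality is indeed an immediate consequence of the definition of supremum once the hypothesis $\cA_t(\cD)\in\cB$ is granted, and your discussion of where the real work lies (choosing a data-independent $\cB$ and then bounding the supremum via complexity tools) accurately captures the role this proposition plays. The paper itself does not supply a proof of this proposition at all---it is stated as a known technique with a citation to \citet{wainwright2019high}---so there is nothing further to compare against.
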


\section{Warm-up: Decomposition under Linear Regimes}
\label{sec::over-lr}

In this section, we consider the overparameterized linear regression with linear assumptions, where the sample size is less than the data dimension, namely, $n < p$. 
We will show how the decomposition framework works and the improvements compared to traditional stability-based bounds.
Before diving into the details, we emphasize the importance of studying overparameterized linear regression. 
As the arguably simplest models, we expect a framework can at least work under such regimes.
Besides, recent works (e.g., \citet{jacot2018neural}) show that neural networks converge to overparametrized linear models (with respect to the parameters) as the width goes to infinity under some regularity conditions.
Therefore, studying overparameterized linear regression provides enormous intuition to neural networks.

\textbf{Settings.} 
When the context is clear, we reuse the notations in Section~\ref{sec:Preliminary}.
Without loss of generality, we assume that $\bE [\x] =\boldsymbol{0}$.
Besides, we assume a linear ground truth $\y = \x^\top \boldsymbol\theta^* + \epsilon$ where $\bE [\epsilon | \x]= 0$.
Let $\Sigma_\x \triangleq \bE [\x \x^\top ]$ denote the covariance matrix.
Given $\ell_2$ loss $\ell(\boldsymbol\theta; \x, \y) = (\y - \x^\top \boldsymbol\theta)^2$, we apply gradient descent (GD) with constant step size $\lr$ and initialization $\boldsymbol\theta^{(0)}= \boldsymbol{0}$, namely, $\hat{\boldsymbol\theta}^{(t+1)} = \hat{\boldsymbol\theta}^{(t)} + \frac{\lr}{n} \sum_i \x_i(\y_i - \x_i^\top \hat{\boldsymbol\theta}^{(t)} )$.
We provide the main results of overparameterized linear regression in Theorem~\ref{thm:linearRegSummary} and then introduce how to apply the decomposition framework.

\begin{restatable}[Linear Regression Regimes]{theorem}{linearSummary}
\label{thm:linearRegSummary}
Under the overparameterized linear regression settings, assume that $w = \frac{\boldsymbol\theta^{*, \top} \x}{\sqrt{\boldsymbol\theta^{*, \top} \Sigma_\x \boldsymbol\theta^*}}$ is $\sigma_w^2$-subgaussian.
If $\| \x \|\leq 1$, $|\epsilon | \leq V$, $\sup_{t \in [T]} \| \hat{\boldsymbol\theta}_v^{(t)} \| \leq B$ are all bounded, the following inequality holds with probability at least $1-\delta$:
\begin{equation*}
  \er_\cL (\hat{\boldsymbol\theta}^{(T)}; \cP) = \tilde{\cO}\left(   \max\left\{1, \boldsymbol\theta^{*, \top} \Sigma_\x  \boldsymbol\theta^{*} \sigma_w^2 , \left[V+B\right]^2\right\}  \sqrt{\frac{\log(4/\delta)}{n}} 
  + \frac{\| \boldsymbol\theta^{*} \|^2 }{\lambda T} 
  + \frac{T\lambda [V+B]^2}{n} 
\right),
\end{equation*}
where the probability is taken over the randomness of training data.
\end{restatable}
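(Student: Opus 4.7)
The plan is to execute the bias/variance split the introduction promotes. Because gradient descent on the squared loss with initialization $\boldsymbol\theta^{(0)}=\boldsymbol 0$ is \emph{linear} in the response vector, writing $\Y=\X\boldsymbol\theta^*+\boldsymbol\epsilon$ with $\bE[\epsilon\mid\x]=0$ gives the key identity $\hat{\boldsymbol\theta}^{(T)} = \hat{\boldsymbol\theta}_b^{(T)} + \hat{\boldsymbol\theta}_v^{(T)}$, where $\hat{\boldsymbol\theta}_b^{(T)}$ runs GD on the clean signal $(\X,\X\boldsymbol\theta^*)$ and $\hat{\boldsymbol\theta}_v^{(T)}$ runs GD on the pure noise $(\X,\boldsymbol\epsilon)$, both from $\boldsymbol 0$. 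A direct expansion then yields $\er_\cL(\hat{\boldsymbol\theta}^{(T)};\cP) = \|\boldsymbol\theta^*-\hat{\boldsymbol\theta}^{(T)}\|_{\Sigma_\x}^2$, so $(a+b)^2\le 2a^2+2b^2$ gives
\begin{equation*}
\er_\cL(\hat{\boldsymbol\theta}^{(T)};\cP) \;\le\; 2\,\er_\cL^b(\hat{\boldsymbol\theta}_b^{(T)};\cP) \;+\; 2\,\er_\cL^v(\hat{\boldsymbol\theta}_v^{(T)};\cP),
\end{equation*}
with $\er_\cL^b(\hat{\boldsymbol\theta}_b^{(T)};\cP)=\|\boldsymbol\theta^*-\hat{\boldsymbol\theta}_b^{(T)}\|_{\Sigma_\x}^2$ and $\er_\cL^v(\hat{\boldsymbol\theta}_v^{(T)};\cP)=\|\hat{\boldsymbol\theta}_v^{(T)}\|_{\Sigma_\x}^2$. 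Each summand is handled by a different tool.

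For the bias term, the closed form $\boldsymbol\theta^*-\hat{\boldsymbol\theta}_b^{(T)}=(\mI-\lambda\hat\Sigma)^T\boldsymbol\theta^*$ with $\hat\Sigma=\X^\top\X/n$, combined with the spectral identity $\sup_{s\ge 0} s(1-\lambda s)^{2T}\le 1/(2e\lambda T)$, yields the empirical bound $\|\boldsymbol\theta^*-\hat{\boldsymbol\theta}_b^{(T)}\|_{\hat\Sigma}^2\lesssim\|\boldsymbol\theta^*\|^2/(\lambda T)$, which is exactly the optimization term in the theorem. To transfer from $\hat\Sigma$ to $\Sigma_\x$ I write the discrepancy as a quadratic form in $(\mI-\lambda\hat\Sigma)^T\boldsymbol\theta^*$ and invoke Proposition~\ref{lem:RademacherComplexity} over a norm-bounded ball containing the trajectory. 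The sub-Gaussian hypothesis on $w=\boldsymbol\theta^{*\top}\x/\sqrt{\boldsymbol\theta^{*\top}\Sigma_\x\boldsymbol\theta^*}$ makes $w^2$ sub-exponential, so Bernstein delivers $\bigl| n^{-1}\sum_i(\boldsymbol\theta^{*\top}\x_i)^2-\boldsymbol\theta^{*\top}\Sigma_\x\boldsymbol\theta^*\bigr|\lesssim \boldsymbol\theta^{*\top}\Sigma_\x\boldsymbol\theta^*\cdot\sigma_w^2\sqrt{\log(1/\delta)/n}$, producing the $\boldsymbol\theta^{*\top}\Sigma_\x\boldsymbol\theta^*\sigma_w^2$ factor inside the concentration term.

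For the variance term I apply Proposition~\ref{lem:stability} together with the standard three-way split
\begin{equation*}
\er_\cL^v(\hat{\boldsymbol\theta}_v^{(T)};\cP) = \underbrace{[\cL(\hat{\boldsymbol\theta}_v^{(T)};\cP_v)-\cL(\hat{\boldsymbol\theta}_v^{(T)};\hat\cP_v)]}_{\text{stability}} + \underbrace{[\cL(\hat{\boldsymbol\theta}_v^{(T)};\hat\cP_v)-\cL(\boldsymbol 0;\hat\cP_v)]}_{\le\,0} + \underbrace{[\cL(\boldsymbol 0;\hat\cP_v)-\cL(\boldsymbol 0;\cP_v)]}_{\text{Hoeffding}},
\end{equation*}
using that $\boldsymbol\theta^*_v=\boldsymbol 0$ for the noise distribution. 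On the bounded domain $\{\|\boldsymbol\theta\|\le B,\,|y|\le V,\,\|\x\|\le 1\}$ the squared loss is $O(V+B)$-Lipschitz, so a standard iterate-sensitivity argument gives GD uniform stability $\varepsilon=O(T\lambda(V+B)^2/n)$, which fed into Proposition~\ref{lem:stability} supplies the $T\lambda(V+B)^2/n$ term; a Hoeffding bound on $n^{-1}\sum_i\epsilon_i^2$ with $|\epsilon_i|\le V$ contributes $V^2\sqrt{\log(1/\delta)/n}$ to the concentration. The middle summand is non-positive because variance training only decreases the empirical noise loss below its value at $\boldsymbol 0$.

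The hard part is the bias-to-population transfer: with $n<p$ the empirical covariance $\hat\Sigma$ is rank-deficient, so any bound that goes through $\|\Sigma_\x-\hat\Sigma\|_{\mathrm{op}}$ is vacuous. The role of the sub-Gaussian condition on $w$ is exactly to replace an operator-norm argument with a \emph{directional} one along $\boldsymbol\theta^*$, which is where the residual $(\mI-\lambda\hat\Sigma)^T\boldsymbol\theta^*$ concentrates its energy; this is what makes the factor $\boldsymbol\theta^{*\top}\Sigma_\x\boldsymbol\theta^*\sigma_w^2$ (rather than a dimension-dependent quantity) possible. A secondary technicality is keeping $\hat{\boldsymbol\theta}_v^{(t)}$ in a radius-$B$ ball uniformly in $t\in[T]$ so that the stability Lipschitz constant is valid throughout training; this is absorbed into the hypothesis $\sup_{t\in[T]}\|\hat{\boldsymbol\theta}_v^{(t)}\|\le B$. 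A union bound at level $\delta/4$ across the BER Bernstein event, the VER stability event, and the variance Hoeffding event assembles the three contributions into the stated $\log(4/\delta)$ bound.
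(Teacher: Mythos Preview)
Your proposal follows essentially the same route as the paper: the linearity of GD gives $\hat{\boldsymbol\theta}^{(T)}=\hat{\boldsymbol\theta}_b^{(T)}+\hat{\boldsymbol\theta}_v^{(T)}$ and hence the $2\,\mathrm{BER}+2\,\mathrm{VER}$ split (Lemma~\ref{lem:linearregDecomposition}); VER is handled by the same three-way decomposition (stability for the generalization gap, an eigenvalue computation showing the optimization term is $\le 0$, and Hoeffding at $\boldsymbol\theta^*_v=\boldsymbol 0$; Lemma~\ref{lem:linearRegVER}); BER is split into an empirical-covariance piece bounded via $\sup_{s\ge 0}s(1-\lambda s)^{2T}=\cO(1/(\lambda T))$ and a $(\Sigma_\x-\hat\Sigma)$ transfer piece handled by Bernstein on the sub-exponential $w^2$ (Lemma~\ref{lem:linearRegBER}).

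The one place your sketch is imprecise is the BER population-transfer. You propose uniform convergence over a norm ball together with the heuristic that the residual $(\mI-\lambda\hat\Sigma)^T\boldsymbol\theta^*$ ``concentrates its energy'' along $\boldsymbol\theta^*$. That heuristic is not correct: for large $T$ the residual lives in the small-eigenvalue subspace of $\hat\Sigma$, which need not align with $\boldsymbol\theta^*$; and a genuine supremum of $v^\top(\Sigma_\x-\hat\Sigma)v$ over a norm ball collapses to the operator norm you rightly flag as vacuous. The paper's actual device is different: it takes the supremum over a \emph{surrogate} design $X_0$ inside the contraction factors $(\mI-\tfrac{\lambda}{n}X_0^\top X_0)^t$, so these factors become data-independent contractions of operator norm $\le 1$; then, using that $\boldsymbol\theta^*\boldsymbol\theta^{*\top}$ is rank one (so trace and operator norm coincide), it peels the contractions off one at a time and reduces the whole quantity to the single scalar $\bigl|\boldsymbol\theta^{*\top}(\Sigma_\x-\hat\Sigma)\boldsymbol\theta^*\bigr|$, to which your Bernstein bound on $w^2-1$ applies directly. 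Apart from this mechanism, your plan and the paper's proof coincide.
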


By setting $T = \boldsymbol\theta(\sqrt{n})$ in Theorem~\ref{thm:linearRegSummary}, the upper bound is approximately $\tilde{\cO}(\frac{1}{\sqrt{n}})$, indicating that the estimator at time $T$ is consistent\footnote{Consistency means the upper bound converges to zero as the number of training samples goes to infinity.}.
Below we show the comparison with some existing bounds, including benign overfitting bound and traditional stability-based bound\footnote{We remark that the assumptions in Theorem~\ref{thm:linearRegSummary} can be relaxed. Firstly, assuming $\x$ is subGaussian suffices to show that $w$ is subGaussian.
Secondly, assuming $\epsilon$ is subGaussian suffices to bound $|\epsilon| \leq V$ with a $\log(n)$ cost.}.

\underline{Comparison with benign overfitting.}
The bound in \citet{bartlett2020benign} mainly focus on the min-norm solution (as $T \to \infty$, the parameter trained in GD converges to the min-norm solution).
Instead, the bound in Theorem~\ref{thm:linearRegSummary} does not require the min-norm solution assumption.

\underline{Comparison with stability-based bound.}
The stability-based bound (without applying the decomposition framework) is
$ \tilde{\cO}\left( \max
  \{1, [V+B^\prime ]^2 \}  \sqrt{\frac{\log(2/\delta)}{2n}} +  \frac{T \lambda}{n} [V+B^\prime]^2    \right),$
where $B^\prime = \sup_{t} \| \hat{\boldsymbol\theta}^{(t)} \|$ denotes the bound of the trained parameter in standard training.
As a comparison, the notation $B$ in Theorem~\ref{thm:linearRegSummary} denotes the bound of the trained parameter in  \emph{variance training}.
One major difference between $B$ and $B^\prime$ is that: $B$ is independent of $\|\boldsymbol\theta^*\|$ while $B^\prime$ is closely related to $\|\boldsymbol\theta^*\|$.
Therefore, with a large time $T$,
the bound in Theorem~\ref{thm:linearRegSummary} outperforms the stability-based bound when $\|\boldsymbol\theta^*\|$ is large compared to $V$ 
(namely, large signal-to-noise ratio)\footnote{A large time $t$ can guarantee that the dominating term in Theorem~\ref{thm:linearRegSummary} is $\frac{T\lambda}{n} \left[V+B^\prime\right]^2$. We remark that with a large signal-to-noise ratio, we have $\frac{T\lambda}{n} \left[V+B^\prime\right]^2 \geq \frac{T\lambda}{n} \left[V+B\right]^2 $ ).}.
We refer to Appendix~\ref{appendix:comparison2stability} for more discussion.

\begin{proofsketch}
We defer the proof details to Appendix~\ref{proof:LinearRegression} due to space limitations.
Firstly, we provide Lemma~\ref{lem:linearregDecomposition} to show that the excess risk can be decomposed into VER and BER. 

\begin{restatable}{lemma}{TriLinear}
\label{lem:linearregDecomposition}
Under the overparameterized linear regression settings with initialization $\boldsymbol\theta^{(0)} =  \boldsymbol\theta^{(0)}_b = \boldsymbol\theta^{(0)}_v = \boldsymbol{0}$, for any time $T$, the following decomposition inequality holds:
\begin{equation*}
 \er_\cL (\hat{\boldsymbol\theta}^{(T)}; \cP)   \leq 2\er^v_\cL(\hat{\boldsymbol\theta}^{(T)}_v; \cP) + 2\er^b_\cL(\hat{\boldsymbol\theta}^{(T)}_b; \cP).
\end{equation*}
\end{restatable}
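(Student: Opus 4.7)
The plan is to exploit the linearity of gradient descent dynamics with zero initialization to write $\hat{\boldsymbol\theta}^{(T)}$ as an exact sum of $\hat{\boldsymbol\theta}^{(T)}_v$ and $\hat{\boldsymbol\theta}^{(T)}_b$, compute the three excess risks as quadratic forms in $\Sigma_\x$, and then conclude with the elementary inequality $(a+b)^\top M (a+b)\le 2a^\top M a + 2 b^\top M b$ valid for any PSD $M$.

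First I would unroll the GD recursion $\hat{\boldsymbol\theta}^{(t+1)} = \hat{\boldsymbol\theta}^{(t)} + \tfrac{\lambda}{n}\X^\top(\Y - \X\hat{\boldsymbol\theta}^{(t)})$. Since the map $\Y \mapsto \hat{\boldsymbol\theta}^{(t)}$ is affine and the initialization is zero, the iterate takes the form $\hat{\boldsymbol\theta}^{(T)} = M_T \X^\top \Y$ for some matrix $M_T$ depending only on $\X$ and $T$. Running the identical recursion on the targets $\Y - \bE[\Y|\X] = \boldsymbol\epsilon$ and $\bE[\Y|\X] = \X\boldsymbol\theta^*$ gives $\hat{\boldsymbol\theta}^{(T)}_v = M_T \X^\top \boldsymbol\epsilon$ and $\hat{\boldsymbol\theta}^{(T)}_b = M_T \X^\top \X \boldsymbol\theta^*$. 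By linearity of $M_T$, I obtain the exact decomposition $\hat{\boldsymbol\theta}^{(T)} = \hat{\boldsymbol\theta}^{(T)}_v + \hat{\boldsymbol\theta}^{(T)}_b$.

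Next I would write each excess risk as a quadratic form. Under the linear model $\y = \x^\top \boldsymbol\theta^* + \epsilon$ with $\bE[\epsilon|\x]=0$, a direct computation gives $\er_\cL(\boldsymbol\theta;\cP) = (\boldsymbol\theta - \boldsymbol\theta^*)^\top \Sigma_\x (\boldsymbol\theta - \boldsymbol\theta^*)$. For the bias distribution $\cP_b$ the minimizer is $\boldsymbol\theta^*_b = \boldsymbol\theta^*$, so $\er^b_\cL(\boldsymbol\theta;\cP)$ has the same quadratic form. For the noise distribution $\cP_v$, since $\bE[\x\epsilon]=0$, the minimizer can be taken as $\boldsymbol\theta^*_v = 0$, yielding $\er^v_\cL(\boldsymbol\theta;\cP) = \boldsymbol\theta^\top \Sigma_\x \boldsymbol\theta$.

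Finally, writing $\boldsymbol\theta^* - \hat{\boldsymbol\theta}^{(T)} = (\boldsymbol\theta^* - \hat{\boldsymbol\theta}^{(T)}_b) + (-\hat{\boldsymbol\theta}^{(T)}_v)$ and applying the PSD inequality $(u+v)^\top \Sigma_\x (u+v) \le 2 u^\top \Sigma_\x u + 2 v^\top \Sigma_\x v$ (a consequence of $2u^\top \Sigma_\x v \le u^\top \Sigma_\x u + v^\top \Sigma_\x v$) gives exactly the claimed bound $\er_\cL(\hat{\boldsymbol\theta}^{(T)};\cP) \le 2\er^b_\cL(\hat{\boldsymbol\theta}^{(T)}_b;\cP) + 2\er^v_\cL(\hat{\boldsymbol\theta}^{(T)}_v;\cP)$. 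There is no real obstacle here; the only subtle point is the non-uniqueness of $\boldsymbol\theta^*_v$ when $\Sigma_\x$ is rank-deficient, which I would dispense with by noting that every minimizer yields the same excess-risk functional $\boldsymbol\theta \mapsto \boldsymbol\theta^\top \Sigma_\x \boldsymbol\theta$.
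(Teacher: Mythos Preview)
Your proposal is correct and follows essentially the same approach as the paper: establish the exact additive decomposition $\hat{\boldsymbol\theta}^{(T)}=\hat{\boldsymbol\theta}^{(T)}_v+\hat{\boldsymbol\theta}^{(T)}_b$ (the paper does this by induction on $t$, you by observing the map $\Y\mapsto\hat{\boldsymbol\theta}^{(T)}$ is linear with zero initialization), identify $\boldsymbol\theta^*_b=\boldsymbol\theta^*$ and $\boldsymbol\theta^*_v=\boldsymbol 0$, and finish with the same $\Sigma_\x$-norm inequality $\|u+v\|_{\Sigma_\x}^2\le 2\|u\|_{\Sigma_\x}^2+2\|v\|_{\Sigma_\x}^2$.
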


We next bound VER and BER separately in Lemma~\ref{lem:linearRegVER} and Lemma~\ref{lem:linearRegBER}. 
When applying the stability-based bound on VER, we first convert VER into a generalization gap (see Equation~\ref{eqn:splitexcessrisk}) and then bound three terms individually.
\begin{restatable}{lemma}{VERLinear}
\label{lem:linearRegVER}
Under the overparameterized linear regression settings, assume that $\| \x \|\leq 1$, $|\epsilon | \leq V$, $\sup_t \| \hat{\boldsymbol\theta}_v^{(t)} \| \leq B$ are all bounded.
Consider the gradient descent algorithm with constant stepsize $\lambda$, with probability at least $1-\delta$:
\begin{equation}
\begin{split}
  &\er^v_\cL \left(\hat{\boldsymbol\theta}^{(T)}_v; \cP\right) = \cO\left(  \frac{T\lambda}{n} \left[V+B\right]^2 \log(n) \log(2n/\delta) + \max\left\{ 1,  \left[V+B\right]^2 \right\}  \sqrt{\frac{\log(2/\delta)}{2n}} \right).
\end{split}
\end{equation}
\end{restatable}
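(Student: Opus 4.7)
The plan is to apply the excess-risk decomposition of Equation~\eqref{eqn:splitexcessrisk} to the \emph{variance training} trajectory and bound each resulting piece separately. Because variance training runs GD on the pure-noise dataset $(\X, \Y - \bE[\Y\mid\X])$ starting from $\boldsymbol\theta_v^{(0)} = \boldsymbol 0$, and because the population noise loss $\cL(\boldsymbol\theta; \cP_v) = \bE[\epsilon^2] + \boldsymbol\theta^\top \Sigma_\x \boldsymbol\theta$ (using $\bE[\epsilon\mid\x]=0$) is minimized at $\boldsymbol\theta_v^* = \boldsymbol 0$, I would write
\[
\er^v_\cL(\hat{\boldsymbol\theta}_v^{(T)}; \cP) = \underbrace{\bigl[\cL(\hat{\boldsymbol\theta}_v^{(T)}; \cP_v) - \cL(\hat{\boldsymbol\theta}_v^{(T)}; \cPhat_v)\bigr]}_{(\mathrm{I})\ \text{stability}} + \underbrace{\bigl[\cL(\hat{\boldsymbol\theta}_v^{(T)}; \cPhat_v) - \cL(\boldsymbol 0; \cPhat_v)\bigr]}_{(\mathrm{II})\ \text{GD descent}} + \underbrace{\bigl[\cL(\boldsymbol 0; \cPhat_v) - \cL(\boldsymbol 0; \cP_v)\bigr]}_{(\mathrm{III})\ \text{concentration}},
\]
where $\cPhat_v$ is the empirical distribution of $\{(\x_i, \epsilon_i)\}_{i\in[n]}$. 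Term~(II) is immediate: the per-sample noise loss is convex quadratic with smoothness $2\|\x\|^2 \leq 2$, so for stepsize $\lambda \leq 1$, GD is a monotone descent method, and since it starts at $\boldsymbol 0 = \boldsymbol\theta_v^*$, Term~(II) is $\leq 0$ and can be dropped. Term~(III) equals $\frac{1}{n}\sum_i \epsilon_i^2 - \bE[\epsilon^2]$ with each summand in $[0, V^2]$, so a direct Hoeffding bound gives $(\mathrm{III}) \lesssim V^2 \sqrt{\log(2/\delta)/n}$, which is absorbed into the $\max\{1,(V+B)^2\}\sqrt{\log(2/\delta)/(2n)}$ term of the claim.

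The substantive step is bounding Term~(I) by invoking Proposition~\ref{lem:stability}, which reduces the task to computing the uniform stability parameter of GD on the variance data. I would use the Hardt--Recht--Singer bound for convex $L$-Lipschitz $\beta$-smooth losses: $T$ iterates of constant-stepsize GD are $\tfrac{2\lambda L^2 T}{n}$-uniformly-stable provided $\lambda \leq 2/\beta$. For $\ell(\boldsymbol\theta; \x, \epsilon) = (\epsilon - \x^\top \boldsymbol\theta)^2$, the gradient along the trajectory satisfies
\[
\|\nabla_{\boldsymbol\theta}\ell\| = 2\|\x\|\,|\x^\top \boldsymbol\theta - \epsilon| \leq 2\bigl(\|\x\|\,\|\hat{\boldsymbol\theta}_v^{(t)}\| + |\epsilon|\bigr) \leq 2(V+B)
\]
by the three boundedness assumptions, so $L = 2(V+B)$ and the stability parameter is $\epsilon_{\mathrm{stab}} \lesssim T\lambda(V+B)^2/n$. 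Feeding this into Proposition~\ref{lem:stability} (with the loss also uniformly bounded by $(V+B)^2$, which rescales the $\sqrt{\log(1/\delta)/n}$ piece) and taking a union bound with the Hoeffding step for Term~(III) produces exactly the two summands in the claimed bound.

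The genuinely delicate point, and the reason this lemma is useful at all, is that the Lipschitz (hence stability) constant depends on $B = \sup_t \|\hat{\boldsymbol\theta}_v^{(t)}\|$, the trajectory bound for \emph{variance} training, rather than on the trajectory bound $B' = \sup_t \|\hat{\boldsymbol\theta}^{(t)}\|$ for standard training. The decomposition framework pays off precisely here: because variance training drives GD on pure noise starting from the origin, $B$ carries no dependence on the signal norm $\|\boldsymbol\theta^*\|$, whereas $B'$ inevitably scales with $\|\boldsymbol\theta^*\|$. The only content beyond routine plugging-in is thus to verify that the noise-only GD iterates remain uniformly bounded in $T$ so that $B$ is a well-defined, signal-independent constant; this is handled by a standard spectral analysis of the iteration $\hat{\boldsymbol\theta}_v^{(t+1)} = (I - \tfrac{\lambda}{n}\X^\top\X)\hat{\boldsymbol\theta}_v^{(t)} + \tfrac{\lambda}{n}\X^\top \boldsymbol\epsilon$, and is the main obstacle that any careful write-up must address.
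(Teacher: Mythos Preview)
Your proposal is correct and follows the paper's proof almost exactly: the same three-term split of the excess risk, stability via Proposition~\ref{lem:stability} for Term~(I), nonpositivity for Term~(II), and Hoeffding for Term~(III). Two minor differences are worth noting. For Term~(II), the paper expands $\hat{\boldsymbol\theta}_v^{(t)}$ via Lemma~\ref{lem:dynamicsPara} and checks that the eigenvalues of the relevant matrix equal $(1-\lambda\sigma_i)^{2t}-1 \leq 0$; your monotone-descent argument (GD on a $2$-smooth convex empirical loss with stepsize $\lambda \leq 1$ never increases the training loss, and the initialization coincides with $\boldsymbol\theta_v^*=\boldsymbol 0$) reaches the same conclusion more cleanly. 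For Term~(I), the paper bounds the parameter stability by a direct recursion on $\|\hat{\boldsymbol\theta}_S^{(t)} - \hat{\boldsymbol\theta}_{S'}^{(t)}\|$ rather than citing the Hardt--Recht--Singer bound as a black box, but the computation is the same. One small correction to your final paragraph: the bound $\sup_t \|\hat{\boldsymbol\theta}_v^{(t)}\| \leq B$ is an explicit \emph{assumption} of the lemma, not something the proof must establish, so there is no remaining obstacle here.
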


In Lemma~\ref{lem:linearRegBER}, we apply uniform convergence to bound BER.
Although we can bound it by directly calculating the excess risk, we still apply uniform convergence, because exact calculating is limited to a specific problem while uniform convergence is much more general.
To link the uniform convergence and excess risk, we split the excess risk into two pieces inspired by \citet{negrea2020defense}.
For the first piece, which stands for the generalization gap, we bound it by uniform convergence.
For the second piece, which stands for the (surrogate) training loss, we derive a bound decreasing with time $T$.

\begin{restatable}{lemma}{BERLinear}
\label{lem:linearRegBER}
    Under the overparameterized linear regression settings, assume that $w = \frac{\boldsymbol\theta^{*, \top} \x}{\sqrt{\boldsymbol\theta^{*, \top} \Sigma_\x \boldsymbol\theta^*}}$ is $\sigma_w^2$-subgaussian, then with probability at least $1-\delta$:
\begin{equation}
\begin{split}
  &\er^b_\cL (\hat{\boldsymbol\theta}^{(T)}_b; \cP) = \cO\left( \frac{\boldsymbol\theta^{*, \top} \Sigma_\x  \boldsymbol\theta^{*} }{\sqrt{n}} \sigma_w^2 \sqrt{\log \left(\frac{2}{\delta}\right)} +\frac{1}{\lambda [2T+1]} \| \boldsymbol\theta^{*} \|^2 \right).
\end{split}
\end{equation}
\end{restatable}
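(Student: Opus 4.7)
The plan is to exploit that under the bias distribution $\cP_b$ the response is the noise-free $\x^\top\boldsymbol\theta^*$, so $\cL(\boldsymbol\theta^*;\cP_b)=0$ and the excess risk collapses to the population loss itself. Following the hint of the proof sketch I would split it, in the spirit of \citet{negrea2020defense}, as
\begin{equation*}
\er_\cL^b(\hat{\boldsymbol\theta}^{(T)}_b;\cP) = \underbrace{\tfrac{1}{n}\|\X(\hat{\boldsymbol\theta}^{(T)}_b-\boldsymbol\theta^*)\|^2}_{\text{(A): surrogate training loss}} + \underbrace{\Bigl[\cL(\hat{\boldsymbol\theta}^{(T)}_b;\cP_b) - \tfrac{1}{n}\|\X(\hat{\boldsymbol\theta}^{(T)}_b-\boldsymbol\theta^*)\|^2\Bigr]}_{\text{(B): generalization gap}},
\end{equation*}
and bound the two terms by rather different techniques: classical convex optimization for (A) and uniform convergence for (B).

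For (A), the function $f(\boldsymbol\theta)=\tfrac{1}{n}\|\X(\boldsymbol\theta-\boldsymbol\theta^*)\|^2$ is a convex quadratic with smoothness constant at most $\tr(\X^\top\X/n)\le 1$ under $\|\x\|\le 1$, whose minimum value $0$ is attained at (among others) $\boldsymbol\theta^*$. The bias-training iterates coincide exactly with the GD trajectory on $f$ started at the origin, so the distance from the initialization to the nearest minimizer is at most $\|\boldsymbol\theta^*\|$; the textbook $\cO(1/(\lambda T))$ convergence of GD on smooth convex objectives then delivers $(\text{A})\lesssim \|\boldsymbol\theta^*\|^2/(\lambda(2T+1))$, which is exactly the second term of the lemma.

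For (B), I would apply Proposition~\ref{lem:RademacherComplexity} over a data-independent ball $\cB=\{\boldsymbol\theta:\|\boldsymbol\theta-\boldsymbol\theta^*\|\le\|\boldsymbol\theta^*\|\}$, which contains every bias-training iterate because GD on a noiseless convex quadratic is non-expansive towards the nearest minimizer. The uniform deviation of the per-sample loss $(\x^\top(\boldsymbol\theta-\boldsymbol\theta^*))^2$ over $\cB$ between its population and empirical averages reduces, by symmetrization, to concentration of quadratic forms in $\x$. The sub-Gaussian hypothesis on $w=\x^\top\boldsymbol\theta^*/\sqrt{\boldsymbol\theta^{*,\top}\Sigma_\x\boldsymbol\theta^*}$ then renders $(\x^\top\boldsymbol\theta^*)^2$ sub-exponential with parameter $\sigma_w^2\,\boldsymbol\theta^{*,\top}\Sigma_\x\boldsymbol\theta^*$, and a Bernstein-type inequality yields a deviation of order $\sigma_w^2\,\boldsymbol\theta^{*,\top}\Sigma_\x\boldsymbol\theta^*\sqrt{\log(1/\delta)/n}$, matching the first term of the lemma.

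The main obstacle is this last step: a blunt uniform-convergence bound over $\cB$ would produce $\|\boldsymbol\theta^*\|^2\,\|\Sigma_\x-\X^\top\X/n\|_{op}$, which need not vanish with $n$ in the overparameterized regime $p>n$ and misses the signal-dependent scale $\boldsymbol\theta^{*,\top}\Sigma_\x\boldsymbol\theta^*$. The sharpening hinges on the explicit form $\hat{\boldsymbol\theta}^{(t)}_b-\boldsymbol\theta^* = -(I-\lambda \X^\top\X/n)^t\boldsymbol\theta^*$, which shows that the excess-risk direction is a polynomial in $\X^\top\X/n$ applied to $\boldsymbol\theta^*$; the uniform deviation therefore only needs control along the single signal direction $\boldsymbol\theta^*$, and the one-dimensional concentration powered by the $\sigma_w$ assumption supplies the signal-dependent variance factor.
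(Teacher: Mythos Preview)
Your proposal is correct and follows essentially the same route as the paper: the same two-term split into empirical training loss plus a population/empirical gap, the same optimization bound $\|\boldsymbol\theta^*\|^2/(\lambda(2T+1))$ for the first piece (the paper obtains it by explicitly maximizing $(1-\lambda\sigma_i)^{2t}\sigma_i$ over eigenvalues rather than invoking the generic smooth-convex GD rate, but the conclusion is identical), and the same reduction of the second piece to the one-dimensional sub-exponential concentration of $w^2-1$ yielding $\boldsymbol\theta^{*,\top}\Sigma_\x\boldsymbol\theta^*\,\sigma_w^2\sqrt{\log(2/\delta)/n}$.

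The only place where the paper is more explicit than your sketch is the mechanism that turns ``control along the single signal direction'' into an actual bound. The vector $(I-\tfrac{\lambda}{n}\X^\top\X)^t\boldsymbol\theta^*$ is data-dependent and not literally parallel to $\boldsymbol\theta^*$, so one cannot simply plug $\boldsymbol\theta=\boldsymbol\theta^*$ into the deviation. The paper instead writes the gap as a scalar, hence as the trace of a rank-one matrix $\boldsymbol\theta^*\boldsymbol\theta^{*,\top}$ sandwiched by the contractions $(I-\tfrac{\lambda}{n}\X_0^\top\X_0)^t$, takes a supremum over the design $\X_0$ appearing in the estimator (this is the ``uniform convergence'' step), and then peels off the contractions via $\|\cdot\|_{op}\le 1$ and the identity $|\tr(A)|=\|A\|$ for rank-one $A$. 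This lands exactly on $\bigl|\boldsymbol\theta^{*,\top}(\Sigma_\x-\tfrac{1}{n}\X^\top\X)\boldsymbol\theta^*\bigr|$, which is the quantity your final paragraph targets. Your intuition is right; this rank-one trace trick is the missing line that makes it rigorous.
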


Although we do not present it explicitly, the results in Lemma~\ref{lem:linearRegBER} are derived under uniform convergence frameworks.
Combining Lemma~\ref{lem:linearregDecomposition}, Lemma~\ref{lem:linearRegVER} and Lemma~\ref{lem:linearRegBER} leads to Theorem~\ref{thm:linearRegSummary}.
\end{proofsketch}

\section{Decomposition under General Regimes}
This section mainly discusses how to decompose the excess risk into variance component (VER) and bias component (BER) beyond linear regimes.
Due to a fine-grained analysis on the signal and noise, the decomposition framework improves the stability-based bounds under the general regimes.
We emphasize that the methods derived in this section are nevertheless the only way to reach the decomposition goal and we leave more explorations for future work.
To simplify the discussion, we assume the minimizer $\boldsymbol\theta^*$ of the excess risk exists and is unique.
Before diving into the main theorem, it is necessary to introduce the definition of \emph{local sharpness} for the excess risk function, measuring the smoothness of the excess risk function around the minimizer $\boldsymbol\theta^*$.

\begin{definition}[Local Sharpness]
\label{def:smoothingfactor}
For a given excess risk function $\er_\cL (\boldsymbol\theta; \cP)$ with $\er_\cL (\boldsymbol\theta^*; \cP) = 0$, 
we define $\smoothing>0$ as the local sharpness factor when the following equation holds\footnote{Here $\| \cdot \|$ refers to the $\ell_2$-norm.}:
\begin{equation*}
   0 < \lim_{\boldsymbol\theta: \|  \boldsymbol\theta - \boldsymbol\theta^*\| \to 0} \frac{\er_\cL (\boldsymbol\theta; \cP)}{\| \boldsymbol\theta - \boldsymbol\theta^*\|^\smoothing} < \infty .
\end{equation*}
\end{definition}

\begin{assumption}[Uniform Bound]
\label{assump:uniformbound}
Assume that for excess risk function $\er_\cL (\boldsymbol\theta; \cP)$ with local sharpness~$s$,
given $\cP_{\x}$, for any $\cP_{\y|\x}$ and $\boldsymbol\theta$, the uniform bounds $0<m_u(\cP_{\x}) \leq M_u(\cP_{\x})$ exist:
\begin{equation*}
    m_u(\cP_{\x}) \leq \frac{\er_\cL (\boldsymbol\theta; \cP)}{\| \boldsymbol\theta - \boldsymbol\theta^*\|^\smoothing} \leq  M_u(\cP_{\x}).
\end{equation*}
When the context is clear, we omit the dependency of $\cP_{\x}$ and denote the bounds by $m_u$ and $M_u$. 
\end{assumption}
Roughly speaking, the uniform bounds act as the maximal and minimal eigenvalues of the excess risk function $\er_\cL (\boldsymbol\theta; \cP)$.
For example, under the overparameterized linear regression regimes (Section~\ref{sec::over-lr}), the uniform bounds are the maximum and minimum eigenvalues of the covariance matrix $\Sigma_{\x}$.

\begin{assumption}[\TriCondition, \DBC]
\label{def:triCondition}
We assume the trained parameters $\hat{\boldsymbol\theta}^{(T)}$ are $(a, C, C^\prime)$-bounded at time $T$, namely:
\begin{equation}
\label{eqn:triangleEquation}
    \| \hat{\boldsymbol\theta}^{(T)} - \boldsymbol\theta^* \| \leq a (\| \hat{\boldsymbol\theta}^{(T)}_v - \boldsymbol\theta^*_v \| + \|\hat{\boldsymbol\theta}^{(T)}_b - \boldsymbol\theta^*_b \| ) + \frac{C}{\sqrt{T}} + \frac{C^\prime}{\sqrt{n}},
\end{equation}
where $n$ denotes the size of training samples, $T$ denotes the training time, and $\boldsymbol\theta^*$, $\boldsymbol\theta^*_v$ and $\boldsymbol\theta^*_b $ denote the minimizers of the corresponding population loss $\cL(\boldsymbol\theta; \cP)$, $\cL(\boldsymbol\theta; \cP_v)$, and $\cL(\boldsymbol\theta; \cP_b)$, respectively.
\end{assumption}
We emphasize that although we use $\frac{C}{\sqrt{T}} + \frac{C^\prime}{\sqrt{n}}$ in Assumption~\ref{def:triCondition}, it can be replaced with any  $o(1)$ term which converges to zero when $T, n$ go to infinity. 
Empirically, several experiments demonstrate the generality of \DBC\ and its variety~(see Figure~\ref{fig::simulation}).
From the theoretical perspective, we derive that overparameterized linear regression is $(1, 0, 0)$-bounded in \DBC, and diagonal matrix recovery regimes satisfy \DBC~(see Section~\ref{Sec:MatrixRecovery}).
We next show how to transfer \DBC \ to the Decomposition Theorem in Theorem~\ref{thm:decomposition}.

\begin{figure*}
\begin{centering}
\subfloat[linear regression, $n=300$]{
{\includegraphics[width=0.33\linewidth]{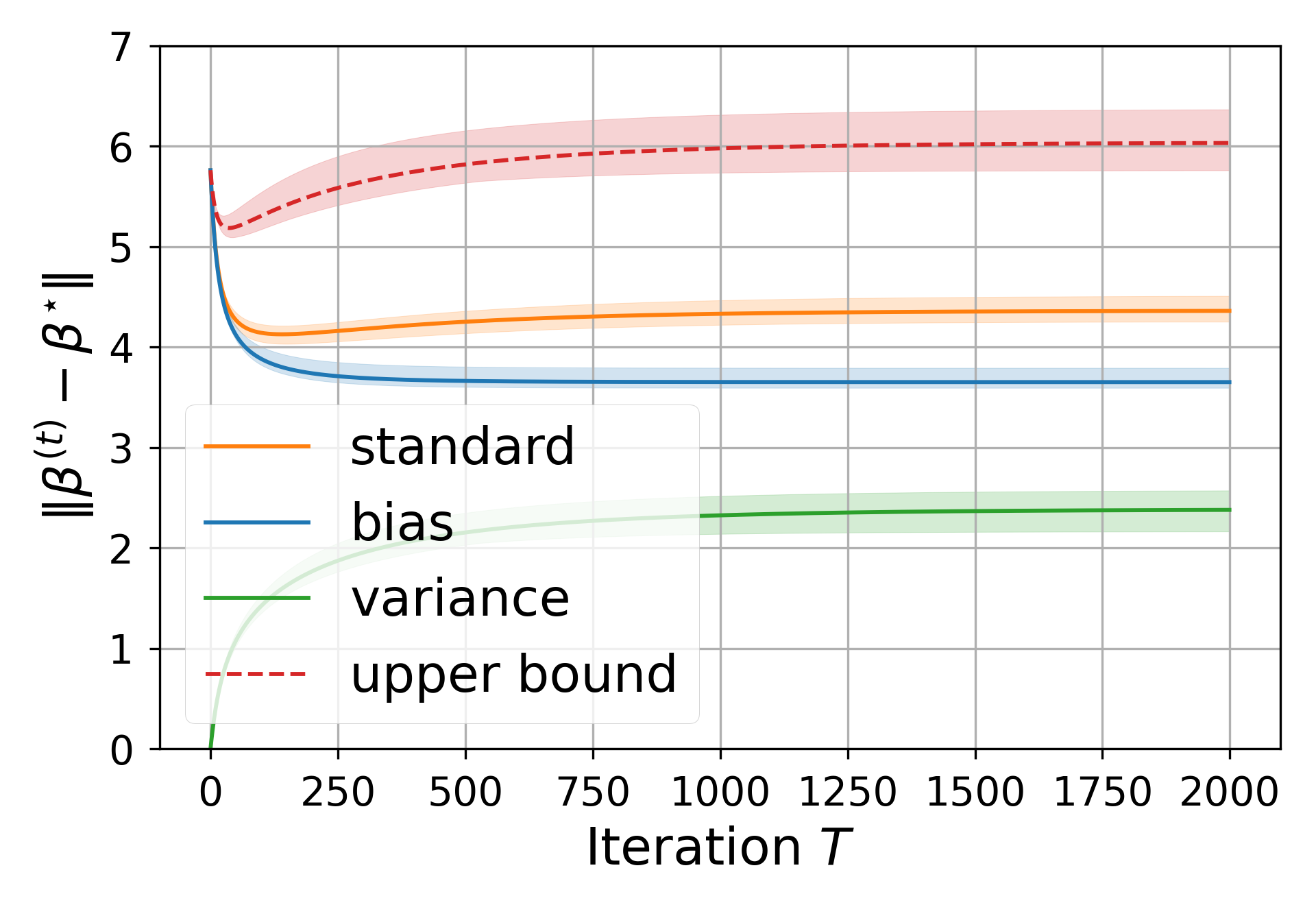}}\label{fig::linear-regression_1}}
\subfloat[matrix recovery, $n=200$]{
{\includegraphics[width=0.34\linewidth]{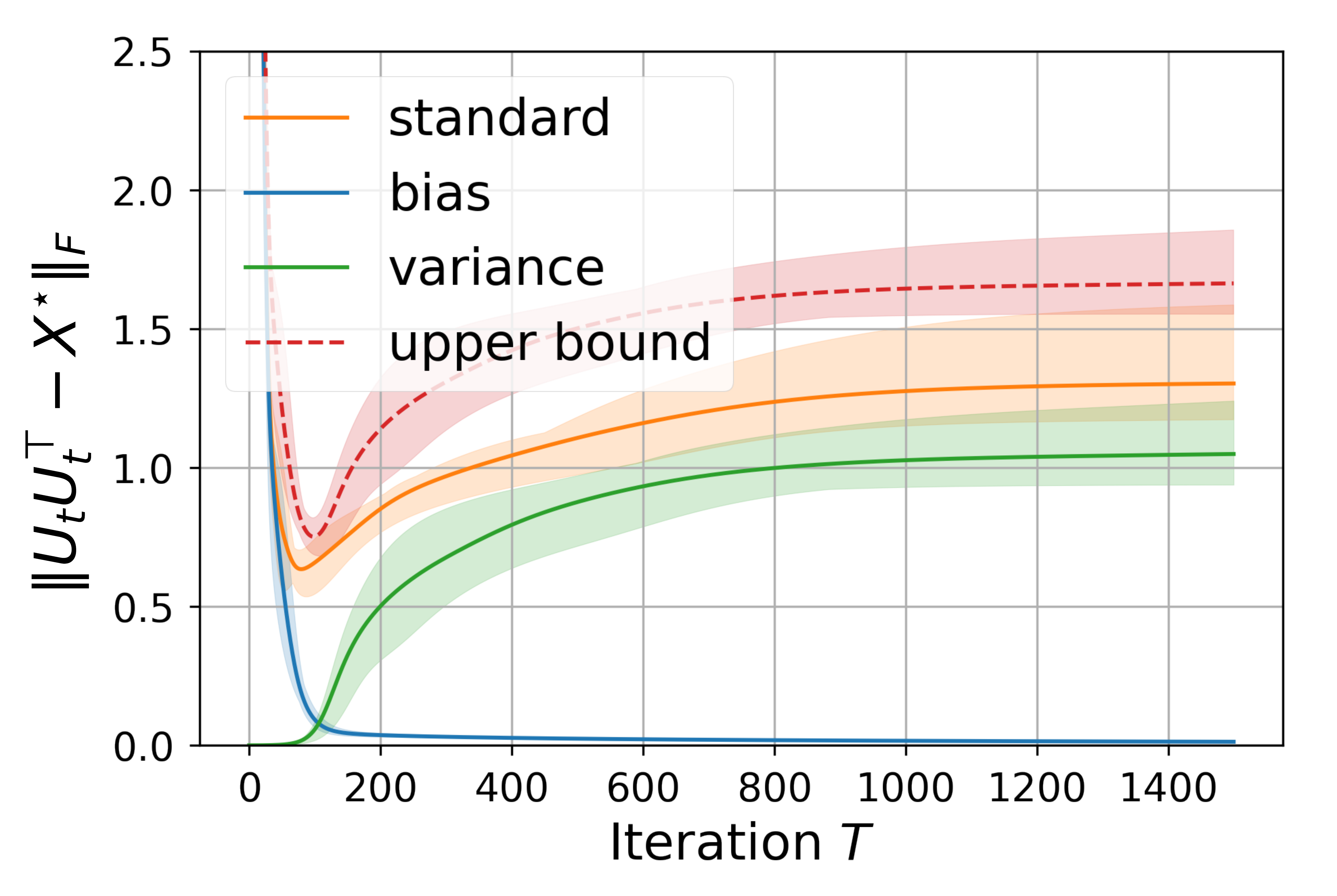}}\label{fig::matrix-recovery_1}} 
\subfloat[neural network, $n=500$]{
{\includegraphics[width=0.33\linewidth]{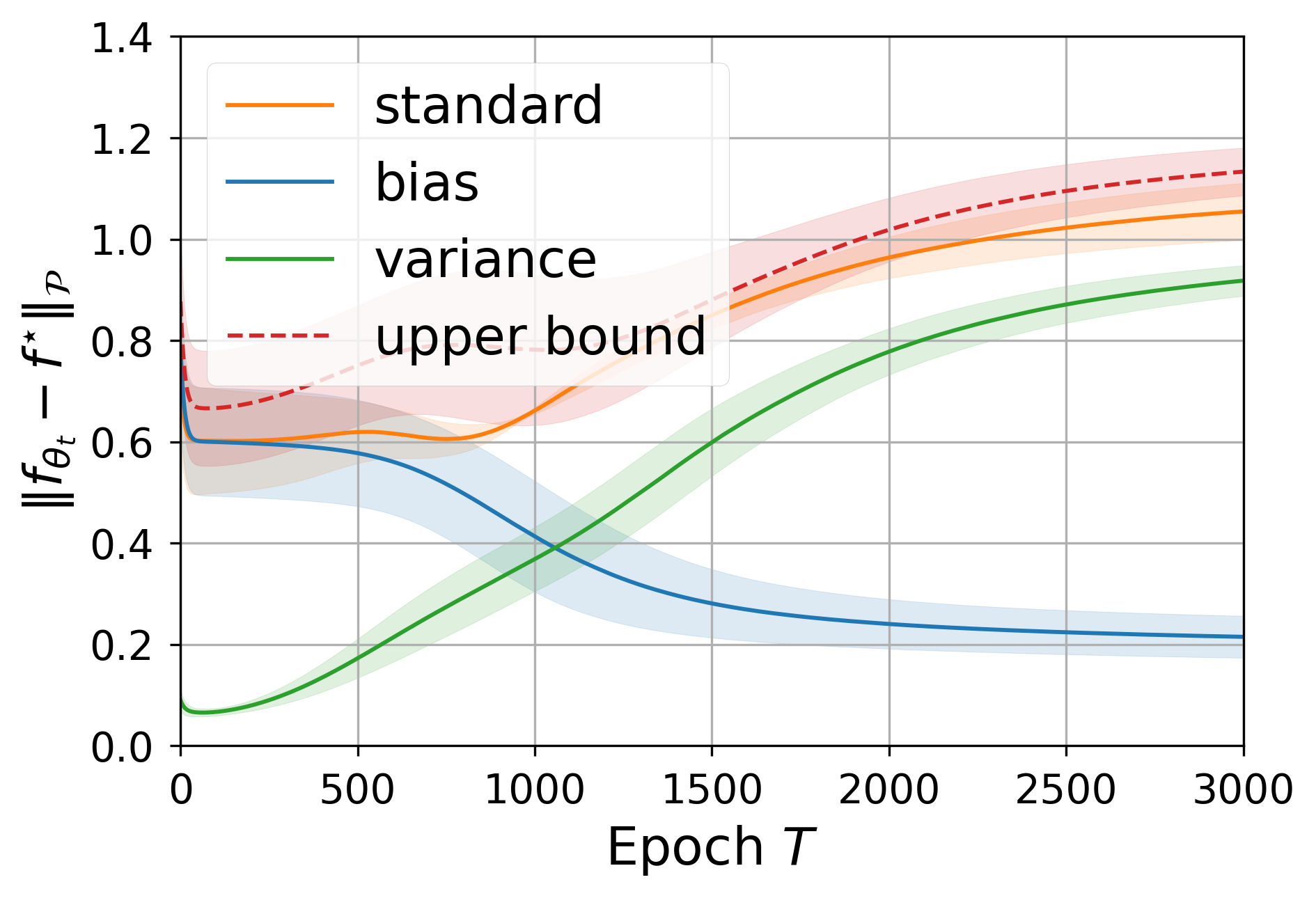}}\label{fig::neural-network_1}}\\
\subfloat[linear regression, $n=800$]{
{\includegraphics[width=0.33\linewidth]{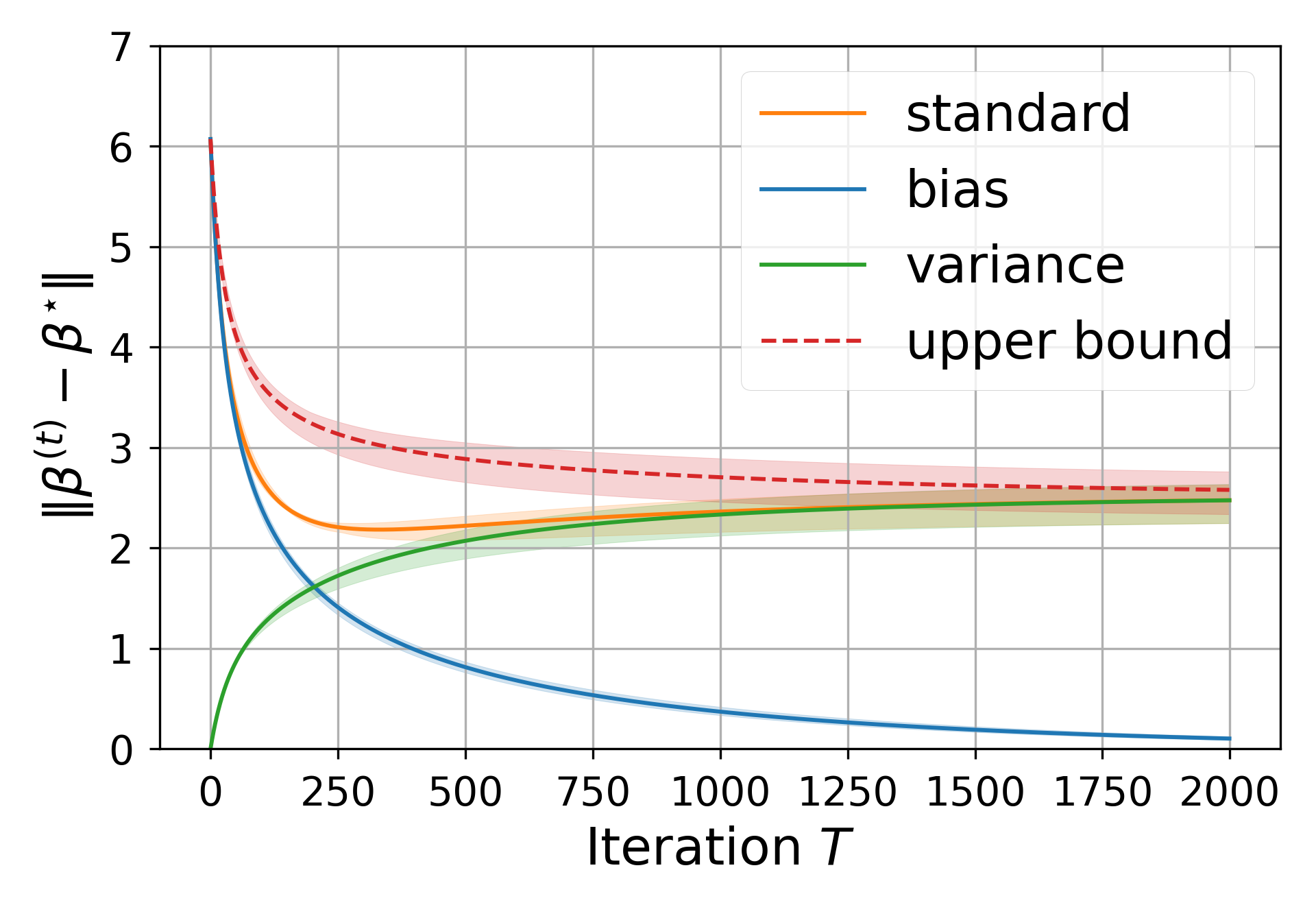}}\label{fig::linear-regression_2}}
\subfloat[matrix recovery, $n=600$]{
{\includegraphics[width=0.34\linewidth]{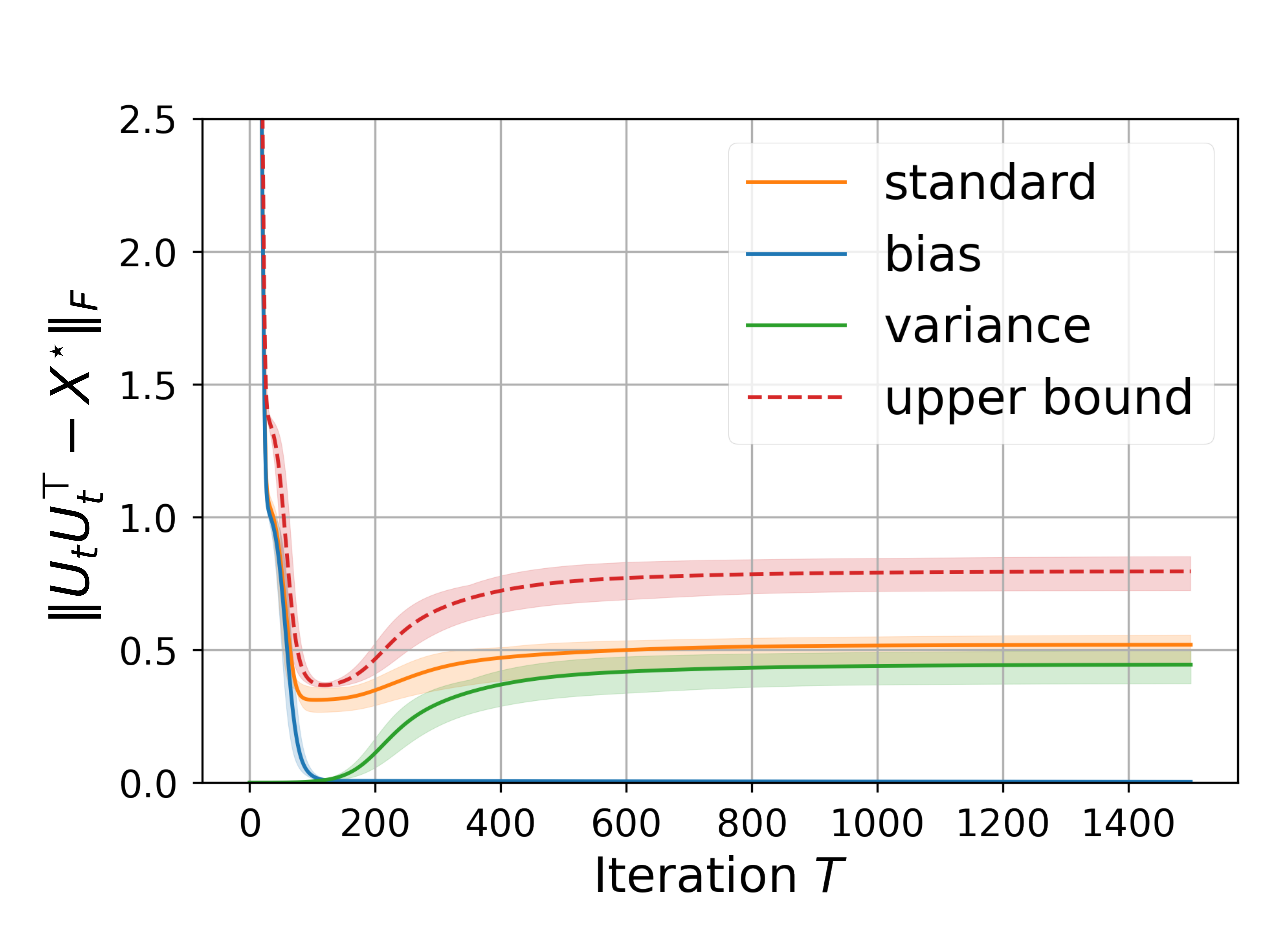}}\label{fig::matrix-recovery_2}}
\subfloat[neural network, $n=1500$]{
{\includegraphics[width=0.33\linewidth]{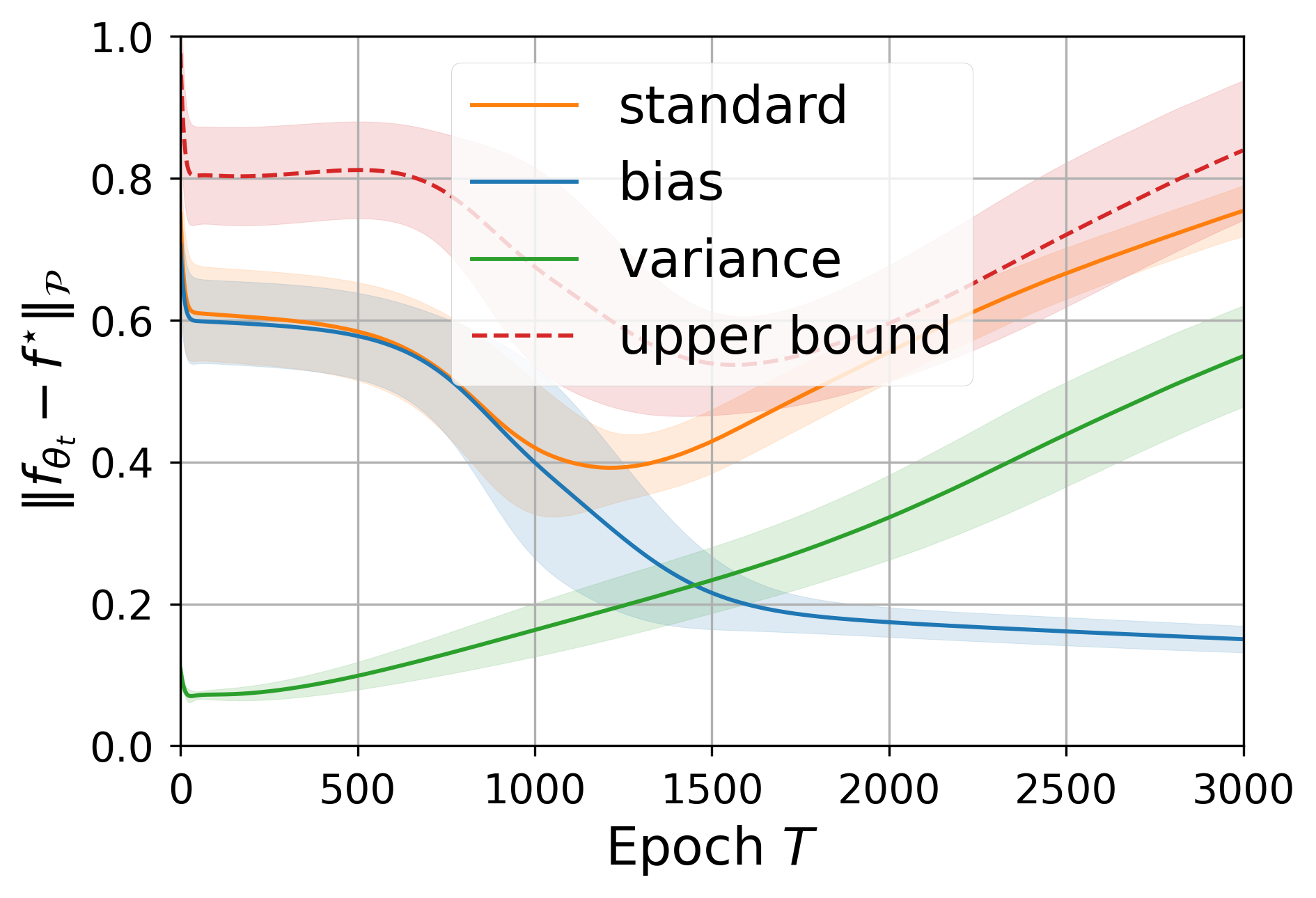}}\label{fig::neural-network_2}}
\end{centering}
\caption{\footnotesize
\DBC\ holds in overparameterized linear regression, {general} matrix recovery, and neural networks.
For linear regression and neural networks, we set $a = 1, C=C^\prime = 0$ in \DBC.
For matrix recovery problems, we set $a = 1, C=0, C^\prime = 8.5$ in \DBC.
The upper bound (the right hand side in \DBC, red line) is over the standard norm (the left hand side in \DBC, orange line), indicating that \DBC\ holds in general.
We defer the experimental details to Appendix~\ref{sec:figillustration}.
}
\label{fig::simulation}
\end{figure*}

\begin{restatable}[Decomposition Theorem]{theorem}{decomposition}
\label{thm:decomposition}
Assume that the excess risk has a unique minimizer with local sharpness factor $\smoothing$.
Under Assumption~\ref{assump:uniformbound} and Assumption~\ref{def:triCondition}, the following decomposition inequality holds:
\begin{equation}
    \er_\cL (\hat{\boldsymbol\theta}^{(T)}; \cP) \leq [4a]^\smoothing \frac{M_u}{m_u} \left[\er^v_\cL(\hat{\boldsymbol\theta}^{(T)}_v; \cP) + \er^b_\cL (\hat{\boldsymbol\theta}^{(T)}_b; \cP) \right] + M_u\left[\frac{4C}{\sqrt{T}}\right]^\smoothing + M_u\left[\frac{4C^\prime}{\sqrt{n}}\right]^\smoothing.
\end{equation}
\end{restatable}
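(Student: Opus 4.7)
The plan is to sandwich the excess risk between powers of the parameter distance to the minimizer, and then use the Dynamics Decomposition Condition to split the distance into the variance and bias pieces. All three ingredients—local sharpness, the uniform bound, and \DBC—combine almost mechanically once the inequalities are lined up in the right order.

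First I would apply the upper uniform bound from Assumption~\ref{assump:uniformbound} to the standard training excess risk, giving
\begin{equation*}
\er_\cL(\hat{\boldsymbol\theta}^{(T)};\cP) \;\leq\; M_u\,\bigl\|\hat{\boldsymbol\theta}^{(T)} - \boldsymbol\theta^*\bigr\|^{s}.
\end{equation*}
Next I would invoke Assumption~\ref{def:triCondition} to decompose the norm on the right into four nonnegative pieces: $a\|\hat{\boldsymbol\theta}^{(T)}_v - \boldsymbol\theta^*_v\|$, $a\|\hat{\boldsymbol\theta}^{(T)}_b - \boldsymbol\theta^*_b\|$, $C/\sqrt{T}$, and $C'/\sqrt{n}$. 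The next step is to raise this four-term sum to the power $s$; to get a clean expression I would use the elementary bound $(x_1 + x_2 + x_3 + x_4)^s \leq 4^{s}(x_1^s + x_2^s + x_3^s + x_4^s)$, which follows from Jensen's inequality when $s \geq 1$ and from subadditivity of $x \mapsto x^{s}$ when $0<s<1$ (the constant $4^s$ absorbs both cases uniformly, which is presumably why the authors state the theorem with $4^s$ rather than the tighter $4^{s-1}$).

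After this step I am left with two parameter-distance terms, $\|\hat{\boldsymbol\theta}^{(T)}_v - \boldsymbol\theta^*_v\|^{s}$ and $\|\hat{\boldsymbol\theta}^{(T)}_b - \boldsymbol\theta^*_b\|^{s}$, which I would convert back into excess risks using the \emph{lower} uniform bound from Assumption~\ref{assump:uniformbound} applied to the variance distribution $\cP_v$ and the bias distribution $\cP_b$ respectively. Because Assumption~\ref{assump:uniformbound} fixes $\cP_{\x}$ and ranges over all conditional distributions $\cP_{\y|\x}$, the same lower constant $m_u$ applies to both $\er^v_\cL(\cdot;\cP)$ and $\er^b_\cL(\cdot;\cP)$—this is the only place where the strength of the uniform bound assumption genuinely matters, and I would flag it explicitly in the writeup. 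This yields $\|\hat{\boldsymbol\theta}^{(T)}_v - \boldsymbol\theta^*_v\|^{s} \leq \er^v_\cL(\hat{\boldsymbol\theta}^{(T)}_v;\cP)/m_u$ and the analogous inequality for the bias term.

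Putting everything together gives exactly
\begin{equation*}
\er_\cL(\hat{\boldsymbol\theta}^{(T)};\cP) \;\leq\; [4a]^{s}\frac{M_u}{m_u}\bigl[\er^v_\cL(\hat{\boldsymbol\theta}^{(T)}_v;\cP) + \er^b_\cL(\hat{\boldsymbol\theta}^{(T)}_b;\cP)\bigr] + M_u\Bigl[\frac{4C}{\sqrt{T}}\Bigr]^{s} + M_u\Bigl[\frac{4C'}{\sqrt{n}}\Bigr]^{s},
\end{equation*}
after collecting the $a^s$ factor into $[4a]^s$ and the residual $C,C'$ terms into $[4C/\sqrt{T}]^s$ and $[4C'/\sqrt{n}]^s$. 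There is no real analytical obstacle here—the proof is essentially algebraic—but the subtle point deserving care is the uniformity direction of Assumption~\ref{assump:uniformbound}: one needs the \emph{same} constants $m_u, M_u$ to govern all three excess risk functions, which is only legitimate because the assumption is uniform over conditional distributions. If one weakened the assumption to distribution-dependent constants $m_u(\cP), M_u(\cP_v), M_u(\cP_b)$, extra factors would appear and the clean statement above would break.
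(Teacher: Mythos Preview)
Your proposal is correct and follows essentially the same route as the paper: apply the upper uniform bound to pass from excess risk to $\|\hat{\boldsymbol\theta}^{(T)}-\boldsymbol\theta^*\|^s$, use \DBC\ to split into four nonnegative terms, use the elementary inequality $(x_1+\cdots+x_4)^s \le (4x_1)^s+\cdots+(4x_4)^s$, and then convert the parameter-distance terms back to excess risks via the lower uniform bound (the paper interleaves the last two steps slightly differently, applying the lower bound before the power inequality, but this is cosmetic). Your remark that the uniformity of $m_u$ over $\cP_{\y|\x}$ is what makes the same constant apply to $\cP_v$ and $\cP_b$ is exactly the point the paper flags as well.
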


We defer the proofs to Appendix~\ref{proof:decomposition} due to limited space.
In Theorem~\ref{thm:decomposition}, Assumption~\ref{assump:uniformbound} and Assumption~\ref{def:triCondition} focus on the population loss and the training dynamics individually.
We remark that our proposed generalization bound is algorithm-dependent due to parameters $(a, C, C^\prime)$ in \DBC.

By Theorem~\ref{thm:decomposition}, we successfully decompose the excess risk dynamics into VER and BER.
The remaining question is how to bound VER and BER individually.
As we will derive in Section~\ref{Sec:MatrixRecovery} via a case study on diagonal matrix recovery, we bound VER by stability techniques and BER by uniform convergence.

\subsection{Diagonal Matrix Recovery}
\label{Sec:MatrixRecovery}

From the previous discussion, we now conclude the basic procedure when applying the decomposition framework under Theorem~\ref{thm:decomposition}.
Generally, there are three questions to be answered in a specific problem:
(a) Does the problem satisfy \DBC? 
(b) How to bound VER using stability?
(c) How to bound BER using uniform convergence? 
To better illustrate how to solve the three questions individually, we provide a case study on diagonal matrix recovery, a special and simplified case of a general low-rank matrix recovery problem but still retains the key properties. 
The diagonal matrix recovery regime and its variants are studied as a prototypical problem in \citet{gissin2019implicit, haochen2021shape}.

\textbf{Settings.} 
Let $X^\star =\diag\{\sigma_1^{\star},\cdots,\sigma_r^{\star},0,\cdots,0\}  \in \bR^{d\times d}$ denote a rank-$r$ (low rank) diagonal matrix, where~$\sigma_1^{\star}\geq \cdots \geq\sigma_r^{\star}>0$.
We define $A \in \bR^{d\times d}$ as the measurement matrix which is diagonal and $y \in \bR^{d}$ as the measurement vector.
Assume the ground truth $y_i = A_i \sigma_i^\star + \err_i$ where $y_i, A_i$ denotes their corresponding $i$-th element, $\epsilon_i$ denotes the noise, and $\sigma_i^\star = 0$ if $r<i\leq d $.
Our goal is to recover $X^{\star}$ from dataset $\cD = \{ A^{(i)}, y^{(i)} \}_{i \in [n]}$ with $n$ training samples. 

In the training process, we use the diagonal matrix $U=\diag\{u_1,\cdots,u_d\}$ to predict $X^\star$ via $\hat{X}=UU^{\top}$.
Given the loss function\footnote{We use $\odot$ to represent the Hadamard product.} $\cL_n(U)=\frac{1}{n}\sum_{j=1}^{n}\norm{y^{(j)}-A^{(j)}\odot UU^{\top}}^2$, we train the model with Gradient Flow\footnote{We use gradient flow instead of gradient descent mainly due to technical simplicity.} $\dot{U}_t=-\nabla \cL_n(U_t)$.
During the analysis, we assume that $A_i$ is $\cO(1)$-uniformly bounded with unit variance, and $\err_i$ is $\nu^2$-subGaussian with uniform bounds $|\err_i^{(j)}|\leq V$.
We emphasize that the training procedure of diagonal matrix recovery is fundamentally different from linear regimes since $\cL_n(U)$ is non-convex with respect to $U$.

\begin{restatable}[Decomposition Theorem in Diagonal Matrix Recovery]{theorem}{MatrixSummary}
\label{theorem::matrix-recovery}
    In the diagonal matrix recovery settings, if we set the initialization $U_0=\alpha I$ where $\alpha>0$ is the initialization scale, then for any $0<t<\infty$, with probability at least $1-\delta$, we have:
\begin{equation}
    \begin{aligned}
        \er_\cL (U_t; \cP)&\lesssim\left(\norm{X^{\star}}_F^2+dV^2+d\alpha^4\right)\sqrt{\frac{\log\left(d/\delta\right)}{n}}+\sum_{j=1}^{r}\frac{\sigma^4_j}{\sigma^2_j+\alpha^4 e^{\Omega(\sigma_jt)}}+\frac{\alpha^2d}{t}\\
        &+\frac{dV^2(t+1)}{n}\log(n)\log\left(\frac{2dn}{\delta}\right)+ d\alpha^2+\log^2\left(\frac{1}{\alpha^2}\right)\frac{r\nu^2\log\left(r/\delta\right)}{n}.
    \end{aligned}
    \label{eq::theorem13}
\end{equation}
\end{restatable}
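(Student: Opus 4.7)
The overall plan is to apply the Decomposition Theorem (Theorem~\ref{thm:decomposition}) to the diagonal matrix recovery problem, which reduces the task to three subgoals: (a) verifying the Dynamics Decomposition Condition (\DBC), (b) bounding the variance excess risk (VER) of the \emph{variance training} trajectory via stability, and (c) bounding the bias excess risk (BER) of the \emph{bias training} trajectory via a direct analysis of the noiseless gradient flow. Because the loss is a quadratic in $X = UU^{\top}$, I would first identify the local sharpness factor as $\smoothing = 2$ with uniform bounds $m_u, M_u$ controlled by the variance of the diagonal entries of $A$ (which is $\cO(1)$ by assumption), so that both Assumption~\ref{assump:uniformbound} and the setup of Theorem~\ref{thm:decomposition} apply cleanly in the $X$-space.

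For step (a), the plan is to decouple the gradient flow $\dot U_t = -\nabla \cL_n(U_t)$ starting from $U_0 = \alpha I$ into a ``signal'' and a ``noise'' flow. Because the problem is diagonal and the measurements act entrywise, each coordinate $u_i$ evolves independently, which lets me compare the coordinate-wise evolutions of $U_t$, $U_{b,t}$ (bias training with $\err = 0$) and $U_{v,t}$ (variance training on pure noise initialized at $\alpha I$). By integrating the coordinate-wise ODEs and using Lipschitz continuity of the flow in its input data, I would show $\|U_t U_t^{\top} - X^{\star}\|_F \le \|U_{b,t} U_{b,t}^{\top} - X^{\star}\|_F + \|U_{v,t} U_{v,t}^{\top}\|_F + o(1)$, establishing \DBC\ with $a = 1$ and small residual terms of order $1/\sqrt{n}$ from concentration of the design. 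This decoupling is the \emph{main obstacle}: the loss is non-convex and the flows on signal and noise interact multiplicatively through $UU^{\top}$, so I would likely need to split the parameter into its ``top-$r$'' block (governed by the bias dynamics of \citet{li2018algorithmic, gissin2019implicit}) and its ``tail'' block (which remains close to $\alpha I$ until very late times).

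For step (b), since the variance training only sees pure noise, I would argue that $U_{v,t}$ stays close to $\alpha I$ and the loss along this trajectory is smooth with a controllable Hessian whose operator norm grows at most linearly in $t$. This lets me invoke Proposition~\ref{lem:stability}: the algorithmic stability parameter of gradient flow at time $t$ is $\epsilon = \cO(Vt/n)$ after bounding per-step gradient sensitivity by $\cO(V/n)$ and integrating, which yields the $dV^2(t+1)/n \cdot \log(n)\log(2dn/\delta)$ contribution after a union bound over the $d$ diagonal coordinates. For step (c), the BER term comes from combining the explicit convergence analysis of the noiseless diagonal flow (yielding the $\sum_j \sigma_j^4 / (\sigma_j^2 + \alpha^4 e^{\Omega(\sigma_j t)})$ term for signal coordinates $j \le r$ and a $d\alpha^2$ residual from the untouched tail coordinates) with uniform convergence on the set $\{U : \|U\|_{\mathrm{op}} \le \text{poly}(\|X^\star\|, \log(1/\alpha^2))\}$, which the bias trajectory provably stays in; Rademacher complexity over this set produces the $\log^2(1/\alpha^2) \cdot r\nu^2 \log(r/\delta)/n$ factor.

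Finally, I would assemble the bound by plugging the VER and BER estimates into Theorem~\ref{thm:decomposition} with $\smoothing = 2$, and collect the remaining $\sqrt{\log(d/\delta)/n}$ concentration terms arising from approximating the population excess risk by its empirical counterpart --- these scale with $\norm{X^\star}_F^2 + dV^2 + d\alpha^4$ because the random variables being concentrated are the squared residuals $(y_i - A_i u_i^2)^2$ whose sub-exponential norms depend on $\|X^\star\|_F$, the noise bound $V$, and the initialization scale $\alpha$. The $\alpha^2 d / t$ term emerges from the $C/\sqrt{T}$ residual in \DBC\ after reparameterizing in the $X$-space (so $[C/\sqrt{T}]^{\smoothing} = C^2/T$), and completing the union bound over the $r$ signal and $d - r$ tail coordinates closes the argument.
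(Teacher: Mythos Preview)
Your high-level strategy matches the paper --- apply Theorem~\ref{thm:decomposition} with $\smoothing=2$ and $M_u/m_u=\cO(1)$ in the $X=UU^\top$ space, establish \DBC\ coordinate-by-coordinate, bound VER by stability, and bound BER by analyzing the noiseless flow plus uniform convergence --- but you have swapped the provenance of two terms in a way that would make concrete steps fail.

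The term $\log^2(1/\alpha^2)\, r\nu^2\log(r/\delta)/n$ cannot come from a Rademacher bound on the bias trajectory: bias training uses the \emph{noiseless} data $(X,\bE[Y\mid X])$, so no $\nu$ can appear anywhere in the BER analysis. In the paper this term is exactly the $M_u[4C'/\sqrt{n}]^{\smoothing}$ contribution from Theorem~\ref{thm:decomposition}. The \DBC\ (Lemma~\ref{lem::triangle-ineq-matrix}) is proved with $a=1$, $C=0$, and $C'=\cO\bigl(\log(1/\alpha^2)\sqrt{r\nu^2\log(r/\delta)}\bigr)$: on each signal coordinate one compares the closed-form ODE solutions and shows $|u^2(t)-u_b^2(t)|=\cO\bigl(\log(1/\alpha^2)\,|\sigma_2|\bigr)$ \emph{uniformly} in $t$, where $\sigma_2=\tfrac{1}{n}\sum_i a_i\varepsilon_i$; concentration of these $r$ cross terms gives $C'$, and squaring $C'/\sqrt{n}$ yields the term. (A generic ``Lipschitz continuity of the flow'' argument is too weak here --- the uniform-in-$t$ bound relies on the explicit logistic-type form of the solution.) Conversely, since $C=0$ there is no $[C/\sqrt{T}]^{\smoothing}$ residual: the $\alpha^2 d/t$ term comes from BER (Lemma~\ref{lem:mrlr}), via the tail-coordinate solution $u_b^2(t)=\alpha^2/(\xi\alpha^2 t+1)$ whose squared contribution summed over the $d-r$ tail coordinates is $\lesssim d\alpha^2/t$. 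Likewise the standalone $d\alpha^2$ term belongs to VER (Lemma~\ref{lem::variance-matrix}), not BER. Once you fix this ledger --- \DBC\ residual $\to \log^2(1/\alpha^2)r\nu^2/n$; BER $\to$ the signal sum, $\alpha^2 d/t$, and $(\|X^\star\|_F^2+d\alpha^4)\sqrt{\log(d/\delta)/n}$; VER $\to dV^2(t+1)/n\cdot\log\cdot\log$, $d\alpha^2$, and $dV^2\sqrt{\log(d/\delta)/n}$ --- the rest of your plan goes through essentially as in the paper.
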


By setting $\alpha=\Theta\left((d^2n)^{-\frac{1}{4}}\right)$ and $t=\Theta\left(\log\left(dn\sigma_r\right)/\sigma_r\right)$ in Theorem~\ref{theorem::matrix-recovery}, the upper bound is approximately $\tilde{\cO}\left(\frac{dV^2+\norm{X^{\star}}_F^2}{\sqrt{n}}\right)$, indicating that the estimator at time $t$ is consistent. 

\underline{Comparison with stability-based bound.} 
The stability-based bound (without applying the decomposition framework) is $\tilde{\cO}\left(\frac{dV^2+\norm{X^{\star}}_F^2}{\sqrt{n}}+\frac{(dV^2+\norm{X^{\star}}_{\star})(t+1)}{n}\right)$.
As a comparison, the bound in Theorem~\ref{theorem::matrix-recovery} outperforms the stability-based bound in the sense that we deduce the coefficient of the blowing-up term $\frac{t+1}{n}$ from $dV^2+\norm{X^{\star}}_{\star}$ to $dV^2$, which is a significant improvement with a large signal-to-noise ratio under large time $t$.
Besides, the bound in Theorem~\ref{theorem::matrix-recovery} captures the bias-variance tradeoff, {meaning that the excess risk curve falls first and then rises.}

\begin{proofsketch}
We defer the proof details to Appendix~\ref{proof:martix} due to space limitations.
Similar to Section~\ref{sec::over-lr}, the proof of Theorem~\ref{theorem::matrix-recovery} consists of the following three lemmas, corresponding to the \DBC, BER, and VER terms separately.
We first prove that diagonal matrix recovery regimes satisfy \DBC in Lemma~\ref{lem::triangle-ineq-matrix}.

\begin{restatable}{lemma}{Trimatrix}
\label{lem::triangle-ineq-matrix}
Under the assumptions in Theorem~\ref{theorem::matrix-recovery}, for any $0\leq t<\infty$, with probability at least $1-\delta$, we have
\begin{equation}
    \norm{U_tU_t^{\top}-X^{\star}}_F\leq \norm{U_t^bU_t^{b,\top}-X^{\star}}_F+\norm{U_t^vU_t^{v,\top}}_F+\cO\left(\log\left(\frac{1}{\alpha^2}\right)\sqrt{\frac{r\nu^2\log\left(r/\delta\right)}{n}}\right).
\end{equation}
\end{restatable}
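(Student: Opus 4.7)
The plan is to exploit the diagonal structure so that the gradient flow decouples into $d$ independent scalar ODEs and then prove the triangle bound entrywise. Writing $u_i$, $u^b_i$, $u^v_i$ for the diagonal entries of $U_t$, $U_t^b$, $U_t^v$, and introducing the empirical quantities $\hat\sigma_i := \frac{1}{n}\sum_j (A^{(j)}_i)^2$ and $\bar\err_i := \frac{1}{n}\sum_j A^{(j)}_i \err_i^{(j)}$, the standard, bias, and variance flows are
\begin{equation*}
\dot u_i = 4u_i[\hat\sigma_i(\sigma_i^\star - u_i^2)+\bar\err_i], \qquad \dot u^b_i = 4u^b_i\hat\sigma_i(\sigma_i^\star-(u^b_i)^2), \qquad \dot u^v_i = 4u^v_i[\bar\err_i - \hat\sigma_i(u^v_i)^2],
\end{equation*}
all starting from $\alpha$. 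By the Frobenius triangle inequality it suffices to bound the residual $R_t := U_tU_t^\top - U_t^bU_t^{b,\top} - U_t^vU_t^{v,\top}$, whose $(i,i)$-entry is $\Delta_i(t) := u_i^2 - (u^b_i)^2 - (u^v_i)^2$.

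First I would split into off-support ($i>r$) and on-support ($i\leq r$) blocks. For $i>r$ the signal vanishes, so the standard and variance ODEs are identical and uniqueness gives $u_i(t)\equiv u^v_i(t)$; the bias flow reduces to $\dot u^b_i = -4\hat\sigma_i(u^b_i)^3$, which only shrinks $u^b_i$, so $\Delta_i(t) = -(u^b_i(t))^2\in[-\alpha^2,0]$. These $d-r$ terms contribute at most $\sqrt{d}\,\alpha^2$, which is absorbed into the $d\alpha^2$ term already in Theorem~\ref{theorem::matrix-recovery}.

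For the on-support block I would pass to log variables $\ell_i := \ln u_i^2$ (and similarly for $b,v$); this linearizes the signal contribution and yields the clean identity
\begin{equation*}
\tfrac{d}{dt}(\ell_i - \ell^b_i - \ell^v_i) = -8\hat\sigma_i \Delta_i, \qquad (\ell_i - \ell^b_i - \ell^v_i)(0) = -\ln\alpha^2,
\end{equation*}
equivalently $u_i^2 = \alpha^{-2}(u^b_i)^2(u^v_i)^2 \exp\!\bigl(-\int_0^t 8\hat\sigma_i\Delta_i\,ds\bigr)$. I would then run a bootstrap/Gronwall argument over a stopping time to show that $\int_0^t 8\hat\sigma_i|\Delta_i|\,ds$ stays small, which in turn pins $u_i^2$ to the product surrogate $\alpha^{-2}(u^b_i)^2(u^v_i)^2$ up to a multiplicative error of order $|\bar\err_i|/\hat\sigma_i$. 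A direct algebraic expansion then gives $|\Delta_i(t)|\lesssim \log(1/\alpha^2)\,|\bar\err_i|/\hat\sigma_i$, where the extra $\log(1/\alpha^2)$ factor arises because on-support coordinates spend time $\cO(\log(1/\alpha^2)/\sigma_i^\star)$ in the exponential escape phase during which errors accumulate. Combining this pointwise bound with a standard sub-Gaussian concentration $|\bar\err_i|\lesssim \nu\sqrt{\log(r/\delta)/n}$, which by a union bound holds uniformly over $i\leq r$ with probability $1-\delta$, and summing squares, yields
\begin{equation*}
\sqrt{\sum_{i\leq r}\Delta_i^2(t)}\;\lesssim\; \log(1/\alpha^2)\sqrt{\frac{r\nu^2\log(r/\delta)}{n}}.
\end{equation*}
Plugging this and the off-support bound into the Frobenius triangle inequality gives the lemma.

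The main obstacle will be controlling $\int_0^t \hat\sigma_i \Delta_i$ in the on-support bootstrap: the three ODEs are nonlinearly coupled through the quadratic feedback, so naive Gronwall amplifies errors exponentially in $t\sigma_i^\star$, not logarithmically in $1/\alpha^2$. The key is to separate the three intrinsic time-scales of the scalar problem — bias saturates in time $\cO(\log(1/\alpha^2)/\sigma_i^\star)$ while variance on-support stays $\cO(\alpha)$ until time $\cO(\log(1/\alpha^2)/|\bar\err_i|)$ — and to exploit the fact that the feedback kernel $\hat\sigma_i\Delta_i$ inherits the smallness of $\bar\err_i$ throughout the relevant window; once this is in place the remaining steps are routine concentration and algebra.
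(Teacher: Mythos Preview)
Your coordinate-wise reduction, the off-support observation, and the log-variable identity $\tfrac{d}{dt}(\ell_i-\ell_i^b-\ell_i^v)=-8\hat\sigma_i\Delta_i$ are all correct, and this route is genuinely different from the paper's. The paper never runs a bootstrap or Gronwall argument: it writes down the explicit logistic solutions
\[
u^2(t)=\frac{\sigma\alpha^2e^{2\sigma t}}{\sigma-\xi\alpha^2+\xi\alpha^2e^{2\sigma t}},\qquad u_b^2(t)=\frac{\sigma_1\alpha^2e^{2\sigma_1t}}{\sigma_1-\xi\alpha^2+\xi\alpha^2e^{2\sigma_1t}},
\]
(with $\sigma_1=\hat\sigma_i\sigma_i^\star$, $\sigma_2=\bar\varepsilon_i$, $\sigma=\sigma_1+\sigma_2$, $\xi=\hat\sigma_i$), and on the support bounds $|u^2(t)-u_b^2(t)|$ directly: after using $e^{2\sigma t}-e^{2\sigma_1 t}\le 2|\sigma_2|t\,e^{2\sigma t}$, everything reduces to the scalar inequality
\[
\frac{\sigma\sigma_1\alpha^2 t\,e^{2\sigma t}}{(\sigma+\xi\alpha^2e^{2\sigma t})(\sigma_1+\xi\alpha^2e^{2\sigma_1 t})}\;\le\;\log\!\Big(\tfrac{1}{\alpha^2}\Big),
\]
which is verified by elementary calculus. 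The variance flow $u_v^2$ plays no role on the support; it is only invoked off the support where $u^2\equiv u_v^2$.

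The gap in your plan is the claim that $\int_0^t 8\hat\sigma_i|\Delta_i|\,ds$ ``stays small'' so that $u_i^2$ is pinned to the product surrogate $\alpha^{-2}(u_i^b)^2(u_i^v)^2$ up to a multiplicative $1+O(|\bar\varepsilon_i|/\hat\sigma_i)$. This cannot hold uniformly in $t$. For $\bar\varepsilon_i>0$, sending $t\to\infty$ in your own identity gives
\[
e^{-I(\infty)}=\frac{\alpha^2\,u_i^2(\infty)}{(u_i^b(\infty))^2(u_i^v(\infty))^2}=\frac{\alpha^2(\sigma_i^\star+\bar\varepsilon_i/\hat\sigma_i)}{\sigma_i^\star\cdot\bar\varepsilon_i/\hat\sigma_i}=\Theta\!\left(\frac{\alpha^2\hat\sigma_i}{\bar\varepsilon_i}\right),
\]
so $I(\infty)=\Theta(\log(\bar\varepsilon_i/\alpha^2))$ is large and the surrogate overshoots $u_i^2$ by a factor $\Theta(\bar\varepsilon_i/(\alpha^2\hat\sigma_i))$. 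For $\bar\varepsilon_i<0$ the situation is worse: $(u_i^v)^2\to 0$ but $u_i^2\to\sigma_i^\star+\bar\varepsilon_i/\hat\sigma_i>0$, so $\Delta_i(\infty)=\bar\varepsilon_i/\hat\sigma_i\neq 0$ and $I(t)\to-\infty$ linearly. In either case the bootstrap hypothesis fails well before the bias saturates, and a naive closing of the loop does amplify exponentially, exactly as you fear in your last paragraph. Your time-scale separation idea could likely be made to work, but you would need a genuinely different argument in the post-saturation phase (e.g.\ comparing $u_i^2$ directly to its fixed point rather than to the product surrogate); as written, the on-support step is not complete. The paper's closed-form route sidesteps all of this at the cost of being specific to the logistic ODE.
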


Different from the overparameterized linear regression regimes, Lemma~\ref{lem::triangle-ineq-matrix} suffers from a $\cO\left(\frac{1}{\sqrt{n}}\right)$ factor due to the non-linearity.
This phenomenon can be verified empirically in Figure~\ref{fig::matrix-recovery_1} and Figure~\ref{fig::matrix-recovery_2}. 
We next apply stability techniques and uniform convergence to tackle VER and BER in Lemma~\ref{lem::variance-matrix} and Lemma~\ref{lem:mrlr}, respectively.

\begin{restatable}{lemma}{VERmatrix}
\label{lem::variance-matrix}
    For VER, under the assumptions in Theorem~\ref{theorem::matrix-recovery}, with probability at least $1-\delta$, we have
    \begin{equation}
        \er^v_\cL (U_t; \cP)\lesssim \frac{dV^2(t+1)}{n}\log(n)\log\left(\frac{2dn}{\delta}\right)+ d\alpha^2+dV^2\sqrt{\frac{\log\left(2d/\delta\right)}{n}}.
    \end{equation}
\end{restatable}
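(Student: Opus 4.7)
The plan is to mirror the three-term splitting used in Lemma~\ref{lem:linearRegVER} but specialized to the diagonal matrix recovery problem. Since $\bE[\err\mid A]=0$ and $A$ has unit-variance diagonal entries, the population noise loss splits as $\cL(U;\cP_v)=\bE\|\err\|^2+\sum_i (UU^\top)_{ii}^2$, so $U_v^\star=\bm 0$ is a minimizer; writing the excess risk around this reference via Equation~\eqref{eqn:splitexcessrisk} gives
\[
\er^v_\cL(U_t^v;\cP)=\underbrace{[\cL(U_t^v;\cP_v)-\cL(U_t^v;\cPhat_v)]}_{\text{(I) generalization gap}}+\underbrace{[\cL(U_t^v;\cPhat_v)-\cL(\bm 0;\cPhat_v)]}_{\text{(II) empirical optimization gap}}+\underbrace{[\cL(\bm 0;\cPhat_v)-\cL(\bm 0;\cP_v)]}_{\text{(III) noise concentration}}.
\]
The strategy is to match these three pieces to the three summands on the right-hand side of the lemma.

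Term (III) is a concentration of $\tfrac{1}{n}\sum_j\|\err^{(j)}\|^2$ around $\bE\|\err\|^2$; because $|\err_i^{(j)}|\leq V$, each summand lies in $[0,dV^2]$, and a direct application of Hoeffding's inequality delivers the $dV^2\sqrt{\log(2d/\delta)/n}$ term. For Term (II), I would exploit the monotone decrease of the training loss along gradient flow, so that $\cL(U_t^v;\cPhat_v)\leq \cL(U_0^v;\cPhat_v)$ with $U_0^v=\alpha I$. Expanding $\cL(U_0^v;\cPhat_v)-\cL(\bm 0;\cPhat_v)$ into a mean-zero cross term $-\tfrac{2\alpha^2}{n}\sum_j\langle\err^{(j)},A^{(j)}\rangle$ plus a deterministic piece $\tfrac{\alpha^4}{n}\sum_j\|A^{(j)}\|^2=\cO(d\alpha^4)$, and applying a standard Bernstein/Hoeffding bound on the cross term, gives $\cO(d\alpha^4+\alpha^2 V\sqrt{d/n})$; in the small-$\alpha$ regime of interest, this is absorbed into $d\alpha^2$.

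The main technical step, which really uses the variance-training structure, is establishing uniform stability for gradient flow on the pure-noise problem so that Proposition~\ref{lem:stability} can be applied to Term (I). The gradient $\nabla\cL_n(U)$ scales as $\cO(V\cdot\|U\|)$ whenever $U$ stays bounded, so a Gr\"onwall-type argument shows that $\|U_t^v\|_F$ remains in a ball whose radius depends on $V$, $\alpha$, and $t$ but is independent of the signal $X^\star$. On this region the per-sample loss $\ell(U;A,\err)$ is $\cO(V)$-Lipschitz in $U$, and swapping one of the $n$ samples and integrating the sensitivity from $0$ to $t$ yields per-sample stability parameter $\epsilon=\cO(dV^2(t+1)/n)$ (the factor of $d$ coming from the ambient dimension when summing the coordinate-wise Lipschitz constants). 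Feeding this into Proposition~\ref{lem:stability} produces the $\frac{dV^2(t+1)}{n}\log(n)\log(2dn/\delta)$ term, and summing (I), (II), (III) gives the lemma.

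The hardest part will be the stability argument in Term (I): because $\cL_n$ is non-convex in $U$, one must verify that both perturbed trajectories (on $\cD$ and on $\cD'$) remain in a common bounded region with a well-controlled Hessian, so that the Gr\"onwall bound does not blow up exponentially in $t$. This is exactly where the "slow convergence on noise" intuition from the introduction is turned into a quantitative $\cO(t/n)$ stability estimate, and is what distinguishes the variance branch of the decomposition from a naive stability bound applied to the full trajectory.
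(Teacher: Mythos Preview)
Your three–term split mirrors the paper's exactly, and your treatment of (III) via Hoeffding is the same. For (II) you take a different route: the paper plugs the explicit closed form $u_v^2(t)$ into $\cL_n(u_v^2(t))-\cL_n(0)$ and does a sign split on $\sigma_2=\tfrac1n\sum_j a_j\varepsilon_j$, whereas you use the monotone decrease of the training loss under gradient flow and bound $\cL_n(\alpha I)-\cL_n(0)$ directly. Your argument is valid and arguably cleaner, since it never touches the closed form.

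The substantive divergence is (I). The paper does \emph{not} use a Gr\"onwall argument. Because the problem is diagonal, each coordinate solves in closed form, e.g.\ $u_v^2(t)=\sigma_2\alpha^2 e^{2\sigma_2 t}\big/\bigl(\sigma_2-\xi\alpha^2+\xi\alpha^2 e^{2\sigma_2 t}\bigr)$ when $\sigma_2>0$, and the paper bounds $|u_v^2(t)-u_v'^2(t)|$ by direct algebra on these formulas, splitting into the three cases $\sigma_2,\sigma_2'>0$; $\sigma_2,\sigma_2'<0$; and mixed signs. This gives $|u_v^2(t)-u_v'^2(t)|=\cO(V(t+1)/n)$ without any differential inequality.

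Your Gr\"onwall plan has a real obstacle that you flag but do not resolve. Linearizing $\dot u=4u(\sigma_2-\xi u^2)$ near the initialization $u\approx\alpha$ gives $\dot{\delta u}\approx 4\sigma_2\,\delta u$, which is \emph{expansive} whenever $\sigma_2>0$; a standard Gr\"onwall bound then produces a factor $e^{\cO(\sigma_2)t}$ rather than the linear-in-$t$ stability you claim. ``Well-controlled Hessian'' is not enough here: the Hessian $12\xi u^2-4\sigma_2$ is bounded but has the wrong sign near the origin. One could in principle salvage this with a two-phase argument (expansive transient, then contraction near the stable fixed point $u^2=\sigma_2/\xi$), but your sketch gives no indication of how to execute it, and it is exactly this difficulty that the paper's closed-form computation sidesteps.
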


\begin{restatable}{lemma}{BERmatrix}
\label{lem:mrlr}
    For BER, under the assumptions in Theorem~\ref{theorem::matrix-recovery}, with probability at least $1-\delta$, we have
    \begin{equation}
        \er^b_\cL (U_t; \cP)\lesssim \left(\norm{X^{\star}}_F^2+d\alpha^4\right)\sqrt{\frac{\log\left(d/\delta\right)}{n}}+\sum_{j=1}^{r}\frac{\sigma^4_j}{\sigma^2_j+\alpha^4 e^{\Omega(\sigma_jt)}}+\frac{\alpha^2d}{t}.
    \end{equation}
\end{restatable}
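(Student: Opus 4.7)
The plan is to follow the same two-step template as in Lemma~\ref{lem:linearRegBER}: use the Negrea-style split
\begin{equation*}
\er^b_\cL(U_t^b;\cP) \;=\; \underbrace{\cL(U_t^b;\cP_b) - \cL(U_t^b;\cPhat_b)}_{\text{generalization gap}} \;+\; \underbrace{\cL(U_t^b;\cPhat_b)}_{\text{surrogate training loss}},
\end{equation*}
which is valid here because the bias minimizer $U^{\star}$ satisfying $U^{\star}U^{\star,\top}=X^{\star}$ achieves zero loss pointwise, so $\er^b_\cL(U_t^b;\cP) = \cL(U_t^b;\cP_b)$. The whole analysis exploits the fact that when $A^{(j)}$ and $U$ are diagonal, everything decouples across coordinates: writing $v_i(t):=u_i^2(t)$ and $\hat{\sigma}_{i,n}^2:=\tfrac{1}{n}\sum_j [A_i^{(j)}]^2$, we have $\cL(U;\cPhat_b)=\sum_i \hat{\sigma}_{i,n}^2(\sigma_i^{\star}-u_i^2)^2$, and gradient flow collapses to the independent scalar logistic ODEs $\dot v_i = 8\hat{\sigma}_{i,n}^2 v_i(\sigma_i^{\star}-v_i)$.

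For the surrogate training loss I would solve each ODE in closed form. Conditioned on the good event $\hat{\sigma}_{i,n}^2\in[\tfrac12,2]$ for all $i$ (which holds with probability at least $1-\delta/2$ by Bernstein plus a union bound over the $d$ coordinates), the logistic solution starting from $v_i(0)=\alpha^2$ is $v_i(t)=\sigma_i^{\star}/[1+(\sigma_i^{\star}/\alpha^2-1)e^{-8\hat{\sigma}_{i,n}^2\sigma_i^{\star}t}]$ for $i\le r$, and direct algebra yields $(\sigma_i^{\star}-v_i(t))^2\lesssim (\sigma_i^{\star})^4/\bigl((\sigma_i^{\star})^2+\alpha^4 e^{\Omega(\sigma_i^{\star}t)}\bigr)$. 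For $i>r$ the ODE degenerates to $\dot v_i=-8\hat{\sigma}_{i,n}^2 v_i^2$ with inverse-polynomial solution $v_i(t)=\alpha^2/(1+\Theta(\alpha^2 t))$, giving $v_i(t)^2\lesssim \alpha^2/t$; summing over the $d-r$ such coordinates yields the $\alpha^2 d/t$ contribution.

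For the generalization gap, the diagonal structure again allows a direct argument rather than a generic Rademacher bound. Since $\cL(U;\cP_b)=\sum_i(\sigma_i^{\star}-u_i^2)^2$ and $\cL(U;\cPhat_b)=\sum_i \hat{\sigma}_{i,n}^2(\sigma_i^{\star}-u_i^2)^2$ are linear in the same coefficients,
\begin{equation*}
\bigl|\cL(U;\cP_b)-\cL(U;\cPhat_b)\bigr| \;\le\; \max_i\bigl|\hat{\sigma}_{i,n}^2-1\bigr|\cdot\sum_i(\sigma_i^{\star}-u_i^2)^2,
\end{equation*}
where the first factor is $\cO(\sqrt{\log(d/\delta)/n})$ from the same concentration event, and the second factor is bounded along the bias trajectory by $\|X^{\star}\|_F^2+d\alpha^4$ using the pointwise envelope $v_i(t)\le\max(\alpha^2,\sigma_i^{\star})$ obtained from the previous step. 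The product yields the first term of the lemma; combining with the training-loss bound completes the claim.

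The main technical obstacle will be massaging the exact logistic solution into the stated form: the data-dependent rate $\hat{\sigma}_{i,n}^2$ must be absorbed cleanly into the $\Omega(\sigma_j t)$ factor without introducing extra randomness, and one must split the analysis into the early and late regimes $\sigma_i^{\star}e^{-8\sigma_i^{\star}t}/\alpha^2 \gtrless 1$ to convert the exact formula into the aggregated $\sigma_j^4/(\sigma_j^2+\alpha^4 e^{\Omega(\sigma_j t)})$ envelope. The $i>r$ tail also needs a separate treatment, since there the dynamics are no longer exponentially contracting and only the much weaker polynomial $\alpha^2/t$ decay is available.
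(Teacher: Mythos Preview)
Your proposal is correct and follows essentially the same route as the paper: decompose the population BER coordinate-wise into the empirical training term $\xi(u_b^2(t)-\sigma)^2$ plus the fluctuation $(1-\xi)(u_b^2(t)-\sigma)^2$, solve the scalar logistic ODE explicitly for both the signal ($i\le r$) and null ($i>r$) coordinates, and control the fluctuation via concentration of $\hat\sigma_{i,n}^2$ around $1$ together with the envelope $(\sigma_i^\star-u_i^2(t))^2\le\max\{(\sigma_i^\star)^2,\alpha^4\}$. The only cosmetic difference is that the paper writes the bound per coordinate and then unions, whereas you pull out $\max_i|\hat\sigma_{i,n}^2-1|$ first; also, the ``early/late regime'' split you anticipate is not actually needed, since $(\sigma-\alpha^2+\alpha^2 e^{2\xi\sigma t})^2\gtrsim \sigma^2+\alpha^4 e^{\Omega(\sigma t)}$ follows directly from the positivity of both summands.
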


Combining the above lemmas leads to Theorem~\ref{theorem::matrix-recovery}.
\end{proofsketch}

\textbf{Remark:} Directly applying Theorem~\ref{thm:decomposition} under overparameterized linear regression regimes costs a loss on condition number. 
The cost comes from the \DBC\ condition where we transform the function space (excess risk function) to the parameter space (the distance between the trained parameter and the minimizer). 
We emphasize that the field of decomposition framework is indeed much larger than Theorem~\ref{thm:decomposition},
and there are other different approaches beyond Theorem~\ref{thm:decomposition}.
For example, informally, by introducing a pre-conditioner matrix $A$, changing \DBC\ from the form $\| \hat{\theta} -  \theta^\star \|$ to $\| A \hat{\theta} -  A \theta^\star \|$ and setting $A = \Sigma_x^{1/2}$, one can avoid the cost of the condition number.
We leave the technical discussion for future work.
\subsubsection*{Acknowledgments}
This work has been partially supported by National Key R\&D Program of China (2019AAA0105200) and 2030 innovation megaprojects of china (programme on new generation artificial intelligence) Grant No.2021AAA0150000. 
The authors would like to thank Ruiqi Gao for his insightful suggestions. We also thank Tianle Cai, Haowei He, Kaixuan Huang, and, Jingzhao Zhang for their helpful discussions.

\bibliography{iclr2022_conference}
\bibliographystyle{iclr2022_conference}

\clearpage
\appendix
\begin{center}
\Large\bf Supplementary Materials
\end{center}

In this section, we first provide additional numerical experiment in Section~\ref{sec:numericalexperiment} and figure illustration in Section~\ref{sec:figillustration}.
We then give all the omitted proofs of lemmas and theorems in Section~\ref{proof:LinearRegression}, Section~\ref{proof:decomposition}, and Section~\ref{proof:martix}.
We finally provide some further discussions, including why we apply stability on VER, the motivation behind \DBC\ and a relaxed version of \DBC.
\section{Numerical Experiment}\label{sec:numericalexperiment}
In this section we provide additional numerical experiments on neural network to verify that our theory is satisfied for general neural network architectures and optimization algorithms. 

\begin{figure*}[htb]
\begin{centering}
\subfloat[Depth, $L=2$]{
{\includegraphics[width=0.33\linewidth]{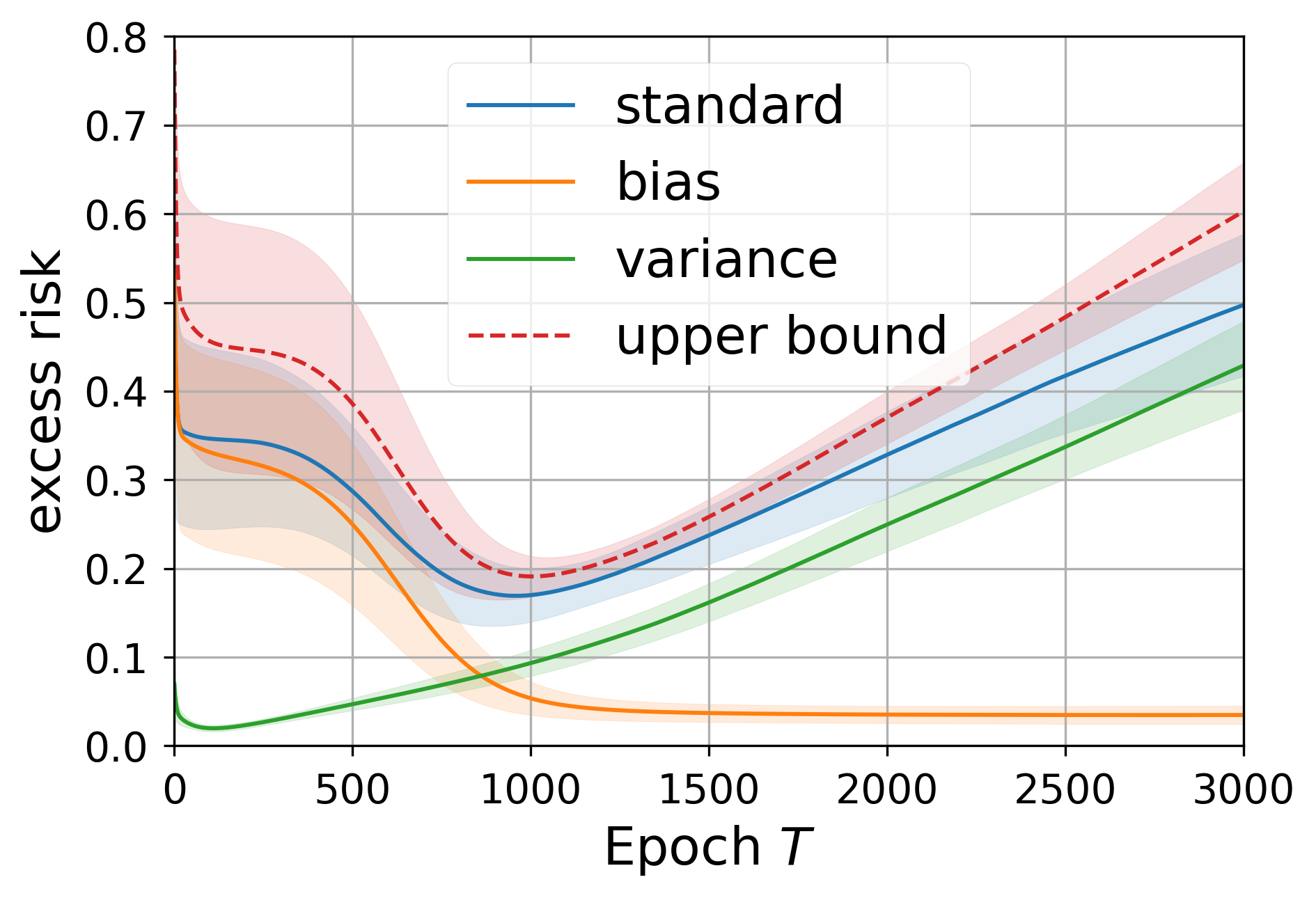}}\label{fig::layer2}}
\subfloat[Depth, $L=3$]{
{\includegraphics[width=0.33\linewidth]{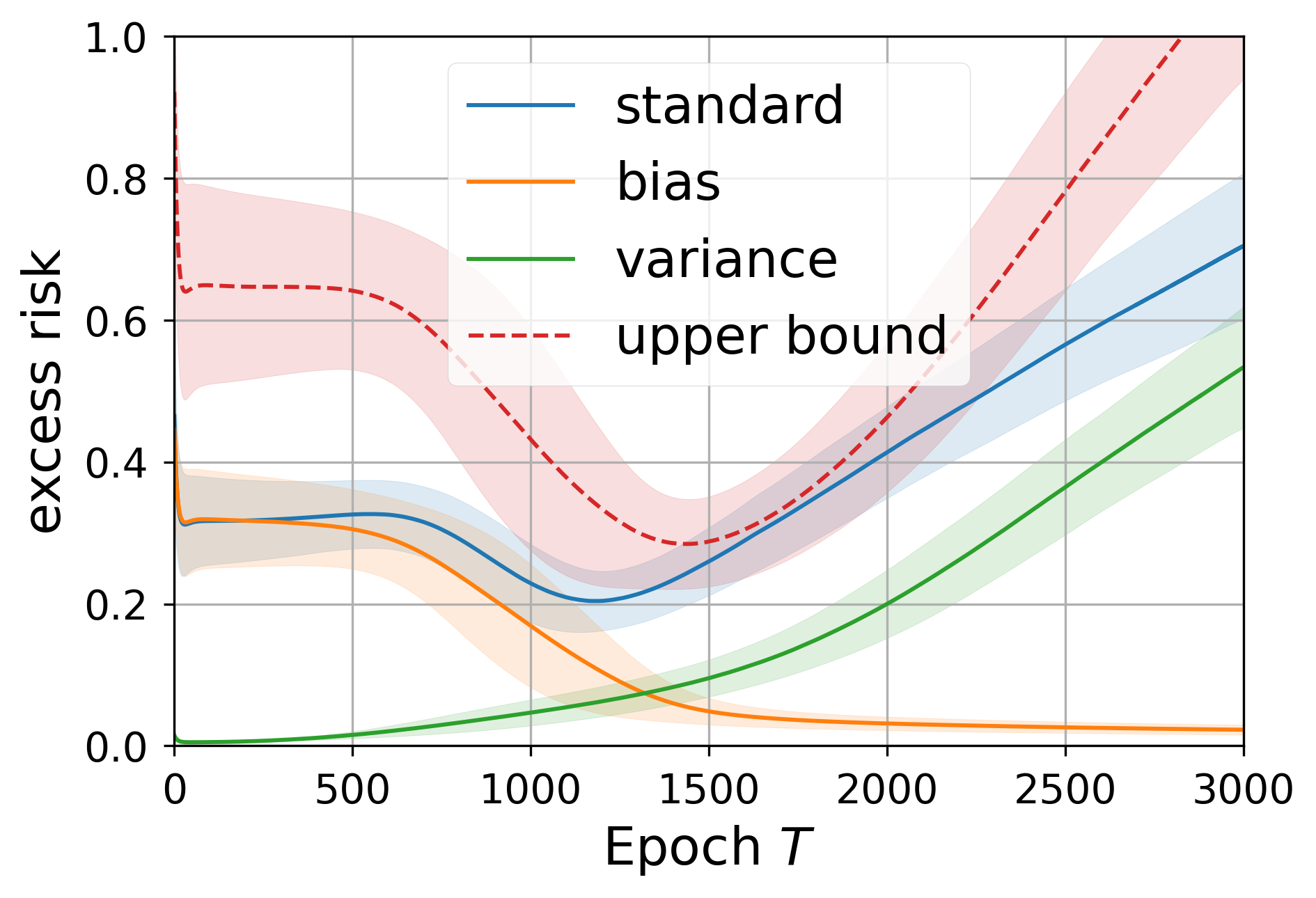}}\label{fig::layer3}} 
\subfloat[Depth, $L=4$]{
{\includegraphics[width=0.33\linewidth]{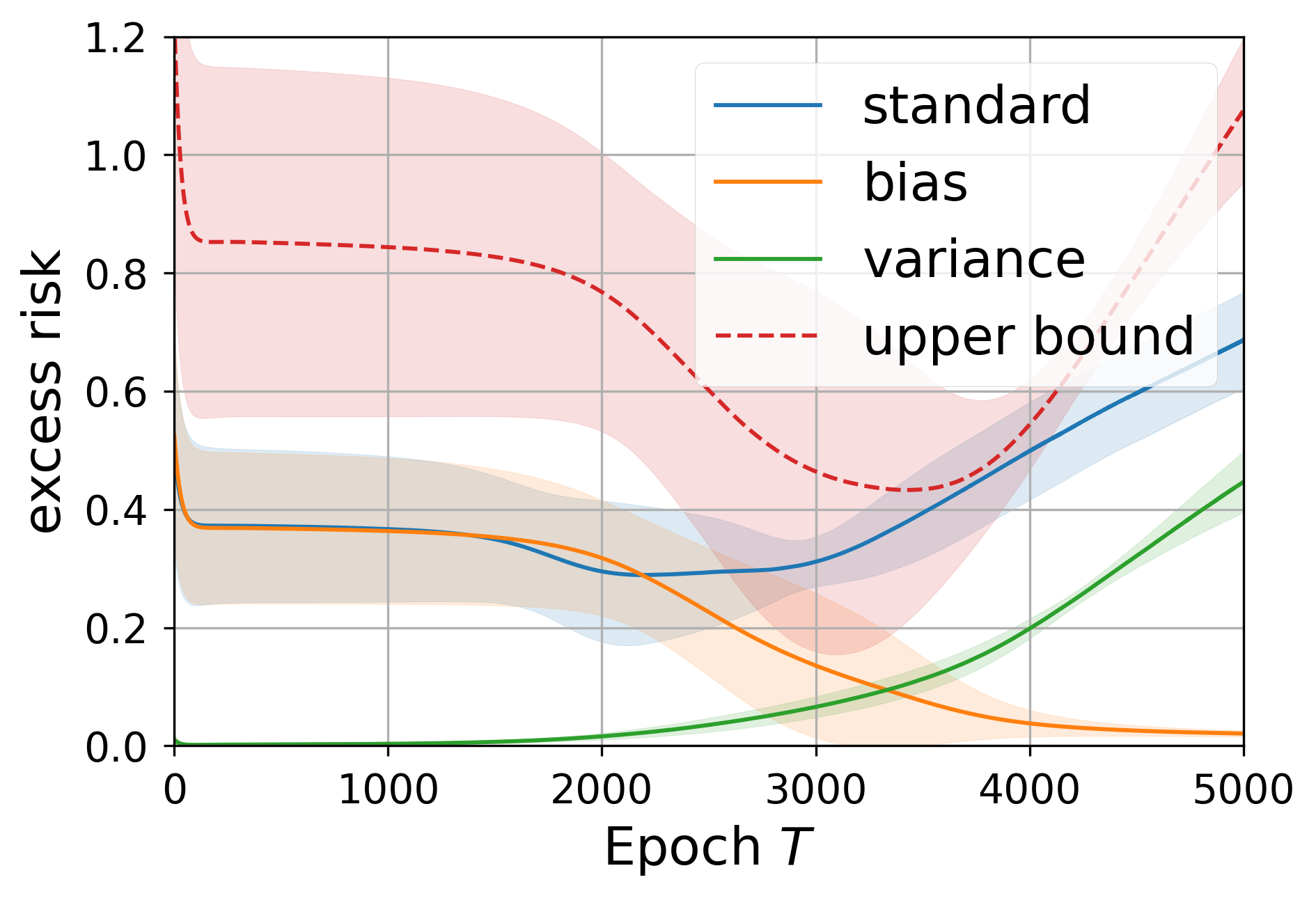}}\label{fig::layer4}}\\
\subfloat[Width, $m=64$]{
{\includegraphics[width=0.33\linewidth]{fig/layer_2.png}}\label{fig::width64}}
\subfloat[Width, $m=256$]{
{\includegraphics[width=0.33\linewidth]{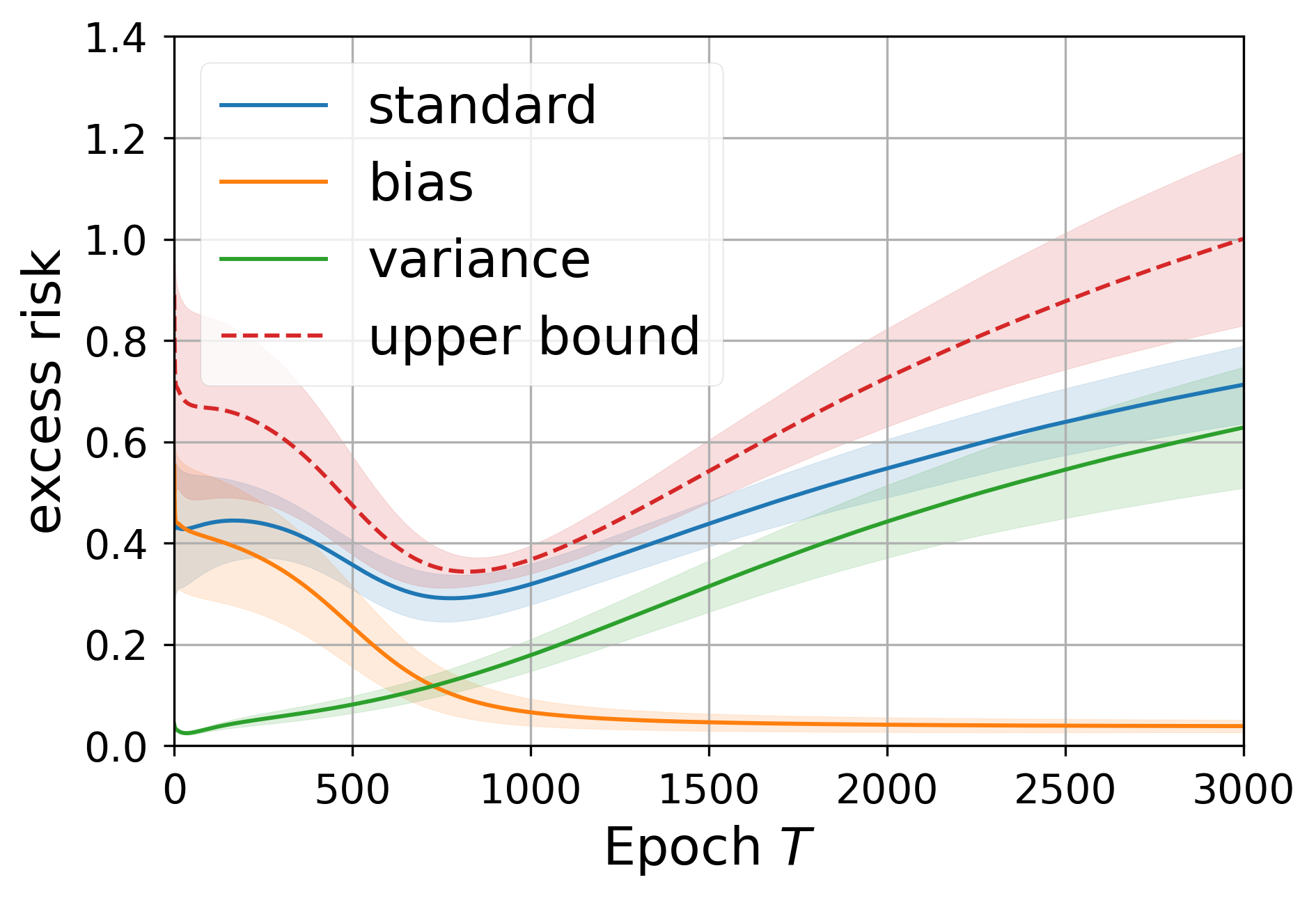}}\label{fig::width256}}
\subfloat[Width, $m=512$]{
{\includegraphics[width=0.33\linewidth]{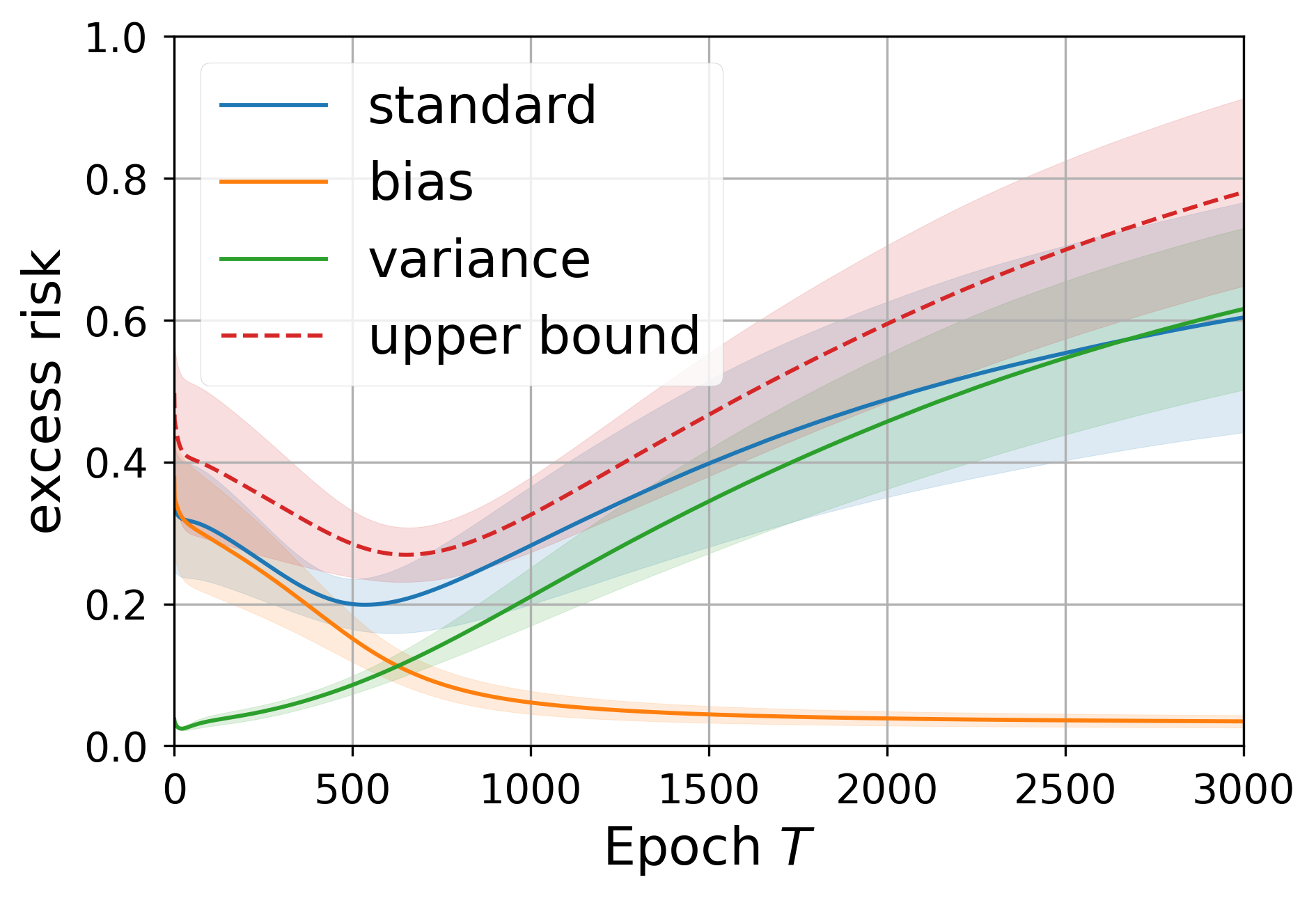}}\label{fig::width512}}\\
\subfloat[SGD]{
{\includegraphics[width=0.33\linewidth]{fig/layer_2.png}}\label{fig::SGD}}
\subfloat[Adam]{
{\includegraphics[width=0.33\linewidth]{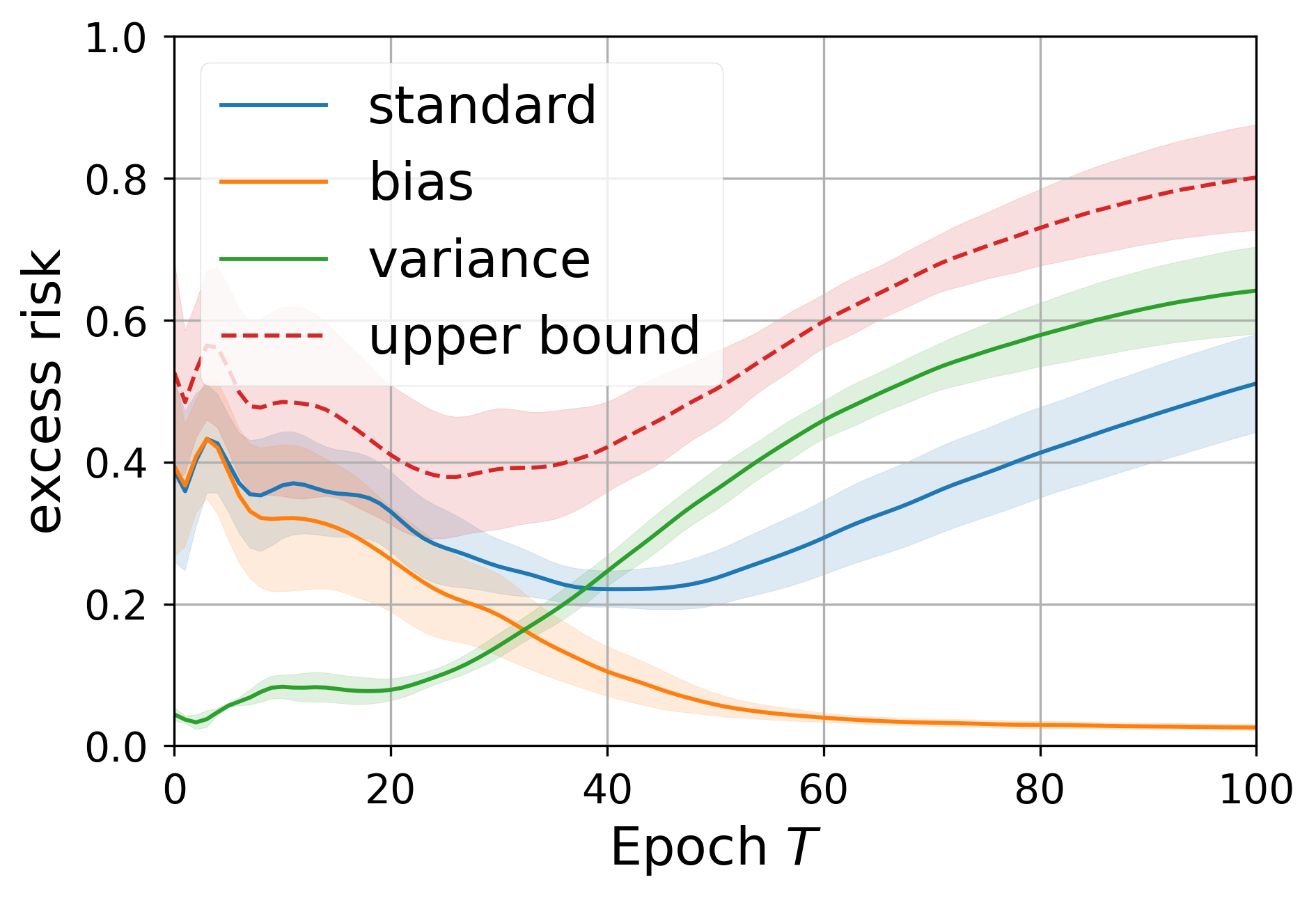}}\label{fig::adam}}
\subfloat[Rprop]{
{\includegraphics[width=0.33\linewidth]{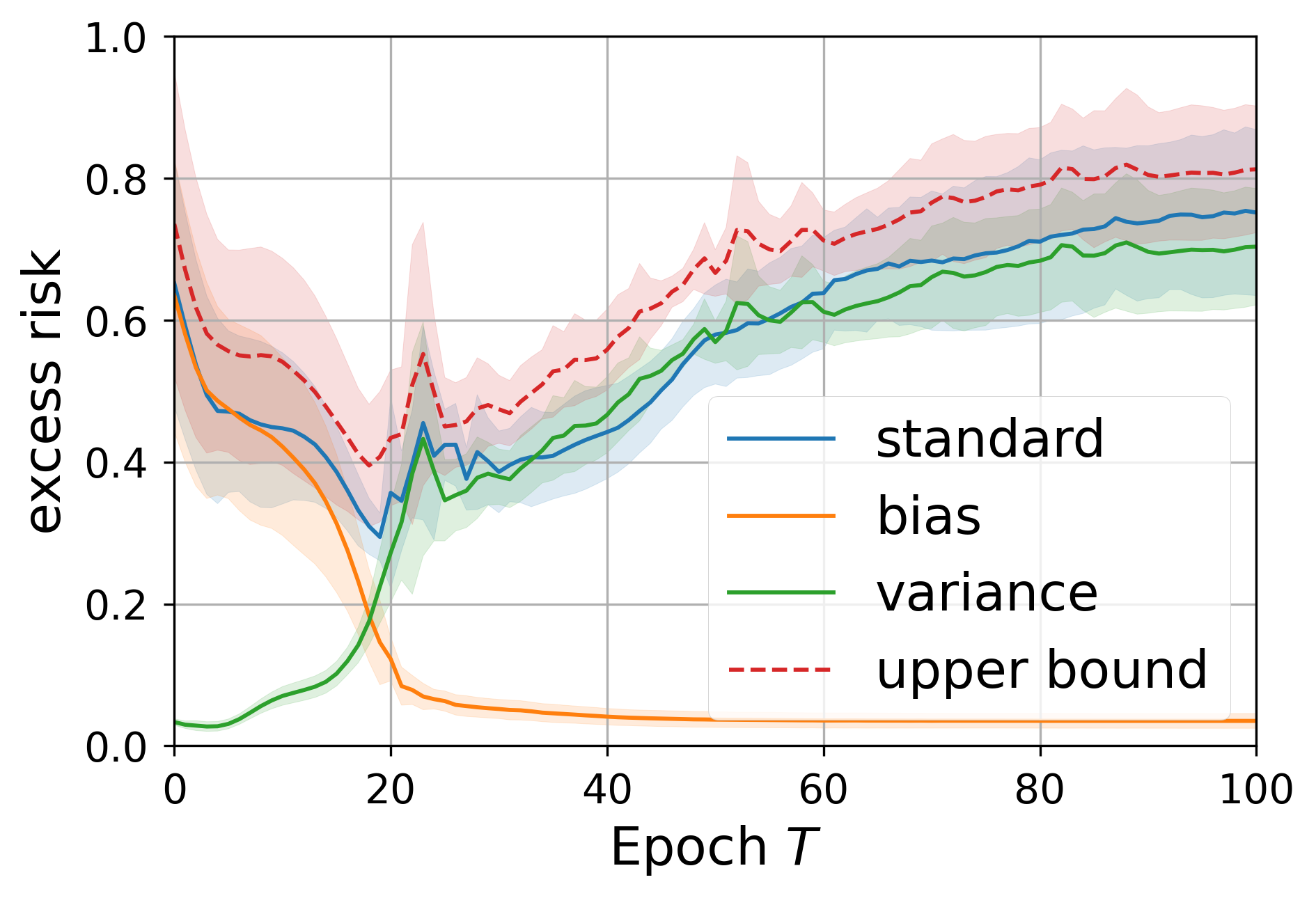}}\label{fig::rprop}}
\end{centering}
\caption{
\footnotesize Numerical results in Section~\ref{sec::synthetic-experiment}.
}
\label{fig::simulation-appendix}
\end{figure*}

\subsection{Synthetic Experiment}
\label{sec::synthetic-experiment}
In this section, our goal is to recover a sparse linear function $f^{\star}(\x)=\inner{\boldsymbol\theta^{\star}}{\x}$ ($\x\in \bR^{30}$) with fully connected ReLU network.
For each  setting, we run $5$ independent trials and report their average performances (the solid line) and the corresponding standard deviations (the error bar).
\paragraph{Effects of depth:} We first explore the effect of depth and width for the excess risk decomposition property. In this experiment, we fix the width of each layer to be $64$, and use SGD as the optimizer with Gaussian initialization ($\cN(0,\sigma^2)$ where $\sigma=1\times 10^{-3}$) and stepsize $\eta=1\times 10^{-2}$. For the dash line, we plot $a(\mathsf{BER}+\mathsf{VER})$ to verify the performance of our risk decomposition framework. Here $a$ can be regarded as a measure of the success of our theory. Ideally $a$ should be a small constant so that our bound is non-vacuous. Specific to this experiment, for Figure~\ref{fig::layer2}, ~\ref{fig::layer3}, ~\ref{fig::layer4}, we set $a=1.3, 2.0, 2.3$, respectively. Overall, these results are well-predicted by our theory. Moreover, it is remarkable to notice that this constant becomes large once increasing the depth. This mainly contributes to the fact that deeper neural network has higher non-linearity. 

\paragraph{Effects of width:} In this experiment, we explore the effect of width for $2$-layer ReLU network with SGD optimizer (the initialization and the stepsize are the same as above). For Figure~\ref{fig::width64}, ~\ref{fig::width256}, ~\ref{fig::width256}, we set $a=1.3, 1.0, 1.0$, respectively. As can be seen, the wider the neural network, the better our risk decomposition works. We comment that this behavior also comes to the level of non-linearity. It is now well-known that when the width tends to infinity, the neural network converges to a linear function (e.g. \citep{jacot2018neural}).

\paragraph{Effects of optimizer:} In this experiment, we try three different optimizers: SGD (the same hyperparameters as before), Adam (stepsize $\eta=0.002$, $(\beta_1,\beta_2)=(0.9, 0.999)$, $\epsilon=1e-08$, no weight decay), Rprop (learning rate $\eta=5\times 10^{-4}$, $(\eta_1,\eta_2)=(0.5, 1.2)$, stepsizes $(1\times 10^{-6}, 50)$). For Figure~\ref{fig::SGD}, ~\ref{fig::adam}, ~\ref{fig::rprop}, we set $a=1.3, 1.2, 1.1$. It can be seen that our risk decomposition framework is robust to different optimization algorithms, demonstrating its generality and potential to extend to general optimization problems.

\subsection{CIFAR-10}
In this section we deliver additional experiment in CIFAR-10 dataset. In this experiment, we use ResNet-18 and choose Adam as the optimizer. We regard that optimizing over the original dataset as the bias training (in the sense that all the label is noiseless, referring to the bias part). As for the variance training, we set the ground truth to be the uniform logit $\mathbf{v}=[0.1,\cdots, 0.1]^{\top}$ (here we use one-hot coding and choose the categorical cross entropy loss). As for the labels in the training dataset, we generate it uniformly from $\{\mathbf{e}_1,\cdots,\mathbf{e}_{10}\}$, where $\mathbf{e}_i$ is the standard basis in $\bR^{10}$. Under such data generating model, its Bayes risk is provided by $-10\times 0.1\times \log(0.1)\approx 2.3026$ for categorical cross entropy loss. For the standard training, we generate the training label as follows: $\mathbf{y}=(1-p)\mathbf{y}^{\star}+p \mathbf{e}_j$. Here $p\in [0,1]$ is the corruption probability, $\mathbf{y}^{\star}$ is the true label and $j$ is uniformly sampled from $\{1,\cdots, 10\}$. In this experiment, we set the corruption probability to be $0.4$. The numerical result is provided in Figure~\ref{fig:cifar10}
. Different from the performance on MINIST, here the variance training actually converges to the ground truth. We conjecture that this is due to the structure of ResNet, resulting to some kind of implicit regularization towards the training process.\footnote{We do observe that for certain learning rate regimes, the algorithm will overfit the training dataset for variance training, leading to a large VER ($\sim 10$).}
\begin{figure*}[htb]
    \centering
    \includegraphics[width=0.8\linewidth]{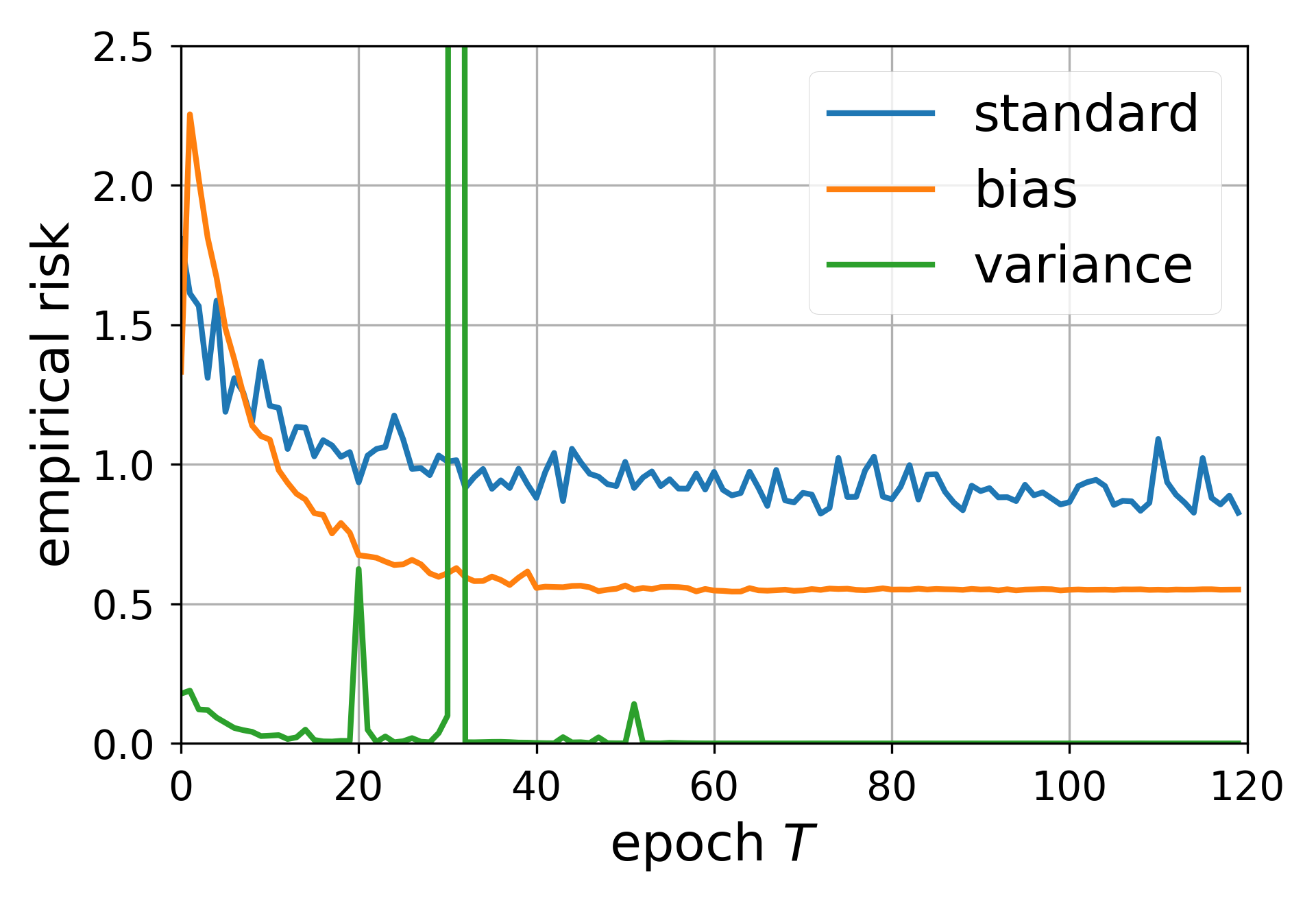}
    \caption{Three excess risk dynamics for CIFAR-10.}
    \label{fig:cifar10}
\end{figure*}
\section{Figure Illustration}\label{sec:figillustration}

\subsection{Figure~\ref{fig::plot1}}
Figure~\ref{fig::plot1} conveys an important message that variance training has a much smaller gradient norm at the beginning compared to standard training. This mainly comes from the difference in their landscapes. Since there is no signal term in the variance learning, we can regard the landscape corresponding to the variance training as just a translation transformation of the original landscape (Figure~\ref{fig::plot1}, the blue line is just a translation transformation of the green line). Based on this difference, they pursue different optima. We denote $\hat{\boldsymbol\theta}_v, \hat{\boldsymbol\theta}$ the empirical loss minimizer, respectively. Due to concentration, we have $\hat{\boldsymbol\theta}_v\approx 0$, and $\hat{\boldsymbol\theta}\approx \boldsymbol\theta^{\star}$. Note that we use a near-zero initialization $\boldsymbol\theta^{(0)}=\boldsymbol\theta_v^{(0)}\approx 0$, we immediately have that $\boldsymbol\theta_v^{(0)}\approx \hat{\boldsymbol\theta}_v$, indicating that the gradient norm is small. On the other hand, since $\boldsymbol\theta^{(0)}$ is far away from the optimum, its initial gradient norm can be very large.

To better illustrate this principle, we use linear regression to provide a simple example.
For the standard training, the empirical loss is $\cL_n(\boldsymbol\theta;\cP)=\frac{1}{n}\sum_{i=1}^{n} \left(\inner{\boldsymbol\theta}{X_i}-\inner{\boldsymbol\theta^{\star}}{X_i}-\err_i\right)^2$. Hence, we have
\begin{equation}
    \begin{aligned}
        \nabla \cL_n(\boldsymbol\theta^{(0)};\cP)&=\frac{2}{n}\sum_{i=1}^{n}\left(\inner{\boldsymbol\theta^{(0)}}{X_i}-\inner{\boldsymbol\theta^{\star}}{X_i}-\err_i\right)X_i\\
        &=-\frac{2}{n}\sum_{i=1}^{n}\left(\inner{\boldsymbol\theta^{\star}}{X_i}+\err_i\right)X_i\approx -2\boldsymbol\theta^{\star},
    \end{aligned}
\end{equation}
if we assume $X_i\sim \cN\left(0,I_{d\times d}\right)$. On the other hand, for the variance training, the empirical loss is $\cL_n(\boldsymbol\theta;\cP_v)=\frac{1}{n}\sum_{i=1}^{n} \left(\inner{\boldsymbol\theta}{X_i}-\err_i\right)^2$. Therefore, we have
\begin{equation}
    \begin{aligned}
        \nabla \cL_n(\boldsymbol\theta_b^{(0)};\cP_v)&=\frac{2}{n}\sum_{i=1}^{n}\left(\inner{\boldsymbol\theta_b^{(0)}}{X_i}-\err_i\right)X_i=-\frac{2}{n}\sum_{i=1}^{n}\err_iX_i\approx 0.
    \end{aligned}
\end{equation}

In conclusion, we have
\begin{equation}
    \norm{\nabla \cL_n(\boldsymbol\theta^{(0)};\cP)}\approx 2\norm{\boldsymbol\theta^{\star}},\quad \norm{\nabla \cL_n(\boldsymbol\theta_v^{(0)};\cP_v)}\approx 0.
\end{equation}
Obviously, there is a huge initial gradient norm gap between standard and variance training, showing that our bound is significantly better than the direct stability-based bound under such regimes.

\subsection{Figure~\ref{fig::plot2}}
Figure~\ref{fig::plot2} provides an example showing that the trajectory experienced by the variance training is likely to stay a convex region, though the global landscape can be highly non-smooth and non-convex. The reason is similar to that of Figure~\ref{fig::plot1}. Since $\boldsymbol\theta_v^{(0)}\approx \hat{\boldsymbol\theta}_v$, the gradient norm is small and the training dynamics may stay close to the optimum, which lies on a smooth and convex region. 
On the contrary, the standard training needs to cross potentially many bad local minimum or saddle points, suffering from large cumulative gradient norms.

\subsection{Figure~\ref{fig::nn}}
The setting of Figure~\ref{fig::nn} is similar to that of Section~\ref{sec:numericalexperiment}. For MINIST dataset, we use three layer convoluted neural network with Adam optimizer.

\subsection{Figure~\ref{fig::simulation}}
Figure~\ref{fig::simulation} provides the numerical verification of \TriCondition, and our upper bounds for both linear regression and matrix recovery problems \footnote{Here, we extend the diagonal matrix recovery problem to the general matrix recovery problem.}. 

\paragraph{Linear Regression}
We set the dimension $d=500$, the sample sizes are $n=300, 800$, respectively. Samples $\x_1\cdots,\x_n$ are i.i.d. from standard Gaussian distribution $\cN\left(0,I_{d\times d}\right)$, and the output $y_i=\inner{\x_i}{\boldsymbol\theta^{\star}}+\err_i$ where $\err_i\sim \cN(0,\sigma^2)$ and $\sigma=2$. We set the initialization $\boldsymbol\theta^{(0)}=\boldsymbol\theta_b^{(0)}=\boldsymbol\theta_v^{(0)}=0$ and the learning rate $\lambda=0.01$. 

\paragraph{Matrix Recovery}
We set the dimension $d=20$, rank $r=3$, the ground truth $X^{\star}=V\Sigma V^{\top}$, where $\Sigma=\diag\{5, 3, 1, 0,\cdots, 0\}$ and $V$ is an orthonormal matrix. The numbers of the measurement matrices are $200, 600$, respectively. The measurement matrices $A_1,\cdots,A_n$ are i.i.d. standard Gaussian matrices. The corresponding measurement $y_i=\inner{A_i}{X^{\star}}+\err_i$, where $\err_i\sim \cN(0,1)$. In this numerical experiment, we use the initialization $U(0)=U_b(0)=U_v(0)=\alpha I_{d\times d}$ where $\alpha=0.01$. We solve this problem via gradient descent with constant stepsize $\eta=0.1$. 

For the upper bound, it is calculated via $\mathrm{upper\ bound}=\norm{X_t^b-X^{\star}}_F+\norm{X_t^v}_F+\frac{8.5}{\sqrt{n}}$, where the last term is corresponding to the additional term in Lemma~\ref{lem::triangle-ineq-matrix}.

\paragraph{Neural Network} We the dimension $d=30$, the ground truth function is $f^{\star}(\x)=\inner{\boldsymbol\theta^{\star}}{\x}$ where $\boldsymbol\theta^{\star}$ is sparse and $\norm{\boldsymbol\theta^{\star}}=\Theta(1)$. The covariate $\x$ is generated from Gaussian distribution. The label for bias training is exactly $y_i=f^{\star}(\x_i)$. For the variance trainng, its label is generated from an additional Gaussian distribution $\err_i\sim \cN(0,\sigma^2)$ where $\sigma=1.5$. For the standard training, its label is $y_i=f^{\star}(\x_i)+\err_i$. For the upper bound, it is calculated via $\mathrm{upper\ bound}=\norm{f_{\boldsymbol\theta_t^b}-f^{\star}}_{\cP}+\norm{f_{\boldsymbol\theta_t^v}}_{\cP}$. Here $\norm{\cdot}_{\cP}$ is defined to be the $L_2(\cP)$ norm, i.e., $\norm{g}_{\cP}^2=\bE_{\x\sim\cP}\left[f(\x)^2\right]$. For these settings, we choose SGD (with stepsize $\eta=1\times 10^{-3}$) as the optimizer where the initialization is randomly generated from a Gaussian distribution.

\section{Proof for Overparameterized Linear Regression}
\label{proof:LinearRegression}
\subsection{Proof of Theorem~\ref{thm:linearRegSummary}}
In this section, we prove Theorem~\ref{thm:linearRegSummary} stated as follows:

\linearSummary*

\begin{proof}

We first calculate the population loss under linear regression regime, namely, for any time $t$:
\begin{equation*}
\label{eqn:populationloss}
    \begin{split}
        \cL({\hat{\boldsymbol\theta}^{(t)}}; \cP) &= \bE_{\x, \y} \left[\y - \x^\top {\hat{\boldsymbol\theta}^{(t)}} \right]^2 \\
        &\stackrel{(a)}{=} \bE \left[\x^\top \boldsymbol\theta^* + \epsilon - \x^\top {\hat{\boldsymbol\theta}^{(t)}} \right]^2\\
        &= \bE \left[\x^\top \boldsymbol\theta^* - \x^\top {\hat{\boldsymbol\theta}^{(t)}}\right]^2 -2 \bE \left[\epsilon \x^\top {\hat{\boldsymbol\theta}^{(t)}} \right] + \bE \epsilon^2 \\
        &= \left[\boldsymbol\theta^* -{\hat{\boldsymbol\theta}^{(t)}}\right]^\top \Sigma_x \left[\boldsymbol\theta^* -{\hat{\boldsymbol\theta}^{(t)}}\right] -2\bE_\x \bE_\epsilon\left[\epsilon \x^\top {\hat{\boldsymbol\theta}^{(t)}} | \x\right] + \bE \epsilon^2 \\
         &= \left[\boldsymbol\theta^* -{\hat{\boldsymbol\theta}^{(t)}}\right]^\top \Sigma_x \left[\boldsymbol\theta^* -{\hat{\boldsymbol\theta}^{(t)}}\right] -2\bE_\x\left[ \x^\top {\hat{\boldsymbol\theta}^{(t)}}  \bE_\epsilon\left[\epsilon | \x\right]\right] + \bE \epsilon^2 \\
         &\stackrel{(b)}{=}  \left[\boldsymbol\theta^* -{\hat{\boldsymbol\theta}^{(t)}}\right]^\top \Sigma_x \left[\boldsymbol\theta^* -{\hat{\boldsymbol\theta}^{(t)}}\right] + \bE \epsilon^2,
    \end{split}
\end{equation*}
where (a) we use the condition that $\y = \x^\top \boldsymbol\theta^* + \epsilon$ and (b) we use the assumption that $\bE [\epsilon | \x]= 0$.

Therefore, the excess risk can be written as
\begin{equation*}
    \er ({\hat{\boldsymbol\theta}^{(t)}})= \left[\boldsymbol\theta^* -{\hat{\boldsymbol\theta}^{(t)}}\right]^\top \Sigma_x \left[\boldsymbol\theta^* -{\hat{\boldsymbol\theta}^{(t)}}\right].
\end{equation*}

Now plugging the Lemma~\ref{lem:linearregDecomposition}, Lemma~\ref{lem:linearRegVER}, Lemma~\ref{lem:linearRegBER}, into Theorem~\ref{thm:decomposition}, we derive that with probability at least $1-\delta$:
\begin{equation*}
\begin{split}
    \er_\cL (\hat{\boldsymbol\theta}^{(T)}; \cP) &\lesssim  \er^v_\cL(\hat{\boldsymbol\theta}^{(T)}_v; \cP) + \er^b_\cL (\hat{\boldsymbol\theta}^{(T)}_b; \cP) \\
    &\lesssim \frac{\boldsymbol\theta^{*, \top} \Sigma_\x  \boldsymbol\theta^{*} }{\sqrt{n}} \sigma_w^2 \sqrt{\log \left(\frac{4}{\delta}\right)} +\frac{1}{\lambda \left[2T+1\right]} \| \boldsymbol\theta^{*} \|^2  \\
    &+  \frac{T\lambda}{n} \left[V+B\right]^2 \log(n) \log(2n/\delta) + \max\left\{ 1,  \left[V+B\right]^2 \right\}  \sqrt{\frac{\log(4/\delta)}{2n}}. \\
    &= \tilde{\cO} \left( \max \left\{1, \boldsymbol\theta^{*, \top} \Sigma_\x  \boldsymbol\theta^{*} \sigma_w^2 , \left[V+B\right]^2\right\} \sqrt{\frac{\log(4/\delta)}{n}} +  \frac{\| \boldsymbol\theta^{*} \|^2 }{\lambda T} + \frac{T\lambda \left[V+B\right]^2}{n}\right),
\end{split}
\end{equation*}

which completes the proof.
\end{proof}

\subsection{Proof of Lemma~\ref{lem:linearregDecomposition}}

We first recall Lemma~\ref{lem:linearregDecomposition} as follows:
\TriLinear*

\begin{proof}
Firstly, when we use gradient descent, we have that:
\begin{equation*}
    \begin{split}
        \hat{\boldsymbol\theta}^{(t+1)} &= \hat{\boldsymbol\theta}^{(t)} + \frac{\lr}{n} \sum_i \x_i[\y_i - \x_i^\top \hat{\boldsymbol\theta}^{(t)} ],\\
                \hat{\boldsymbol\theta}^{(t+1)}_v &= \hat{\boldsymbol\theta}^{(t)}_v + \frac{\lr}{n} \sum_i \x_i[\boldsymbol\epsilon_i - \x_i^\top \hat{\boldsymbol\theta}^{(t)}_v )],\\
                        \hat{\boldsymbol\theta}^{(t+1)}_b &= \hat{\boldsymbol\theta}^{(t)} _b+ \frac{\lr}{n} \sum_i \x_i[\x_i^\top \boldsymbol\theta^* - \x_i^\top \hat{\boldsymbol\theta}^{(t)}_b].
    \end{split}
\end{equation*}

We next prove the conclusion by induction.
Since we choose ${\boldsymbol\theta}^{(0)}={\boldsymbol\theta}_v^{(0)}={\boldsymbol\theta}_b^{(0)}=0$, the conclusion holds at the initialization
\begin{equation*}
    {\boldsymbol\theta}^{(0)}={\boldsymbol\theta}_v^{(0)}+{\boldsymbol\theta}_b^{(0)}.
\end{equation*}

Now assume that $ \hat{\boldsymbol\theta}^{(t)}=\hat{\boldsymbol\theta}_v^{(t)}+\hat{\boldsymbol\theta}_b^{(t)}$ holds at time $t$, we have 
\begin{equation*}
    \begin{split}
        \hat{\boldsymbol\theta}^{(t+1)} &= \hat{\boldsymbol\theta}^{(t)} + \frac{\lr}{n} \sum_i \x_i[\y_i - \x_i^\top \hat{\boldsymbol\theta}^{(t)} ]\\
        &= \hat{\boldsymbol\theta}_v^{(t)}+\hat{\boldsymbol\theta}_b^{(t)} + \frac{\lr}{n} \sum_i \x_i\left[\boldsymbol\epsilon_i + \x_i^\top \boldsymbol\theta^*- \x_i^\top \left(\hat{\boldsymbol\theta}_v^{(t)}+\hat{\boldsymbol\theta}_b^{(t)}\right) \right]\\
        &= \left[\hat{\boldsymbol\theta}_v^{(t)} + \frac{\lr}{n} \sum_i \x_i[\boldsymbol\epsilon_i - \x_i^\top \hat{\boldsymbol\theta}_v^{(t)} ]\right] +  \left[\hat{\boldsymbol\theta}_b^{(t)}+ \frac{\lr}{n} \sum_i \x_i[\x_i^\top \boldsymbol\theta^*- \x_i^\top \hat{\boldsymbol\theta}_b^{(t)} ]\right]\\
        &= \hat{\boldsymbol\theta}_v^{(t+1)}+\hat{\boldsymbol\theta}_b^{(t+1)}.
    \end{split}
\end{equation*}
In summary, we have: $ \hat{\boldsymbol\theta}^{(t)}=\hat{\boldsymbol\theta}_v^{(t)}+\hat{\boldsymbol\theta}_b^{(t)}$ holds for any time $t$.
Besides, we easily calculate that $\boldsymbol\theta^* = \boldsymbol\theta^*_v + \boldsymbol\theta^*_b$ since $\boldsymbol\theta^* = \boldsymbol\theta^*_b = \boldsymbol\theta^*$ and $\boldsymbol\theta_v^* = \boldsymbol{0}$.

Therefore, we have $ \hat{\boldsymbol\theta}^{(t)} - \boldsymbol\theta^\star=\hat{\boldsymbol\theta}_v^{(t)}- \boldsymbol\theta^\star_v+\hat{\boldsymbol\theta}_b^{(t)}- \boldsymbol\theta^\star_b$ holds for any $t$.
By Cauchy–Schwarz inequality, we derive that for a specific time $T$:
\begin{equation*}
    \begin{split}
         &\er_\cL (\hat{\boldsymbol\theta}^{(T)}; \cP)  \\
          =&  \left[\boldsymbol\theta^* -{\hat{\boldsymbol\theta}^{(T)}}\right]^\top \Sigma_x \left[\boldsymbol\theta^* -{\hat{\boldsymbol\theta}^{(T)}}\right] \\
         =& \left\| \boldsymbol\theta^* -{\hat{\boldsymbol\theta}^{(T)}} \right \|_{ \Sigma_x}^2 \\
         =& \left\|  \boldsymbol\theta^*_v -{\hat{\boldsymbol\theta}^{(T)}}_v + \boldsymbol\theta^*_b -{\hat{\boldsymbol\theta}^{(T)}}_b  \right \|_{ \Sigma_x}^2 \\
         \overset{(a)}{\leq}& 2 \left\|  \boldsymbol\theta^*_v -{\hat{\boldsymbol\theta}^{(T)}}_v \right \|_{ \Sigma_x}^2 + 2 \left\|  \boldsymbol\theta^*_b -{\hat{\boldsymbol\theta}^{(T)}}_b \right \|_{ \Sigma_x}^2 \\
          \leq & 2\er^v_\cL(\hat{\boldsymbol\theta}^{(T)}_v; \cP) + 2\er^b_\cL(\hat{\boldsymbol\theta}^{(T)}_b; \cP).
    \end{split}
\end{equation*}
where we denote the norm $\|u\|_A^2 = u^\top A u$.

\end{proof}
\subsection{Proof of Lemma~\ref{lem:linearRegVER}}
\VERLinear*

\begin{proof}
When the context is clear, we omit \emph{v} in this section and denote the trained parameter by $\hat{\boldsymbol\theta}^{(t)}$.
For any time $t$, we first split the excess risk into three components following Equation~\ref{eqn:splitexcessrisk}.
\begin{equation}
\label{eqn:splitER}
     \er_\cL (\hat{\boldsymbol\theta}^{(t)}; \cP_v)= \underbrace{\left[\cL(\hat{\boldsymbol\theta}^{(t)}; \cP_v) - \cL(\hat{\boldsymbol\theta}^{(t)}; \cPhat_v)\right]}_{\rm{(I)}} + \underbrace{\left[\cL(\hat{\boldsymbol\theta}^{(t)}; \cPhat_v) - \cL({\boldsymbol\theta}^*; \cPhat_v)\right]}_{\rm{(II)}} + \underbrace{\left[\cL({\boldsymbol\theta}^*; \cPhat_v) -
      \cL(\boldsymbol\theta^*; \cP_v)\right]}_{\rm{(III)}}.
\end{equation}

\textbf{Part (I).} We bound the first component in Equation~\ref{eqn:splitER} using stability.

\textbf{Fact A:} the loss $\ell(\boldsymbol\theta; \x, \y)$ is $2\left[V+B\right]$-Lipschitz.
\begin{equation*}
    \|\nabla_{\boldsymbol\theta} \ell(\boldsymbol\theta; \x, \y) \| = 2\| \x(\boldsymbol\epsilon - \x^\top \boldsymbol\theta) \| \leq 2\left[V+B\right].
\end{equation*}

\textbf{Fact B:} the stability of the parameter is $\frac{2\lambda t}{n} \left[V+B\right]$.
Denote the parameter trained on dataset $S$ and the parameter trained on dataset $S^\prime$ as $\hat{\boldsymbol\theta}^{(t)}_S$ and $\hat{\boldsymbol\theta}^{(t)}_{S^\prime}$, respectively.
We will show that, when $S$ and $S^\prime$ have only one different sample (e.g., WLOG $S = \{\x_1, \x_2, \cdots, \x_n\}$ and $S^\prime = \{\x_1^\prime, \x_2, \cdots, \x_n\}$), $\hat{\boldsymbol\theta}^{(t)}_S$ is close to $\hat{\boldsymbol\theta}^{(t)}_{S^\prime}$.

\begin{equation*}
    \begin{split}
        &\left\| \hat{\boldsymbol\theta}^{(t+1)}_S - \hat{\boldsymbol\theta}^{(t+1)}_{S^\prime} \right\| \\
        \leq& \left \|\left[I - \frac{\lambda}{n} \X^\top \X\right]\hat{\boldsymbol\theta}^{(t)}_S + \frac{\lambda}{n} X^\top Y - \left[I - \frac{\lambda}{n} \X^{\prime, \top} \X^\prime\right]\hat{\boldsymbol\theta}^{(t)}_{S^\prime} -\frac{\lambda}{n} X^{\prime, \top} Y^\prime  \right\|\\
        \leq & \left \| \left[I - \frac{\lambda}{n} \X^\top \X\right] \left[\hat{\boldsymbol\theta}^{(t)}_S - \hat{\boldsymbol\theta}^{(t)}_{S^\prime}\right] \| +  \|\left[\frac{\lambda}{n} \X^{\prime, \top} \X^\prime - \frac{\lambda}{n} \X^\top \X\right] \hat{\boldsymbol\theta}^{(t)}_{S^\prime}\| + \frac{\lambda}{n} \| X^\top Y - X^{\prime, \top} Y^\prime \right\| \\
        \leq & \left\|\left[\hat{\boldsymbol\theta}^{(t)}_S - \hat{\boldsymbol\theta}^{(t)}_{S^\prime}\right] \right\|+ \frac{\lambda}{n} \left\|\left[\x_1 \x_1^\top - \x_1^\prime \x_1^{\top,\prime} \right] \hat{\boldsymbol\theta}^{(t)}_{S^\prime} \right\| + \frac{\lambda}{n} \left\|\x_1^\top \boldsymbol\epsilon_1 - \x_1^{\prime, \top} \boldsymbol\epsilon_1^\prime \right\| \\
        \leq & \left \|\left[\hat{\boldsymbol\theta}^{(t)}_S - \hat{\boldsymbol\theta}^{(t)}_{S^\prime}\right] \right \|+ \frac{2\lambda}{n} \left[V+B\right].
    \end{split}
\end{equation*}

Summation over time $t$ provides the following result:
\begin{equation*}
    \begin{split}
        &\| \hat{\boldsymbol\theta}^{(t)}_S - \hat{\boldsymbol\theta}^{(t)}_{S^\prime} \| 
        \leq \frac{2t\lambda}{n} \left[V+B\right].
    \end{split}
\end{equation*}

Therefore, the total stability with respect to the $\ell_2$-loss is 
\begin{equation*}
    \begin{split}
        |\ell( \hat{\boldsymbol\theta}^{(t)}_S) - \ell(\hat{\boldsymbol\theta}^{(t)}_{S^\prime}) | \leq  \frac{4t\lambda}{n} \left[V+B\right]^2.
    \end{split}
\end{equation*}

By Proposition~\ref{lem:stability}, we have that
\begin{equation*}
    \cL(\hat{\boldsymbol\theta}^{(t)}; \cP_v) - \cL(\hat{\boldsymbol\theta}^{(t)}; \cPhat_v) \lesssim \frac{t\lambda}{n} \left[V+B\right]^2 \log(n) \log(2n/\delta) + \sqrt{\frac{\log(1/\delta)}{n}}.
\end{equation*}

\textbf{Part II.} We next show that the second component in Equation~\ref{eqn:splitER} is less than zero based on Lemma~\ref{lem:dynamicsPara}.
\begin{equation*}
\begin{split}
    &\cL(\boldsymbol\theta^{(t)}; \cPhat_v) - \cL(\boldsymbol\theta^*; \cPhat_v)\\
    =& \frac{1}{n} \left[\| \boldsymbol\epsilon - \X \boldsymbol\theta^{(t)} \|^2 - \| \boldsymbol\epsilon \|^2\right] \\
    =& \frac{1}{n} \boldsymbol\epsilon^\top \left[ -2 \X\left[I - \left[I - \frac{\lambda}{n} \X^\top \X  \right]^t\right] X^\dag + X^{\dag, \top} \left[I - \left[I - \frac{\lambda}{n} \X^\top \X  \right]^t\right] \X^\top \X \left[I - \left[I - \frac{\lambda}{n} \X^\top \X  \right]^t\right] \X^\dag  \right]\boldsymbol\epsilon.
\end{split}
\end{equation*}

The $i$-th eigenvalue of the matrix $[ -2 \X[I - [I - \frac{\lambda}{n} \X^\top \X  ]^t] X^\dag + X^{\dag, \top} [I - [I - \frac{\lambda}{n} \X^\top \X  ]^t] \X^\top \X [I - [I - \frac{\lambda}{n} \X^\top \X  ]^t] \X^\dag  ]$ can be calculated as follows
\begin{equation*}
    \begin{split}
        -2 \left[1 - (1 - \lambda \sigma_i)^t\right] + \left[1 - (1 - \lambda \sigma_i)^t\right]^2
        = (1 - \lambda \sigma_i)^{2t} - 1\leq 0,
    \end{split}
\end{equation*}
where $\sigma_i$ refers to the $i$-th eigenvalue of the matrix $\frac{1}{n} \X^\top \X$.
Therefore, we have
\begin{equation*}
\begin{split}
    &\cL(\boldsymbol\theta^{(t)}; \cPhat_v) - \cL(\boldsymbol\theta^*; \cPhat_v) \leq 0.
\end{split}
\end{equation*}

\textbf{Part III.} We bound the third component in Equation~\ref{eqn:splitER} using concentration.

Note that $|\ell_2(\boldsymbol\theta; \x, \y) |= |(\y - \x^\top \boldsymbol\theta)^2| \leq \left[V+B\right]^2$, so we can use Hoeffding's inequality, 
\begin{equation*}
    \begin{split}
&\bP\left[| \cL(\boldsymbol\theta^*; \cPhat_v) - \cL(\boldsymbol\theta^*; \cP_v) | \geq u\right] 
\leq   2 \exp\left[ -\frac{2 n u^2}{ (B+V)^4 } \right].
\end{split}
\end{equation*}

By setting $u = \left[V+B\right]^2 \sqrt{\frac{\log(2/\delta)}{2n}}$, then with probability at least $1-\delta$, we have
\begin{equation*}
     |\cL(\boldsymbol\theta^*; \cPhat_v) - \cL(\boldsymbol\theta^*; \cP_v) | \leq \left[V+B\right]^2 \sqrt{\frac{\log(2/\delta)}{2n}}.
\end{equation*}

Combining Part (I), (II), and (III) together, at time $T$, we have:
\begin{equation*}
    \begin{split}
        \er_\cL (\hat{\boldsymbol\theta}^{(T)}; \cP_v)
        &\lesssim\frac{T\lambda}{n} \left[V+B\right]^2 \log(n) \log(2n/\delta) + \sqrt{\frac{\log(1/\delta)}{n}} + \left[V+B\right]^2 \sqrt{\frac{\log(2/\delta)}{2n}}\\
         &\lesssim\frac{T\lambda}{n} \left[V+B\right]^2 \log(n) \log(2n/\delta) + \max\{1, \left[V+B\right]^2\}\sqrt{\frac{\log(2/\delta)}{2n}}.
    \end{split}
\end{equation*}

\end{proof}
\subsection{Proof of Lemma~\ref{lem:linearRegBER}}

\BERLinear*

\begin{proof}
Since we use the noiseless data $(\x, \bE(\y|\x))$ and $\bE(\y|\x) = \x^\top \boldsymbol\theta^*$, by Lemma~\ref{lem:dynamicsPara} we have
\begin{equation*}
\begin{split}
    \boldsymbol\theta^{(t)}_b =& \left[I - \left[I - \frac{\lr}{\samples} \X^\top \X\right]^t\right] \X^\dag \X \boldsymbol\theta^* = \boldsymbol\theta^* \left[I- \left[I - \frac{\lr}{\samples} \X^\top \X\right]^t\right] \boldsymbol\theta^*, 
\end{split}
\end{equation*}
where we use the fact that $\left[I - \left[I - \frac{\lr}{\samples} \X^\top \X\right]^t\right] \X^\dag \X = I - \left[I - \frac{\lr}{\samples} \X^\top \X\right]^t$.

Therefore, BER can be expressed as:
\begin{equation*}
\begin{split}
  \er_\cL (\boldsymbol\theta; \cPhat_b) &= \left[\boldsymbol\theta^* - \boldsymbol\theta^{(t)}_b\right]^\top \Sigma_\x \left[\boldsymbol\theta^* - \boldsymbol\theta^{(t)}_b\right] \\
  &= \boldsymbol\theta^{*, \top} \left[I - \frac{\lr}{\samples} \X^\top \X\right]^t \Sigma_\x \left[I - \frac{\lr}{\samples} \X^\top \X\right]^t \boldsymbol\theta^{*} \\
  &= \boldsymbol\theta^{*, \top} \left[I - \frac{\lr}{\samples} \X^\top \X\right]^t \left[\Sigma_\x - \frac{1}{n} \X^\top \X\right] \left[I - \frac{\lr}{\samples} \X^\top \X\right]^t \boldsymbol\theta^{*} \\
  &+ \boldsymbol\theta^{*, \top} \left[I - \frac{\lr}{\samples} \X^\top \X\right]^t \left[\frac{1}{n} \X^\top \X\right] \left[I - \frac{\lr}{\samples} \X^\top \X\right]^t \boldsymbol\theta^{*}.
\end{split}
\end{equation*}

We split the excess risk into two parts.
Intuitively, it is just like the decomposition in Equation~\ref{eqn:splitexcessrisk}.
The first part is similar to the generalization gap to use uniform convergence to bound it.
Note that here we require that $\lambda < \sup_{X_0} \frac{1}{\left \| \frac{1}{n}{\X_0}^\top {\X_0} \right \|} $. Hence, we have

\begin{equation*}
\begin{split}
    &\left| \boldsymbol\theta^{*, \top} \left[I - \frac{\lr}{\samples} \X^\top \X\right]^t \left[\Sigma_\x - \frac{1}{n} \X^\top \X\right] \left[I - \frac{\lr}{\samples} \X^\top \X\right]^t \boldsymbol\theta^{*} \right|\\
    \leq &\left|  \sup_{\X_0} \boldsymbol\theta^{*, \top} \left[I - \frac{\lr}{\samples} {\X_0}^\top {\X_0}\right]^t \left[\Sigma_\x - \frac{1}{n} \X^\top \X\right] \left[I - \frac{\lr}{\samples} {\X_0}^\top {\X_0}\right]^t \boldsymbol\theta^{*} \right|\\
    =&\left|  \sup_{\X_0} \tr \left(\boldsymbol\theta^{*, \top} \left[I - \frac{\lr}{\samples} {\X_0}^\top {\X_0}\right]^t \left[\Sigma_\x - \frac{1}{n} \X^\top \X\right] \left[I - \frac{\lr}{\samples} {\X_0}^\top {\X_0}\right]^t \boldsymbol\theta^{*}\right)\right| \\
    =&\left|  \sup_{\X_0} \tr  \left(\left[I - \frac{\lr}{\samples} {\X_0}^\top {\X_0}\right]^t \left[\Sigma_\x - \frac{1}{n} \X^\top \X\right] \left[I - \frac{\lr}{\samples} {\X_0}^\top {\X_0}\right]^t \boldsymbol\theta^{*} \boldsymbol\theta^{*, \top} \right)\right|\\
    =&\left|  \sup_{\X_0} \left \|  \left[I - \frac{\lr}{\samples} {\X_0}^\top {\X_0}\right]^t \left[\Sigma_\x - \frac{1}{n} \X^\top \X\right] \left[I - \frac{\lr}{\samples} {\X_0}^\top {\X_0}\right]^t \boldsymbol\theta^{*} \boldsymbol\theta^{*, \top}  \right \| \right|\\
    \leq & \left| \sup_{\X_0} \left \|  \left[I - \frac{\lr}{\samples} {\X_0}^\top {\X_0}\right]^t \right \| \left \| \left[\Sigma_\x - \frac{1}{n} \X^\top \X\right] \left[I - \frac{\lr}{\samples} {\X_0}^\top {\X_0}\right]^t \boldsymbol\theta^{*} \boldsymbol\theta^{*, \top}  \right \| \right| \\
    \leq & \left| \sup_{\X_0} \left \| \left[\Sigma_\x - \frac{1}{n} \X^\top \X\right] \left[I - \frac{\lr}{\samples} {\X_0}^\top {\X_0}\right]^t \boldsymbol\theta^{*} \boldsymbol\theta^{*, \top}  \right \|\right| \\
    =&\left|  \sup_{\X_0} \tr \left(\boldsymbol\theta^{*, \top} \left[\Sigma_\x - \frac{1}{n} \X^\top \X\right] \left[I - \frac{\lr}{\samples} {\X_0}^\top {\X_0}\right]^t \boldsymbol\theta^{*} \right)\right| \\
    \leq &\left| \sup_{\X_0} \tr \left(\boldsymbol\theta^{*, \top} \left[\Sigma_\x - \frac{1}{n} \X^\top \X\right]  \boldsymbol\theta^{*}\right)  \right|\\
    =&\left|  \boldsymbol\theta^{*, \top} \left[\Sigma_\x - \frac{1}{n} \X^\top \X\right]  \boldsymbol\theta^{*} \right|, \\
\end{split}
\end{equation*}
where we apply Lemma~\ref{lem:rank1} repeatedly since $ \operatorname{rank}\left[\boldsymbol\theta^{*, \top} \boldsymbol\theta^* A\right] = 1$ where $A$ stands for arbitrary matrix.

We highlight the difference between $\X_0$ and $\X$.
The notation $\X$ represents the contribution of the training set to the training error.
The notation $\X_0$ represents the contribution of the training set to the training estimator $\hat{\boldsymbol\theta}$.
The uniform convergence is taken over the estimator $\boldsymbol\theta$. Therefore, we take sup operator on $\X_0$.

We next bound $\boldsymbol\theta^{*, \top} \left[\Sigma_\x - \frac{1}{n} \X^\top \X\right]  \boldsymbol\theta^{*}$.
Assume that $w = \boldsymbol\theta^{*, \top} \x/\sqrt{\boldsymbol\theta^{*, \top} \Sigma_\x \boldsymbol\theta^*}$ is SubGaussian with subGaussian norm $\sigma_w$.
Obviously, $\bE\left[w\right] = 0, \bE\left[w^2\right] = 1$.
Therefore, we have that: $w^2 - 1$ is sub-Exponential distributions with sub-Exponential norm $\left \| w^2 - 1 \right \|_{\psi_1} \leq \sigma_w^2$.

While $\epsilon / \left[\boldsymbol\theta^\top \Sigma_\x \boldsymbol\theta\right] < \sigma_w^2$, we have the following tail bound
\begin{equation*}
    \begin{split}
        \bP\left( |\boldsymbol\theta^{*, \top} \left[\Sigma_\x - \frac{1}{n} \X^\top \X\right]  \boldsymbol\theta^{*}| \geq \epsilon \right) &= \bP\left( |\frac{1}{n} \sum_{i=1}^n \left[(\x_i^\top \boldsymbol\theta)^2 - \boldsymbol\theta^\top \Sigma_\x \boldsymbol\theta \right] | \geq \epsilon \right) \\
        &= \bP\left( |\frac{1}{n} \sum_{i=1}^n \left[w_i^2 - 1\right]| \geq \epsilon / \left[\boldsymbol\theta^\top \Sigma_\x \boldsymbol\theta\right] \right) \\
        &\leq 2 \exp\left[ -c \frac{\left[\epsilon / \left[\boldsymbol\theta^\top \Sigma_\x \boldsymbol\theta\right]\right]^2}{ n \sigma_w^4} \right],
    \end{split}
\end{equation*}
where $c$ is a universal constant.

Therefore, by setting $\epsilon = \left[\boldsymbol\theta^\top \Sigma_\x \boldsymbol\theta\right] \frac{\sigma_w^2 \log\left[2/\delta\right]}{\sqrt{c}\sqrt{n}}$
with probability at least $1-\delta$, we have
\begin{equation*}
    \begin{split}
        \left|\boldsymbol\theta^{*, \top} \left[\Sigma_\x - \frac{1}{n} \X^\top \X\right]  \boldsymbol\theta^{*}\right| \leq \left[\boldsymbol\theta^\top \Sigma_\x \boldsymbol\theta\right] \frac{\sigma_w^2 \sqrt{\log\left[2/\delta\right]}}{\sqrt{c}\sqrt{n}}. 
    \end{split}
\end{equation*}

With abuse to use $c$ as a constant, we have with probability at least $\geq 1-\delta$
\begin{equation*}
    \begin{split}
        \left|\boldsymbol\theta^{*, \top} \left[\Sigma_\x - \frac{1}{n} \X^\top \X\right]  \boldsymbol\theta^{*}\right| \leq  c \left[\boldsymbol\theta^\top \Sigma_\x \boldsymbol\theta\right] \frac{\sigma_w^2  \sqrt{\log(2/\delta)}}{\sqrt{n}}.
    \end{split}
\end{equation*}

For the second part, which is closely related to the empirical loss of $\boldsymbol\theta^{(t)}_b$, We will show that: for a general case, the empirical loss converges with rate $\cO(1/t)$.
Denote the eigenvalues of $\frac{1}{n} \X^\top \X $ as $\sigma_1, \cdots, \sigma_n, 0, \cdots, 0$.
Then the corresponding eigenvalues of matrix $\left[I - \frac{\lr}{\samples} \X^\top \X\right]^t \left[\frac{1}{n} \X^\top \X\right] \left[I - \frac{\lr}{\samples} \X^\top \X\right]^t$ are $(1-\lambda \sigma_i)^{2t} \sigma_i$.
Therefore, we have:
\begin{equation*}
\begin{split}
&\boldsymbol\theta^{*, \top} \left[I - \frac{\lr}{\samples} \X^\top \X\right]^t \left[\frac{1}{n} \X^\top \X\right] \left[I - \frac{\lr}{\samples} \X^\top \X\right]^t \boldsymbol\theta^{*}\\
\leq& \left \| \boldsymbol\theta^{*} \right \|^2 \max_i (1-\lambda \sigma_i)^{2t} \sigma_i \\
\leq& \frac{1}{\lambda (2t+1)} \left \| \boldsymbol\theta^{*} \right \|^2.
\end{split}
\end{equation*}
The maximum is attained while $\sigma_i = \frac{1}{\lambda (2t+1)}$.

Combining these terms leads to the conclusion.
\end{proof}

\subsection{Auxiliary Lemmas}

\begin{lemma}
    \label{lem:dynamicsPara}
    The dynamics of ${\hat{\boldsymbol\theta}^{(t)}}$ using GD is:
    \begin{equation*}
        {\hat{\boldsymbol\theta}^{(t)}} = \left[I - \frac{\lr}{\samples} \X^\top \X\right]^t \left[\boldsymbol\theta^{(0)} - \X^\dag \Y \right] + \X^\dag \Y.
    \end{equation*}
    where $\X^\dag$ represents the pseudo inverse of matrix $\X$.
    \end{lemma}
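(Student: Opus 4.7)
The plan is to prove the formula by induction on $t$, starting from the gradient descent recursion itself. Rearranging the GD update gives
\begin{equation*}
\hat{\boldsymbol\theta}^{(t+1)} = \left[I - \tfrac{\lambda}{n} X^\top X\right]\hat{\boldsymbol\theta}^{(t)} + \tfrac{\lambda}{n} X^\top Y,
\end{equation*}
which is an affine recursion of the form $u_{t+1} = M u_t + b$ with $M = I - \tfrac{\lambda}{n}X^\top X$ and $b = \tfrac{\lambda}{n}X^\top Y$. For such a recursion, if there exists a fixed point $u^*$ satisfying $u^* = M u^* + b$, then $u_t - u^* = M^t(u_0 - u^*)$, which is exactly the shape of the claimed formula once we identify $u^* = X^\dag Y$.

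So the first step is to verify the fixed-point identity $X^\dag Y = M (X^\dag Y) + b$, or equivalently $\tfrac{\lambda}{n} X^\top X \, X^\dag Y = \tfrac{\lambda}{n} X^\top Y$. This reduces to the pseudo-inverse identity $X^\top X X^\dag = X^\top$, which can be shown cleanly via the SVD: writing $X = U \Sigma V^\top$ and $X^\dag = V \Sigma^\dag U^\top$, we get $X^\top X X^\dag = V \Sigma^\top \Sigma \Sigma^\dag U^\top = V \Sigma^\top U^\top = X^\top$, where we used $\Sigma^\top \Sigma \Sigma^\dag = \Sigma^\top$ since $\Sigma^\dag$ is diagonal with reciprocals on the support of $\Sigma$. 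This is really the only nontrivial ingredient.

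The induction itself is then routine. The base case $t=0$ is immediate since $M^0 = I$ gives $\hat{\boldsymbol\theta}^{(0)} = (\boldsymbol\theta^{(0)} - X^\dag Y) + X^\dag Y$. For the inductive step, assuming the formula holds at time $t$, substitute into the recursion and use the fixed-point identity:
\begin{equation*}
\hat{\boldsymbol\theta}^{(t+1)} - X^\dag Y = M \hat{\boldsymbol\theta}^{(t)} + b - X^\dag Y = M \hat{\boldsymbol\theta}^{(t)} - M X^\dag Y = M (\hat{\boldsymbol\theta}^{(t)} - X^\dag Y) = M^{t+1}(\boldsymbol\theta^{(0)} - X^\dag Y),
\end{equation*}
completing the induction. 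The only potential obstacle is the pseudo-inverse identity, which is why I would state it as a quick sub-lemma and prove it via SVD rather than rely on the reader; everything else is bookkeeping on an affine recursion.
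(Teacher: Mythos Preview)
Your proposal is correct and follows essentially the same approach as the paper: induction on $t$ over the GD recursion, with the key algebraic ingredient being the pseudo-inverse identity $X^\top X X^\dag = X^\top$. The paper simply asserts this identity while you supply an SVD justification, and you frame the argument as finding a fixed point of an affine recursion, but these are presentational refinements rather than a different route.
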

    
    \begin{proof}
    The proof is based on induction.
    We notice that when $t=0$, the equation directly follows.
    Assume when $t=t$, the equation holds. Then at time  $t+1$, we have
    \begin{equation*}
    \begin{split}
     \hat{\boldsymbol\theta}^{(t+1)} &= {\hat{\boldsymbol\theta}^{(t)}} + \frac{\lr}{n} X^\top (\Y - \X {\hat{\boldsymbol\theta}^{(t)}})\\
     &= \left[1 -  \frac{\lr}{n} X^\top \X\right] \left[\left[I - \frac{\lr}{\samples} \X^\top \X\right]^t \left[\boldsymbol\theta^{(0)} - \X^\dag \Y \right] + \X^\dag \Y\right] + \frac{\lr}{n} X^\top \Y \\
     &= \left[I - \frac{\lr}{\samples} \X^\top \X\right]^{t+1} \left[\boldsymbol\theta^{(0)} - \X^\dag \Y \right] + \left[1 -  \frac{\lr}{n} X^\top \X\right]\X^\dag \Y + \frac{\lr}{n} X^\top \Y \\
      &= \left[I - \frac{\lr}{\samples} \X^\top \X\right]^{t+1} \left[\boldsymbol\theta^{(0)} - \X^\dag \Y \right] +\X^\dag \Y,
    \end{split}
    \end{equation*}
    where we use the fact that $X^\top \X  \X^\dag = \X^\top$.
    
    \end{proof}
\begin{lemma}
\label{lem:rank1}
For rank-1 matrix A, we have that $\left \| A \right \| = \left|\tr\left[A\right]\right|$.
\end{lemma}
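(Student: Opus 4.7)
The plan is to reduce the lemma to a one-line calculation using the canonical form of rank-$1$ matrices. First I would write $A\in\bR^{d\times d}$ as an outer product $A=uv^\top$ for some nonzero $u,v\in\bR^d$; every rank-$1$ matrix admits such a factorization. Then cyclic invariance of trace gives $\tr(A)=\tr(uv^\top)=v^\top u$, so $|\tr(A)|=|v^\top u|$.

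Next I would compute the eigenvalues of $A$: any $x$ with $v^\top x=0$ satisfies $Ax=u(v^\top x)=0$, so $0$ is an eigenvalue with geometric multiplicity at least $d-1$. Since the eigenvalues sum to $\tr(A)$, the remaining eigenvalue is exactly $v^\top u$. Consequently the spectral radius of $A$ is $\rho(A)=|v^\top u|=|\tr(A)|$. Under the reading $\|\cdot\|=\rho(\cdot)$ (for a rank-$1$ matrix this is just the absolute value of the unique non-zero eigenvalue), the claimed identity follows in one line.

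The main obstacle is that $\|\cdot\|$ is not defined in the paper, and under the standard operator/Frobenius/nuclear norm the identity is actually \emph{false}: for $A=uv^\top$ each of these norms equals $\|u\|\,\|v\|$, and Cauchy--Schwarz gives only $|\tr(A)|=|v^\top u|\le\|u\|\,\|v\|=\|A\|$, with equality iff $u$ and $v$ are collinear --- equivalently, when $A$ is a signed symmetric rank-$1$ matrix $\alpha xx^\top$. I would therefore structure the write-up in two parts: prove the universal inequality $|\tr(A)|\le\|A\|$ via Cauchy--Schwarz on the outer-product form, then prove the equality statement in the sense of spectral radius. Finally I would verify the call-site in the proof of Lemma~\ref{lem:linearRegBER} is consistent with this reading: the matrix actually fed through the lemma is $M\boldsymbol\theta^*\boldsymbol\theta^{*,\top}$ (rank-$1$ with $u=M\boldsymbol\theta^*$, $v=\boldsymbol\theta^*$), and its trace cyclically collapses to the scalar $\boldsymbol\theta^{*,\top}M\boldsymbol\theta^*$ that the subsequent concentration bound controls --- so only the inequality direction is needed in the downstream argument, while the as-written equality holds in the spectral-radius sense that is universally valid for rank-$1$ matrices.
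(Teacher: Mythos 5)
The paper states Lemma~\ref{lem:rank1} with no proof at all, so there is nothing to compare your argument against step by step; judged on its own, your write-up is correct and is in fact more careful than the paper, because it correctly diagnoses that the identity is \emph{false} under any standard reading of $\norm{\cdot}$. For $A=uv^{\top}$ the operator, Frobenius and nuclear norms all equal $\norm{u}\,\norm{v}$, while $\left|\tr(A)\right|=\left|v^{\top}u\right|$, so Cauchy--Schwarz gives only $\left|\tr(A)\right|\leq\norm{A}$, with equality iff $u$ and $v$ are collinear; unconditional equality holds only if $\norm{\cdot}$ is read as the spectral radius, exactly as you say. Your eigenvalue computation ($0$ with multiplicity $d-1$ plus the single eigenvalue $v^{\top}u$) is the right way to see both facts at once.

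The one point I would push back on is your call-site analysis: it is not true that only the direction $\left|\tr(A)\right|\leq\norm{A}$ is needed downstream. In the proof of Lemma~\ref{lem:linearRegBER} the lemma is invoked twice inside a chain of upper bounds: once to pass from a trace to a norm (where your inequality suffices), and once more, \emph{after} the submultiplicativity step, to convert $\norm{\left[\Sigma_\x-\frac{1}{n}\X^{\top}\X\right]\left[I-\frac{\lr}{n}\X_0^{\top}\X_0\right]^{t}\boldsymbol\theta^{*}\boldsymbol\theta^{*,\top}}$ back into a trace. For the chain to remain an upper bound, that second step needs $\norm{A}\leq\left|\tr(A)\right|$, and the matrix there is $uv^{\top}$ with $u=\left[\Sigma_\x-\frac{1}{n}\X^{\top}\X\right]\left[I-\frac{\lr}{n}\X_0^{\top}\X_0\right]^{t}\boldsymbol\theta^{*}$ and $v=\boldsymbol\theta^{*}$, which are not collinear in general; under the operator norm that step is genuinely wrong, and under the spectral-radius reading the preceding submultiplicativity step $\rho(P_0M)\leq\rho(P_0)\rho(M)$ is what fails. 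So the downstream argument does lean on the invalid direction of the lemma, and the clean repair is the one implicit in your last paragraph: drop the norm detour entirely and bound the quadratic form $\boldsymbol\theta^{*,\top}P_0\left[\Sigma_\x-\frac{1}{n}\X^{\top}\X\right]P_0\boldsymbol\theta^{*}$ directly, rather than round-tripping through $\norm{\cdot}$ via Lemma~\ref{lem:rank1} in both directions.
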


\section{Proof of Theorem~\ref{thm:decomposition}}
\label{proof:decomposition}
In this section, we provide the proof of Theorem~\ref{thm:decomposition}. We first recall the formal statement of Theorem~\ref{thm:decomposition}.
\decomposition*

\begin{proof}
Under Assumption~\ref{assump:uniformbound} (upper bound), we have:
\begin{equation*}
\begin{split}
    \er_\cL ({\hat{\boldsymbol\theta}}^{(t)}; \cP) &= \frac{ \er_\cL ({\hat{\boldsymbol\theta}}^{(t)}; \cP)}{\| {\hat{\boldsymbol\theta}}^{(t)} - {\hat{\boldsymbol\theta}}^*\|^\smoothing}  \| {\hat{\boldsymbol\theta}}^{(t)} - {\hat{\boldsymbol\theta}}^*\|^\smoothing \\
    &\leq \left[M_u^{1/\smoothing} \| {\hat{\boldsymbol\theta}}^{(t)} - {\hat{\boldsymbol\theta}}^*\| \right]^\smoothing.
\end{split}
\end{equation*}

Combining  Assumption~\ref{def:triCondition} (\DBC) and the lower bound in Assumption~\ref{assump:uniformbound}, we have

\begin{equation*}
\begin{split}
&\| {\hat{\boldsymbol\theta}}^{(t)} - {\hat{\boldsymbol\theta}}^*\|  \\
    \leq&   a \| {\hat{\boldsymbol\theta}}^{(t)}_b - {\hat{\boldsymbol\theta}}^*_b \| +   a \| {\hat{\boldsymbol\theta}}^{(t)}_v - {\hat{\boldsymbol\theta}}^*_v \| +  \frac{C}{\sqrt{t}} +  \frac{C^\prime}{\sqrt{n}}     \\
    \leq& [\frac{1}{m_u}]^{1/\smoothing} a \er_\cL^{1/\smoothing} ({\hat{\boldsymbol\theta}}^{(t)}_b; \cP_b) + [\frac{1}{m_u}]^{1/\smoothing} a \er_\cL^{1/\smoothing} ({\hat{\boldsymbol\theta}}^{(t)}_v; \cP_v) + \frac{C}{\sqrt{t}} + \frac{C^\prime}{\sqrt{n}},
\end{split}
\end{equation*}
where we remark that Assumption~\ref{assump:uniformbound} holds uniformly on distribution $\cP_{\y|\x}$ and $\boldsymbol{\theta}$. 
Therefore, we have that 
\begin{equation*}
\begin{split}
    \er_\cL ({\hat{\boldsymbol\theta}}^{(t)}; \cP) 
    &\leq \left[ \frac{M_u}{m_u}^{1/\smoothing} a \er_\cL^{1/\smoothing} ({\hat{\boldsymbol\theta}}^{(t)}_b; \cP_b) + \frac{M_u}{m_u}^{1/\smoothing} a \er_\cL^{1/\smoothing} ({\hat{\boldsymbol\theta}}^{(t)}_v; \cP_v) + \frac{C}{\sqrt{t}} + \frac{C^\prime}{\sqrt{n}}   \right]^\smoothing \\
        &\leq 4^\smoothing \left[ \frac{M_u}{m_u} a^\smoothing \er_\cL({\hat{\boldsymbol\theta}}^{(t)}_b; \cP_b)  + \frac{M_u}{m_u} a^\smoothing \er_\cL ({\hat{\boldsymbol\theta}}^{(t)}_v; \cP_v)  + \left[M_u^{1/\smoothing}\frac{C}{\sqrt{t}}\right]^\smoothing + \left[M_u^{1/\smoothing}\frac{C^\prime}{\sqrt{n}}   \right]^\smoothing \right] \\
        &\leq \left[4a\right]^\smoothing \frac{M_u}{m_u} \left[\er_\cL({\hat{\boldsymbol\theta}}^{(t)}_b; \cP_b) + \er_\cL ({\hat{\boldsymbol\theta}}^{(t)}_v; \cP_v) \right] + M_u\left[\frac{4C}{\sqrt{t}}\right]^\smoothing + M_u\left[\frac{4C^\prime}{\sqrt{n}}\right]^\smoothing.
\end{split}
\end{equation*}

where we use the fact that $(a+b+c+d)^p \leq \max\{ (4a)^p, (4b)^p, (4c)^p, (4d)^p \} \leq (4a)^p+ (4b)^p+ (4c)^p+ (4d)^p$ when $a, b, c, d, p>0$.

\end{proof}

\section{Proof for Diagonal Matrix Recovery}
\label{proof:martix}
In this section, we prove Theorem~\ref{theorem::matrix-recovery} via proving the proceeding three lemmas separately.

\subsection{Proof of Theorem~\ref{theorem::matrix-recovery}}
\MatrixSummary*
Note that although we do optimization based on $U$, we consider its square $X=UU^\top$ as the parameter since we require the solution is unique during the analysis.
Note that we do not change the update rule (where GD is still taken on $U$).
One can directly check that under Diagonal Matrix Recovery regimes, $\smoothing = 2$ and $M_u = m_u = 1$.
Therefore, combining Lemma~\ref{lem::triangle-ineq-matrix}, Lemma~\ref{lem::variance-matrix} and Lemma~\ref{lem:mrlr} leads to Theorem~\ref{theorem::matrix-recovery}.

\subsection{Proof of Lemma~\ref{lem::triangle-ineq-matrix}}
\Trimatrix*
\begin{proof}
Since all the measurement matrices $A^{(1)},\cdots, A^{(n)}$ are diagonal, we can analyze the dynamic of each singular value separately. 

For a given singular value $\sigma$, assume $a_i\sim \cN(0,1)$, $\err_i\sim \mathrm{subG}(\nu^2)$. Then we define $\sigma_1=\frac{1}{n}\sum_{i=1}^{n}a_i^2\sigma$, and $\sigma_2=\frac{1}{n}\sum_{i=1}^{n}a_i\varepsilon_i$, denote $\sigma=\sigma_1+\sigma_2$, and $\xi=\frac{1}{n}\sum_{i=1}^{n}a_i^2$. Denote $u^2(t)$, $u_b^2(t)$, $u_v^2(t)$ the dynamics of standard training, bias training, and variance training, respectively. Then our goal is to show that 
$|\sigma-u^2(t)|\leq |\sigma-u_b^2(t)|+u_v^2(t)+\mathrm{small\ term}$ holds with high probability.

Before diving into the details, we first give the dynamics of $u^2(t)$, $u_b^2(t)$, $u_v^2(t)$. If $\sigma>0$, we have
\begin{equation*}
    u^2(t)=\frac{\sigma\alpha^2e^{2\sigma t}}{\sigma-\xi\alpha^2+\xi\alpha^2e^{2\sigma t}}.
\end{equation*}
\begin{equation*}
    u^2_b(t)=\frac{\sigma_1\alpha^2e^{2\sigma_1t}}{\sigma_1-\xi\alpha^2+\xi\alpha^2e^{2\sigma_1t}}.
\end{equation*}
Furthermore, if $\sigma_2>0$, we have
\begin{equation*}
    u^2_v(t)=\frac{\sigma_2\alpha^2e^{2\sigma_2t}}{\sigma_2-\xi\alpha^2+\xi\alpha^2e^{2\sigma_2t}}.
\end{equation*}
Otherwise, if $\sigma_2<0$ we have
\begin{equation*}
    u^2_v(t)=\frac{\left|\sigma_2\right|\alpha^2}{\left(\left|\sigma_2\right|+\xi\alpha^2\right)e^{2\left|\sigma_2\right|t}-\xi\alpha^2}.
\end{equation*}

If $\sigma^{\star}=0$, we have
\begin{equation*}
    u_b^2(t)=\frac{\alpha^2}{\xi\alpha^2t+1}.
\end{equation*}

Now we turn to the proof of Lemma~\ref{lem::triangle-ineq-matrix}. If $\sigma=0$, then $\sigma=\sigma_2$, which implies $u^2(t)=u_v^2(t)$, indicating that the conclusion holds.

Hence, we only consider the case that $\sigma\geq\sigma_r>0$. By triangle inequality, it suffices to show
\begin{equation*}
    \left|u^2(t)-u^2_b(t)\right|=\cO\left(|\sigma_2|\right).
\end{equation*}
Without loss of generality, we assume $\sigma>\sigma_1>0$. Note that
\begin{equation*}
    \begin{aligned}
        \left|u^2(t)-u^2_b(t)\right|&= \left|\frac{\sigma\alpha^2e^{2\sigma t}}{\sigma+\xi\alpha^2e^{2\sigma t}}-\frac{\sigma_1\alpha^2e^{2\sigma_1 t}}{\sigma_1+\xi\alpha^2e^{2\sigma_1 t}}\right|+\cO(|\sigma_2|)\\
        &=\frac{\alpha^2\sigma\sigma_1\left(e^{2\sigma t}-e^{2\sigma_1 t}\right)}{\left(\sigma+\xi\alpha^2e^{2\sigma t}\right)\left(\sigma_1+\xi\alpha^2e^{2\sigma_1 t}\right)}+\cO(|\sigma_2|).
    \end{aligned}
\end{equation*}
Via integration-by-parts formula, we have
\begin{equation*}
    e^{2\sigma t} - e^{2\sigma_1 t}=2\sigma_2 e^{2\sigma t}\int_0^t e^{-2\sigma_2 s}ds\leq 2\sigma_2te^{2\sigma t}. 
\end{equation*}
Therefore, it suffices to show that
\begin{equation*}
    \frac{\sigma \sigma_1\alpha^2t e^{2\sigma t}}{\left(\sigma+\xi\alpha^2e^{2\sigma t}\right)\left(\sigma_1+\xi\alpha^2e^{2\sigma_1 t}\right)}=\tilde{\cO}(1).
\end{equation*}
We prove that there exists a universal constant $C$, such that 
\begin{equation*}
    \frac{\sigma \sigma_1\alpha^2t e^{2\sigma t}}{\left(\sigma+\xi\alpha^2e^{2\sigma t}\right)\left(\sigma_1+\xi\alpha^2e^{2\sigma_1 t}\right)}\leq C,
\end{equation*}
which is equivalent to show
\begin{equation*}
    \phi(t)=C\left(\sigma+\xi\alpha^2e^{2\sigma t}\right)\left(\sigma_1+\xi\alpha^2e^{2\sigma_1 t}\right)-\sigma \sigma_1\alpha^2t e^{2\sigma t}\geq 0, \quad \forall 0\leq t<\infty.
\end{equation*}
Note that $\phi(0)=C\left(\sigma+\xi\alpha^2\right)\left(\sigma_1+\xi\alpha^2\right)\geq 0$, and its gradient is
\begin{equation*}
    \phi'(t)=2C\xi^2\alpha^4 (\sigma+\sigma_1)e^{2(\sigma+\sigma_1)t}+2C\sigma\sigma_1\xi\alpha^2\left(e^{2\sigma t}+e^{2\sigma_1 t}\right)-\sigma\sigma_1\alpha^2e^{2\sigma t}-2\sigma^2\sigma\alpha^2 te^{2\sigma t}.
\end{equation*}
By concentration of Chi-square distribution, we have $\xi\geq 1-\sqrt{\frac{\log\left(1/\delta\right)}{n}}=\Omega(1)$ with probability at least $1-\delta$, providing that $n\gtrsim \log\left(\frac{1}{\delta}\right)$.
It suffices to show that $\phi'(t)\geq 0$, which is equivalent to
\begin{equation*}
    \psi(t)=C\alpha^2 e^{2\sigma_1 t}+C\sigma_1-\sigma\sigma_1 t\geq 0.
\end{equation*}
It attains its minimum $\frac{\sigma}{2}+C\sigma_1-\frac{\sigma}{2}\log\left(\frac{1}{2C\alpha^2}\right)$ at the point $t=\log\left(\frac{1}{2C\alpha^2}\right)/2\sigma_1$. Therefore, it suffices to set $C=\log\left(\frac{1}{\alpha^2}\right)$.

Overall, we have
\begin{equation*}
    \begin{aligned}
        & \left|u^2(t)-u^2_b(t)\right|=\cO\left(\log\frac{1}{\alpha^2} |\sigma_2|\right)\quad \text{if } \sigma^{\star}>0;\\
        & \left|u^2(t)-u^2_b(t)\right|\leq u^2_v(t) \quad \text{if } \sigma^{\star}=0.
    \end{aligned}
\end{equation*}
Hence, with proper choice of probability, we have with probability at least $1-\delta$, the following holds
\begin{equation*}
    \norm{X_t-X^{\star}}_F\leq \norm{X_t^b-X^{\star}}_F+\norm{X_t^v}_F+\cO\left(\log\left(\frac{1}{\alpha^2}\right)\sqrt{\frac{r\sigma^2\log\left(r/\delta\right)}{n}}\right).
\end{equation*}

To summarize, the case is $\left(a=1, C=0, C'=\cO\left(\log\left(\frac{1}{\alpha^2}\right)\sqrt{{r\sigma^2\log\left(r/\delta\right)}}\right)\right)$-bounded (See \DBC).

\end{proof}

\subsection{Proof of Lemma~\ref{lem::variance-matrix}}
\VERmatrix*
\begin{proof}

Similar to the proof in overparameterized linear regression regimes, we first split the excess risk in three componenets:
\begin{equation}
\label{eqn:matrixER}
    \er_\cL (U_t; \cP_v)=\underbrace{\cL\left(u^2(t)\right)-\cL_n\left(u^2(t)\right)}_{\rm{(I)}}+\underbrace{\cL_n\left(u^2(t)\right)-\cL_n\left(0\right)}_{\rm{(I)}}+\underbrace{\cL_n\left(0\right)-\cL\left(0\right)}_{\rm{(III)}}.
\end{equation}

\textbf{Part (I).} We bound the first component in Equation~\ref{eqn:splitER} using stability.
We define $\sigma_2=\frac{1}{n}\sum_{i=1}^{n}a_i\varepsilon_i$, $\sigma_2'=\frac{1}{n}\sum_{i=1}^{n-1}a_i\varepsilon_i+a'_n\varepsilon'_n$. Similarly, we define $\xi=\frac{1}{n}\sum_{i=1}^{n}a_i^2$, and $\xi'=\frac{1}{n}\sum_{i=1}^{n-1}a_i^2+a_n'^2$. 

\emph{In the first case}, we assume both $\sigma_2,\sigma'_2>0$, so that their difference can be bounded
\begin{equation*}
    \begin{aligned}
        |u^2_v(t)-u'^2_v(t)|\lesssim \frac{\left|\alpha^2\sigma_2\sigma'_2\left(e^{2\sigma_2t}-e^{2\sigma'_2t}\right)\right|+\alpha^4\left|\xi'\sigma_2e^{2\sigma_2t}-\xi\sigma'_2e^{2\sigma'_2t}\right|+\alpha^4(\sigma_2\xi'-\sigma'_2\xi)e^{2(\sigma_2+\sigma'_2)t}}{\left(\sigma_2-\xi\alpha^2+\xi\alpha^2e^{2\sigma_2t}\right)\left(\sigma'_2-\xi'\alpha^2+\xi'\alpha^2e^{2\sigma'_2t}\right)}.
    \end{aligned}
\end{equation*}
WLOG, we assume $\sigma_2>\sigma'_2$. Then, similar to the proof in Lemma~\ref{lem::triangle-ineq-matrix}, we have
\begin{equation*}
    \frac{\left|\alpha^2\sigma_2\sigma'_2\left(e^{2\sigma_2t}-e^{2\sigma'_2t}\right)\right|}{\left(\sigma_2-\xi\alpha^2+\xi\alpha^2e^{2\sigma_2t}\right)\left(\sigma'_2-\xi'\alpha^2+\xi'\alpha^2e^{2\sigma'_2t}\right)}=\cO\left(|\sigma_2-\sigma'_2|t\right).
\end{equation*}
The second term is dominated by the first term. Hence, we only need to handle the third term. Note that
\begin{equation*}
    |\sigma_2\xi' - \sigma'_2\xi| = |\sigma_2 \left(\xi'-\xi\right)+\xi(\sigma_2-\sigma'_2)|
    \leq \frac{|\sigma_2|}{n}\left| a_n^2-a_n^{, 2}\right|+ \left|\frac{\xi}{n}\left(a_n \err_n-a'_n\err'_n\right)\right|,
\end{equation*}
we have
\begin{equation*}
    \begin{aligned}
        \frac{\alpha^4(\sigma_2\xi'-\sigma'_2\xi)e^{2(\sigma_2+\sigma'_2)t}}{\left(\sigma_2-\xi\alpha^2+\xi\alpha^2e^{2\sigma_2t}\right)\left(\sigma'_2-\xi'\alpha^2+\xi'\alpha^2e^{2\sigma'_2t}\right)}&\leq \frac{|\sigma_2\xi' - \sigma'_2\xi|}{\xi\xi'}\\&\lesssim \frac{|\sigma_2|}{n}\left| a_n^2-a_n^{, 2}\right|+ \left|\frac{\xi}{n}\left(a_n \err_n-a'_n\err'_n\right)\right|.
    \end{aligned}
\end{equation*}

Combined the above three parts, we finally have
\begin{equation*}
    |u^2_v(t)-u'^2_v(t)|=\cO\left(\frac{V(t+1)}{n}\right).
\end{equation*}

\emph{In the second case},  we assume both $\sigma_2,\sigma'_2<0$. Since both of them would decrease to $0$, we have $|u^2_b(t)-u'^2_b(t)|\leq \alpha^2$ where $\alpha = \cO(\frac{1}{n^{1/4}})$.

\emph{In the last case}, without loss of generality, we assume $\sigma_2<0<\sigma'_2$. Since $u_v^2(t)$ decreases to $0$, and ${u'}_v^2(t)$ increases to $\sigma'_2$, we have $|u^2_b(t)-{u'}^2_b(t)|\leq \sigma'_2\leq |\sigma'_2-\sigma_2|$.

Overall, we have
\begin{equation*}
    |u^2_v(t)-u'^2_v(t)|=\cO\left(\frac{V(t+1)}{n}\right).
\end{equation*}

Besides, since the loss is $\cO(V)$-Lipschitz:
\begin{equation*}
    \norm{\nabla_{u^2} \ell(u)}=2 \left(au^2-\err\right)a\lesssim V.
\end{equation*}
where $u^2$ is bounded upper bounded by $\cO(V)$ during the training analysis due to a small initialization.
Then the total stability is
\begin{equation*}
    |\ell(u_v^2(t))-\ell(u_v^{\prime 2}(t))|\leq \frac{V^2(t+1)}{n}.
\end{equation*}

Summation over the dimension $d$, the whole loss $\ell_d(u_v^2(t))$ has stability
\begin{equation*}
    |\ell_d(U_v^2(t))-\ell_d(U_v^{\prime 2}(t))|\leq \frac{dV^2(t+1)}{n}.
\end{equation*}

By Proposition~\ref{lem:stability}, we have that
\begin{equation*}
    \cL(\hat{\theta}^{(t)}_v; \cP) - \cL(\hat{\theta}^{(t)}_v; \cPhat) \lesssim \frac{dV^2(t+1)}{n}\log(n) \log(2dn/\delta) + \frac{\sqrt{\log(1/\delta)}}{\sqrt{n}}.
\end{equation*}

\textbf{Part (II).} 
We next calculate the second term. Take the explicit formula into the equation, we have
\begin{equation*}
    \cL_n\left(u^2(t)\right)-\cL_n\left(0\right)=\frac{1}{n}\sum_{i=1}^{n}\left(a_i^2u_v^4(t)-2a_i\err_iu_v^2(t)\right).
\end{equation*}
If $\sigma_2=\frac{1}{n}\sum_{i=1}^{n}a_i\err_i>0$, we have
\begin{equation*}
    \begin{aligned}
        \frac{1}{n}\sum_{i=1}^{n}\left(a_i^2u_v^4(t)-2a_i\err_iu_v^2(t)\right)&=-\frac{(\sigma_2-\xi\alpha^2)\sigma_2^2\alpha^2e^{2\sigma_2t}}{\left(\sigma_2-\xi\alpha^2+\xi\alpha^2e^{2\sigma_2t}\right)^2}\leq 0.
    \end{aligned}
\end{equation*}
If $\sigma_2<0$, we have
\begin{equation*}
    \begin{aligned}
        \frac{1}{n}\sum_{i=1}^{n}\left(a_i^2u_v^4(t)-2a_i\err_iu_v^2(t)\right)&=\xi \frac{\sigma_2^2\alpha^4}{\left((|\sigma_2|+\xi\alpha^2)e^{2|\sigma_2|t}-\xi\alpha^2\right)^2}+\frac{2|\sigma_2|^2\alpha^2}{(|\sigma_2|+\xi\alpha^2)e^{2|\sigma_2|t}-\xi\alpha^2}\\
        &\leq \xi \alpha^4 + 2|\sigma_2|\alpha^2\lesssim\alpha^2 \sqrt{\frac{\nu^2}{n}}.
    \end{aligned}
\end{equation*}
In conclusion, we have $\cL_n\left(u^2(t)\right)-\cL_n\left(0\right)\lesssim \alpha^2 \sqrt{\frac{\nu^2}{n}}$.

\textbf{Part (III).} 
For the third term, we just need one single concentration inequality. Note that for the variance training, the loss function is of the order $\cO\left(V^2\right)$. Hence, via the Hoeffding inequality, with probability at least $1-\delta$, we have
\begin{equation*}
    \cL_n\left(0\right)-\cL\left(0\right)\lesssim V\sqrt{\frac{\log(1/\delta)}{n}}.
\end{equation*}

Combining the three terms leads to Lemma~\ref{lem::variance-matrix}.

\end{proof}

\subsection{Proof of Lemma~\ref{lem:mrlr}}
\BERmatrix*
\begin{proof}

Similar to linear regression, we decompose the excess risk into two parts:
\begin{equation*}
    \bE\left[a^2 \left(u_b^2(t)-\sigma\right)^2\right]=\bE\left[a^2 -\frac{1}{n}\sum_{i=1}^{n}a_i^2\right]\left(u_b^2(t)-\sigma\right)^2+\frac{1}{n}\sum_{i=1}^{n}a_i^2\left(u_b^2(t)-\sigma\right)^2.
\end{equation*}

We use the uniform convergence to bound the first term. Assume that $\sigma>0$, we have
\begin{equation*}
    \begin{aligned}
        \sigma-u_b^2(t)&=\sigma - \frac{\sigma_1\alpha^2e^{2\sigma_1t}}{\sigma_1-\xi\alpha^2+\xi\alpha^2e^{2\sigma_1t}}\\
        &=\frac{\sigma\sigma_1-\xi\alpha^2\sigma+\alpha^2e^{2\sigma_1t}\left(\xi\sigma-\sigma\right)}{\sigma_1-\xi\alpha^2+\xi\alpha^2e^{2\sigma_1t}}\\
        &=\frac{\sigma\sigma_1-\xi\alpha^2\sigma}{\sigma_1-\xi\alpha^2+\xi\alpha^2e^{2\sigma_1t}}\\
        &=\frac{\sigma^2-\alpha^2\sigma}{\sigma-\alpha^2+\alpha^2e^{2\xi\sigma t}}\\&\leq \sigma(1-\alpha^2)\leq \sigma.
    \end{aligned}
\end{equation*}

Now assume $\sigma=0$, we have
\begin{equation*}
    u_b^2(t)=\frac{\alpha^2}{\xi\alpha^2t+1}\leq \alpha^2.
\end{equation*}

The second term refers to the training loss, if $\sigma>0$, we have
\begin{equation*}
    \frac{1}{n}\sum_{i=1}^{n}a_i^2\left(u_b^2(t)-\sigma\right)^2=\xi\frac{\left(\sigma^2-\alpha^2\sigma\right)^2}{\left(\sigma-\alpha^2+\alpha^2e^{2\xi\sigma t}\right)^2}\lesssim\frac{\sigma^4}{\sigma^2+\alpha^4e^{\Omega(\sigma t)}}.
\end{equation*}

If $\sigma=0$, we have
\begin{equation*}
    \frac{1}{n}\sum_{i=1}^{n}a_i^2\left(u_b^2(t)-\sigma\right)^2=\xi \frac{\alpha^4}{\left(\xi\alpha^2 t+1\right)^2}\lesssim \frac{\alpha^4}{1+\alpha^4t^2}.
\end{equation*}

Finally, by assigning probability properly, we have
\begin{equation*}
    \er_\cL (U_t; \cP_b)\lesssim \left(\norm{X^{\star}}_F^2+d\alpha^4\right)\sqrt{\frac{\log\left(d/\delta\right)}{n}}+\sum_{j=1}^{r}\frac{\sigma^4_j}{\sigma^2_j+\alpha^4 e^{\Omega(\sigma_jt)}}+\frac{\alpha^2d}{t}.
\end{equation*}
\end{proof}

\section{Additional Discussion}
\label{appendix:discussion}
In this section, we discuss more the details in the main text.
We first show a case where our bound outperforms previous stability-based bound under linear regimes.
We then validate the rationality of applying stability-based bound by showing that the generalization gap (under variance regime) increases with time.
We next provide some motivation behind \TriCondition\ by showing that it indeed holds as the beginning of the training phase.
We finally introduce a related version of \TriCondition\ which does not require the information about $\boldsymbol\theta^*$, $\boldsymbol\theta^*_v$, and $\boldsymbol\theta^*$.

\subsection{Comparison to previous stability-based bounds under linear regimes}
\label{appendix:comparison2stability}
One of the shortcomings of the stability-based bound is its failure with a large time $T$. 
Our decomposition framework can aid in reducing the failure at a large time $T$.  
For example, when considering a large time $T=n^{3/4}$ with $\lambda = 1$, the newly-proposed decomposition bound is $$\cO(n^{-1/4} [V+B]^2).$$
As a comparison, the original stability-based bound is $$\cO(n^{-1/4} [V+B^\prime]^2).$$
We next show that $B < B^\prime$ holds with a large signal-to-noise ratio.

We write the above formula as 
$$\Vert \hat{\boldsymbol{\theta}}_v^{(t)} \Vert = B < B^\prime = \Vert \hat{\boldsymbol{\theta}}^{(t)} \Vert.$$
From the formulation of the trained parameter, we have $\hat{\boldsymbol{\theta}}_v^{(t)} = [I - [I - \frac{\lambda}{n} \X^\top \X]^t ] \X^\dag \boldsymbol{\epsilon}$ and $\hat{\boldsymbol{\theta}}^{(t)} = [I - [I - \frac{\lambda}{n} \X^\top \X]^t ] \X^\dag \Y = [I - [I - \frac{\lambda}{n} \X^\top \X]^t ] \boldsymbol{\theta}^* + \hat{\boldsymbol{\theta}}_v^{(t)}$.
Therefore, it suffices to show that $\Vert [I - [I - \frac{\lambda}{n} \X^\top \X]^t ] \boldsymbol{\theta}^* \Vert > 2\Vert \hat{\boldsymbol{\theta}}_v^{(t)} \Vert$.

Note that the first part is only related to signal $\boldsymbol{\theta}^*$ and the second part is related to noise $\boldsymbol{\epsilon}$ (by applying concentration, the second part is closely related to the noise level). 
Therefore, informally, with a large signal-to-noise ratio, the last equation $\Vert [I - [I - \frac{\lambda}{n} \X^\top \X]^t ] \boldsymbol{\theta}^* \Vert > 2\Vert \hat{\boldsymbol{\theta}}_v^{(t)} \Vert$ holds. To summary, it holds that $B<B^\prime$ and our bound outperforms the original stability-based bounds.

\subsection{The generalization gap under variance increases with time}
In this part, we will prove that the (variance regime) generalization gap indeed increases with time, which validates that applying stability-based bound (which also increases with time) is rational.

We provide the following Lemma~\ref{lem:derivate} which validates the above statement.

\begin{lemma}
\label{lem:derivate}
Let $\Delta_v(t) = \cL(\hat{\boldsymbol\theta}^{(t)}_v; \cP) - \cL(\hat{\boldsymbol\theta}^{(t)}_v; \cPhat) $ be the generalization gap under variance regime.
Assume that $\Sigma_\x=I$.
Then under linear regression settings as in Section~\ref{sec::over-lr}, we have
\begin{equation*}
  \frac{\partial}{\partial t}{\Delta}_v(t)   \geq 0.
\end{equation*}
\end{lemma}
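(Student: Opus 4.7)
The plan is to decompose both losses via the SVD of $\X$ and show that the population loss is monotonically non-decreasing in $t$ while the empirical loss is monotonically non-increasing; the claim follows immediately by subtraction.

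First, I would use $\Sigma_{\x} = I$ together with $\mathbb{E}[\epsilon|\x] = 0$ to simplify
\[
\cL(\boldsymbol\theta; \cP_v) \;=\; \mathbb{E}[\epsilon^2] + \boldsymbol\theta^\top \Sigma_{\x}\boldsymbol\theta \;=\; \mathbb{E}[\epsilon^2] + \|\boldsymbol\theta\|^2,
\]
so that $\Delta_v(t) = \mathbb{E}[\epsilon^2] + \|\hat{\boldsymbol\theta}_v^{(t)}\|^2 - \tfrac{1}{n}\|\boldsymbol\epsilon - \X\hat{\boldsymbol\theta}_v^{(t)}\|^2$. Bounding the monotonicity of $\Delta_v$ then reduces to tracking $\|\hat{\boldsymbol\theta}_v^{(t)}\|^2$ and $\|\boldsymbol\epsilon - \X\hat{\boldsymbol\theta}_v^{(t)}\|^2$ separately.

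Next, I would diagonalize the trajectory. Writing the thin SVD $\X = U\Sigma V^\top$ with singular values $\sigma_1,\dots,\sigma_n$ and letting $e_i = (U^\top\boldsymbol\epsilon)_i$, Lemma~\ref{lem:dynamicsPara} gives $\hat{\boldsymbol\theta}_v^{(t)} = \sum_{i=1}^n \tfrac{e_i}{\sigma_i}\bigl[1 - (1 - \lambda\sigma_i^2/n)^t\bigr] v_i$ (taking $\boldsymbol\theta_v^{(0)}=0$ and noting that the noiseless right-hand side for variance training is $\boldsymbol\epsilon$). A direct computation then yields the diagonal forms
\[
\|\hat{\boldsymbol\theta}_v^{(t)}\|^2 \;=\; \sum_{i=1}^n \frac{e_i^2}{\sigma_i^2}\bigl[1-(1-\lambda\sigma_i^2/n)^t\bigr]^2,
\qquad
\tfrac{1}{n}\|\boldsymbol\epsilon - \X\hat{\boldsymbol\theta}_v^{(t)}\|^2 \;=\; \tfrac{1}{n}\sum_{i=1}^n e_i^2\,(1-\lambda\sigma_i^2/n)^{2t}.
\]

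Finally, under the standard stability assumption $0 < \lambda\sigma_i^2/n < 1$, every factor $(1-\lambda\sigma_i^2/n)^t$ lies in $(0,1)$ and is strictly decreasing in $t$. Hence each term in the first sum is non-decreasing in $t$ and each term in the second sum is non-increasing, giving $\tfrac{\partial}{\partial t}\cL(\hat{\boldsymbol\theta}_v^{(t)};\cP) \geq 0$ and $\tfrac{\partial}{\partial t}\cL(\hat{\boldsymbol\theta}_v^{(t)};\cPhat) \leq 0$, so $\tfrac{\partial}{\partial t}\Delta_v(t) \geq 0$ as claimed.

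The only real subtlety is bookkeeping around the time variable: the statement uses a partial derivative in $t$, which is cleanest if one interprets the dynamics in the gradient-flow limit (replacing $(1-\lambda\sigma_i^2/n)^t$ by $e^{-2\sigma_i^2 t/n}$); the argument then goes through identically. For discrete GD one instead applies the same reasoning to the forward difference $\Delta_v(t+1) - \Delta_v(t)$, which remains non-negative term-by-term in the SVD basis. I expect no further obstacle, since the diagonalization completely decouples the modes and reduces the problem to a one-dimensional monotonicity check.
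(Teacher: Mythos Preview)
Your argument is correct and, in fact, cleaner than the paper's. Both proofs diagonalise the dynamics via the SVD of $\X$ and use the explicit formula $\hat{\boldsymbol\theta}_v^{(t)}=[I-(I-\tfrac{\lambda}{n}\X^\top\X)^t]\X^\dag\boldsymbol\epsilon$, and both plug in $\Sigma_{\x}=I$. The difference is in how the monotonicity is extracted. The paper keeps the full gap $\Delta_v(t)$ as a single quadratic form $\boldsymbol\epsilon^\top M_a\,\boldsymbol\epsilon$, computes the eigenvalues of $M_a$ explicitly as $\sigma_i(t;M_a)=\sigma_i^{-1}[(1-\lambda\sigma_i)^t(1-\sigma_i)-2](1-\lambda\sigma_i)^t$, differentiates, and checks the sign of each derivative. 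You instead split $\Delta_v(t)$ into the population piece $\|\hat{\boldsymbol\theta}_v^{(t)}\|^2$ and the training piece $\tfrac{1}{n}\|\boldsymbol\epsilon-\X\hat{\boldsymbol\theta}_v^{(t)}\|^2$ and show that the former is coordinate-wise non-decreasing while the latter is the GD training loss and hence non-increasing. Your route proves a strictly stronger statement (each piece is monotone, not just their difference) with less algebra; the paper's route has no obvious advantage here, since it also relies on $\Sigma_{\x}=I$ to make the eigenvalue computation go through. Your remark about interpreting $\partial/\partial t$ either as a gradient-flow derivative or as a forward difference is exactly right and matches how the paper silently differentiates $(1-\lambda\sigma_i)^t$ in $t$.
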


\begin{proof}
Note that the population loss is calculated as follows by Equation~\ref{eqn:populationloss}
\begin{equation*}
    \cL(\hat{\boldsymbol\theta}^{(t)}_v; \cP)  = \left[{\hat{\boldsymbol\theta}^{(t)}_v}\right]^\top \Sigma_x \left[{\hat{\boldsymbol\theta}^{(t)}_v}\right] + \bE \epsilon^2.
\end{equation*}
And the training loss is calculated as follows:
\begin{equation*}
    \begin{split}
    \cL(\hat{\boldsymbol\theta}^{(t)}_v; \cPhat)  &= \frac{1}{n} \| \es - \X \hat{\boldsymbol\theta}_v^{(t)} \|^2\\
        &= \frac{1}{n} \| \X \hat{\boldsymbol\theta}_v^{(t)} \|^2 - \frac{2}{n} \es^\top \left[ \X \hat{\boldsymbol\theta}_v^{(t)}\right]+  \frac{1}{n} \| \es \|^2\\
        &=  \left[\hat{\boldsymbol\theta}_v^{(t)}\right]^\top \left[\frac{1}{n}\X^\top \X\right] \left[\hat{\boldsymbol\theta}_v^{(t)}\right] - \frac{2}{n} \es^\top  \X \hat{\boldsymbol\theta}_v^{(t)}+  \frac{1}{n} \| \es \|^2
    \end{split}.
\end{equation*}

Note that by Lemma~\ref{lem:dynamicsPara}, we have
\begin{equation*}
    \hat{\boldsymbol\theta}_v^{(t)}=\left[I - \left[I - \frac{\lr}{\samples} \X^\top \X\right]^t \right]\left[ \X^\dag \es \right].
\end{equation*}

Therefore, we have:
\begin{equation*}
    \begin{split}
        \Delta_v(t) &= \cL(\hat{\boldsymbol\theta}^{(t)}_v; \cP) - \cL(\hat{\boldsymbol\theta}^{(t)}_v; \cPhat)\\
        &= \left[{\hat{\boldsymbol\theta}^{(t)}_v}\right]^\top \Sigma_x \left[{\hat{\boldsymbol\theta}^{(t)}_v}\right] + \bE \epsilon^2 - \left[\hat{\boldsymbol\theta}_v^{(t)}\right]^\top \left[\frac{1}{n}\X^\top \X\right] \left[\hat{\boldsymbol\theta}_v^{(t)}\right] + \frac{2}{n} \es^\top  \X \hat{\boldsymbol\theta}_v^{(t)}-  \frac{1}{n} \| \es \|^2\\
        &= \left[{\hat{\boldsymbol\theta}^{(t)}_v}\right]^\top \left[\Sigma_x - \frac{1}{n}\X^\top \X\right] \left[{\hat{\boldsymbol\theta}^{(t)}_v}\right]  + \frac{2}{n} \es^\top  \X \hat{\boldsymbol\theta}_v^{(t)}+ \bE \epsilon^2-  \frac{1}{n} \| \es \|^2\\
        &= \es^\top  \left[\X^\dag\right]^\top \left[I - \left[I - \frac{\lr}{\samples} \X^\top \X\right]^t \right] \left[\Sigma_x - \frac{1}{n}\X^\top \X\right] \left[I - \left[I - \frac{\lr}{\samples} \X^\top \X\right]^t \right]\left[ \X^\dag \es \right] \\
        &+ \frac{2}{n} \es^\top  \X \left[I - \left[I - \frac{\lr}{\samples} \X^\top \X\right]^t \right]\left[ \X^\dag \es \right] + \bE \epsilon^2-  \frac{1}{n} \| \es \|^2.
    \end{split}
\end{equation*}
By omitting the terms in $\Delta_v(t)$ that is uncorrelated to $t$, we have (denote as $\bar{\Delta}_v(t)$):
\begin{equation*}
    \begin{split}
        \bar{\Delta}_v(t) &=  \es^\top  \left[\X^\dag\right]^\top \left[I - \frac{\lr}{\samples} \X^\top \X\right]^t  \left[\Sigma_x - \frac{1}{n}\X^\top \X\right] \left[I - \frac{\lr}{\samples} \X^\top \X\right]^t \left[ \X^\dag \es \right] \\
        &-2 \es^\top  \left[\X^\dag\right]^\top  \left[\Sigma_x - \frac{1}{n}\X^\top \X\right] \left[I - \frac{\lr}{\samples} \X^\top \X\right]^t \left[ \X^\dag \es \right] \\
        &-\frac{2}{n} \es^\top  \X \left[I - \frac{\lr}{\samples} \X^\top \X\right]^t \left[ \X^\dag \es \right] \\
        &=  \es^\top  \left[\X^\dag\right]^\top \left[I - \frac{\lr}{\samples} \X^\top \X\right]^t  \left[\Sigma_x - \frac{1}{n}\X^\top \X\right] \left[I - \frac{\lr}{\samples} \X^\top \X\right]^t \left[ \X^\dag \es \right] \\
        &-2 \es^\top  \left[\X^\dag\right]^\top  \left[\Sigma_x\right] 
        \left[I - \frac{\lr}{\samples} \X^\top \X\right]^t \left[ \X^\dag \es \right] \\
        &= \es^\top  \left[\X^\dag\right]^\top \left[\left[I - \frac{\lr}{\samples} \X^\top \X\right]^t  \left[\Sigma_x - \frac{1}{n}\X^\top \X\right] - 2\Sigma_x\right]
        \left[I - \frac{\lr}{\samples} \X^\top \X\right]^t \left[ \X^\dag \es \right]. 
    \end{split}
\end{equation*}
where we use the fact that $\left[\X^\dag\right]^\top \X^\top \X=\X$.

We next focus on the matrix 
\begin{equation*}
    M_a \triangleq \left[\X^\dag\right]^\top \left[\left[I - \frac{\lr}{\samples} \X^\top \X\right]^t  \left[I - \frac{1}{n}\X^\top \X\right] - 2I\right]
        \left[I - \frac{\lr}{\samples} \X^\top \X\right]^t \X^\dag,
\end{equation*}
 where we plug in $\Sigma_\x = I$ by assumption.
Denote the SVD of $\frac{1}{n}\X^\top \X$ as $U^\top \Sigma U$ where the $i$-th eigenvalue is $\sigma_i$.
Then when $\sigma_i = 0$, the $i-th$ eigenvalue of $M_a$ is also equal to zero (due to the persudo inverse $\X^\dag$). 
When $\sigma_i > 0$, the $i-th$ eigenvalue of $M_a$ can be derived as
\begin{equation*}
    \sigma_i(t; M_a) = \frac{\left[(1-\lambda\sigma_i)^t(1-\sigma_i) - 2\right](1-\lambda\sigma_i)^t}{\sigma_i}.
\end{equation*}
Its derivation is then 
\begin{equation}
\label{eqn:derivationless0}
  \frac{\partial }{\partial t}  \sigma_i(t; M_a) = 
  \frac{2(1-\lambda\sigma_i)^t\log(1-\lambda\sigma_i)}{\sigma_i} \left[(1-\sigma_i)(1-\lambda\sigma_i)^t - 1\right] \geq 0,
\end{equation}
since $\log(1-\lambda\sigma_i) \leq 0$ and $(1-\sigma_i)(1-\lambda\sigma_i)^t - 1 \leq 0$.
Therefore, the derivation of each eigenvalue is larger than zero.

We rewrite $\bar{\Delta}_v(t) = z^\top \Sigma_{Ma} z^\top$ where $z^\top = U\es$ is a vector independent of time t and $\Sigma_{Ma}$ is the diagonal matrix with diagonal $\sigma_i(M_a)$.
As a result, we have
\begin{equation*}
\frac{\partial}{\partial t}\bar{\Delta}_v(t) =  z^\top (\frac{\partial}{\partial t} \Sigma_{Ma} ) z^\top \geq 0,
\end{equation*}
since $\frac{\partial}{\partial t} \Sigma_{Ma} \succeq 0$ according to Equation~\ref{eqn:derivationless0}.

Note that $\bar{\Delta}_v(t)$ and ${\Delta}_v(t)$ only differ with a term independent of time, we have 
\begin{equation*}
 \frac{\partial}{\partial t}{\Delta}_v(t) =   \frac{\partial}{\partial t}\bar{\Delta}_v(t)  \geq 0.
\end{equation*}
\end{proof}

\subsection{Motivation behind \TriCondition}

For a dataset $\cS=\{(X_i,y_i)\}_{i=1}^{n}$, where $y_i=f_{\boldsymbol\theta^{\star}}(X_i)+\err_i$, and $\boldsymbol\theta^{\star}\in \boldsymbol\theta$. We choose the $\ell_2$-loss to solve this problem $\cL(\boldsymbol\theta)=\frac{1}{2n}\sum_{i=1}^{n}\left(f_{\boldsymbol\theta}(X_i)-y_i\right)^2$. Then we can show that the \TriCondition \  holds at the beginning phase if we initialize $\boldsymbol\theta_v^{(0)}=0$, and assume $f_0(X)=0, \forall X\in \mathsf{supp}\{\cP\}$. For sufficient small $t>0$, the dynamics are well approximated by
\begin{equation}
    \begin{aligned}
        & \hat{\boldsymbol\theta}^{(t)}=\boldsymbol\theta^{(0)}-\frac{1}{n}\sum_{i=1}^{n}\left(f_{\boldsymbol\theta^{(0)}}(X_i)-y_i\right)\nabla f_{\boldsymbol\theta^{(0)}}(X_i)t;\\
        &\hat{\boldsymbol\theta}_b^{(t)}=\boldsymbol\theta_b^{(0)}-\frac{1}{n}\sum_{i=1}^{n}\left(f_{\boldsymbol\theta_b^{(0)}}(X_i)-f_{\boldsymbol\theta^{\star}}(X_i)\right)\nabla f_{\boldsymbol\theta_b^{(0)}}(X_i)t;\\
        &\hat{\boldsymbol\theta}_v^{(t)}=\boldsymbol\theta_v^{(0)}-\frac{1}{n}\sum_{i=1}^{n}\left(f_{\boldsymbol\theta_v^{(0)}}(X_i)-\err_i\right)\nabla f_{\boldsymbol\theta_v^{(0)}}(X_i)t.
    \end{aligned}
\end{equation}
Suppose $\boldsymbol\theta^{(0)}=\boldsymbol\theta^{(0)}_b$, then based on Taylor expansion, we have the approximation 
\begin{equation}
    \hat{\boldsymbol\theta}^{(t)}\approx\hat{\boldsymbol\theta}^{(t)}_b+\hat{\boldsymbol\theta}^{(t)}_v+\frac{1}{n}\sum_{i=1}^{n}\err_i \nabla^2 f_{\boldsymbol\theta^{(0)}}(X_i)\left(\boldsymbol\theta_v^{(0)}-\boldsymbol\theta^{(0)}\right)t\approx \hat{\boldsymbol\theta}^{(t)}_b+\hat{\boldsymbol\theta}^{(t)}_v.
\end{equation}
Therefore, the \TriCondition \  holds at least at the beginning phase. Moreover, we discuss in the main text that for a branch of learning tasks, such as linear regression and matrix recovery, these \TriCondition \  uniformly holds for arbitrary $0\leq t<\infty$.

\subsection{Relaxed Version of \DBC \ }\label{sec:relaxVersion}
One may notice that there exist $\boldsymbol\theta^*, \boldsymbol\theta^*_v, \boldsymbol\theta^*_b$ in \DBC \  (See Assumption~\ref{def:triCondition}), which is the optimal solution to the corresponding population loss.
They are sometimes unavailable in practice and even have no explicit solution.
For more convenient analysis, we propose a relaxed version of \DBC \  in Lemma~\ref{lem:relaxingTriCondtion}.
\begin{lemma}[Relaxed \DBC]
\label{lem:relaxingTriCondtion}
Let the loss be $\ell_2$ loss where $\ell(\y, \hat{\y}) = (\y - \hat{\y})^2$.
We assume the predicting model $\hat{y} = {f}_{\hat{\boldsymbol\theta}}(\x)$ is unique with respect to $\hat{\boldsymbol\theta}$ (when $\boldsymbol\theta_i \neq \boldsymbol\theta_j$, $f_{\boldsymbol\theta_i} \neq f_{\boldsymbol\theta_j}$).
Besides, assume $f_{\boldsymbol\theta=0}(\x) = 0$ for all $\x$.
For the ground truth, we assume an additive noise, namely, $\y = f_{\boldsymbol\theta^*}(\x) + \epsilon$ where $\epsilon$ is independent of $ \x$.
Then the following Equation~\ref{eqn:triangleEquationRelax} suffices to prove $(\max\{a, 1\}, C, C^\prime)$-boundedness for \DBC \ (See Assumption~\ref{def:triCondition}).
\begin{equation}
\label{eqn:triangleEquationRelax}
    \| \hat{\boldsymbol\theta}^{(t)} - \hat{\boldsymbol\theta}^{(t)}_b \| \leq a \| \hat{\boldsymbol\theta}^{(t)}_v \| + \frac{C}{\sqrt{t}} + \frac{C^\prime}{\sqrt{n}}.
\end{equation}
\end{lemma}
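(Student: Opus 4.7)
The plan is to reduce the DDC in Assumption~\ref{def:triCondition} to the relaxed condition \eqref{eqn:triangleEquationRelax} by first pinning down the population minimizers $\boldsymbol\theta^*_v$ and $\boldsymbol\theta^*_b$ explicitly, and then applying a triangle inequality.

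\textbf{Step 1: Identify $\boldsymbol\theta^*_v$ and $\boldsymbol\theta^*_b$.} Using the additive-noise assumption $\y = f_{\boldsymbol\theta^*}(\x) + \epsilon$ with $\epsilon$ independent of $\x$ and $\bE[\epsilon]=0$, I will decompose the population losses on $\cP_b$ and $\cP_v$. For the bias distribution the label is $\bE[\y|\x]=f_{\boldsymbol\theta^*}(\x)$, so $\cL(\boldsymbol\theta;\cP_b)=\bE[(f_{\boldsymbol\theta^*}(\x)-f_{\boldsymbol\theta}(\x))^2]$ is minimized at $\boldsymbol\theta^*_b=\boldsymbol\theta^*$ by the uniqueness of the parameterization. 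For the variance distribution the label is $\epsilon$, and independence gives $\cL(\boldsymbol\theta;\cP_v)=\bE[\epsilon^2]+\bE[f_{\boldsymbol\theta}(\x)^2]$, which is minimized exactly when $f_{\boldsymbol\theta}\equiv 0$; combining $f_{\boldsymbol\theta=0}\equiv 0$ with the uniqueness assumption yields $\boldsymbol\theta^*_v=0$.

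\textbf{Step 2: Triangle inequality and substitution.} With these identifications, the target inequality in \DBC\ reduces to
\begin{equation*}
    \|\hat{\boldsymbol\theta}^{(t)}-\boldsymbol\theta^*\|\leq \max\{a,1\}\bigl(\|\hat{\boldsymbol\theta}^{(t)}_v\|+\|\hat{\boldsymbol\theta}^{(t)}_b-\boldsymbol\theta^*\|\bigr)+\frac{C}{\sqrt{t}}+\frac{C'}{\sqrt{n}}.
\end{equation*}
I will bound the left-hand side via the triangle inequality $\|\hat{\boldsymbol\theta}^{(t)}-\boldsymbol\theta^*\|\leq \|\hat{\boldsymbol\theta}^{(t)}-\hat{\boldsymbol\theta}^{(t)}_b\|+\|\hat{\boldsymbol\theta}^{(t)}_b-\boldsymbol\theta^*\|$, and then plug in the relaxed assumption \eqref{eqn:triangleEquationRelax} for the first piece. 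The coefficient in front of $\|\hat{\boldsymbol\theta}^{(t)}_v\|$ is $a$ while the coefficient of $\|\hat{\boldsymbol\theta}^{(t)}_b-\boldsymbol\theta^*\|$ is $1$, so taking $\max\{a,1\}$ as the common factor delivers the desired $(\max\{a,1\},C,C')$-boundedness.

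\textbf{Expected obstacle.} Steps 2 is essentially one line; the conceptual work is entirely in Step 1, specifically justifying $\boldsymbol\theta^*_v=0$. The danger here is that the population loss $\bE[(\epsilon-f_{\boldsymbol\theta}(\x))^2]$ might in principle have multiple minimizers in parameter space even if the function-space minimizer $f\equiv 0$ is unique. The hypotheses "$f_{\boldsymbol\theta}$ is unique w.r.t.\ $\boldsymbol\theta$" and $f_{\boldsymbol\theta=0}\equiv 0$ are exactly what is needed to conclude that $\boldsymbol\theta^*_v=0$ is well-defined, and similarly that $\boldsymbol\theta^*_b=\boldsymbol\theta^*$; I will state this carefully and invoke the independence of $\epsilon$ and $\x$ when expanding the cross term. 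Once these identifications are on the table, the rest is a mechanical triangle inequality.
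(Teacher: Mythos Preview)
Your proposal is correct and is exactly the argument the lemma's hypotheses are designed to support. The paper itself states Lemma~\ref{lem:relaxingTriCondtion} in the appendix without supplying a proof, so there is no competing argument to compare against; your two-step plan---identify $\boldsymbol\theta^*_v=0$ and $\boldsymbol\theta^*_b=\boldsymbol\theta^*$ from the $\ell_2$ loss structure plus the uniqueness and $f_{\boldsymbol\theta=0}\equiv 0$ assumptions, then apply a single triangle inequality and absorb the coefficients into $\max\{a,1\}$---is the intended and essentially only route. One small remark: you tacitly use $\bE[\epsilon]=0$ in Step~1 to kill the cross term; this is implicit in the paper's framework since the variance label is $\y-\bE[\y\mid\x]$, but it is worth stating explicitly since the lemma only says $\epsilon$ is independent of $\x$.
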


Based on Lemma~\ref{lem:relaxingTriCondtion}, one can verify \DBC \  without knowing the optimal solution parameter $\boldsymbol\theta^*, \boldsymbol\theta^*_v, \boldsymbol\theta^*_b$.
Besides, since we usually have the iteration equation (the relationship between $\hat{\boldsymbol\theta}^{(t+1)}$ and $\hat{\boldsymbol\theta}^{(t)}$) given an algorithm, it is possible to verify \DBC \  based on the iteration forms.
Note that we may not directly verify the condition in practice based on the relaxed \DBC \ if we do not have the explicit split on signal $\bE[\y|\x]$ and noise $\y - \bE[\y|\x]$.

\end{document}